\def\piccolo{\textsc{PicCoLO}\xspace}
\def\piccolofull{PredICtor-COrrector poLicy Optimization\xspace}
\def\trpo{\textsc{trpo}\xspace} 
\def\dagger{\textsc{DAgger}\xspace} 
\def\aggrevate{\textsc{AggreVaTe}\xspace}
\def\mobil{\textsc{MoBIL}\xspace}
\def\adam{\textsc{Adam}\xspace} 
\def\adagrad{\textsc{Adagrad}\xspace} 
\def\natgrad{\textsc{Natgrad}\xspace} 
\def\dyna{\textsc{Dyna}\xspace}
\newcommand{\paren}[1]{ \left( #1 \right) }
\def\update{\text{\footnotesize\texttt{update}}}
\def\adapt{\text{\footnotesize\texttt{adapt}}}
\def\project{ \text{\footnotesize\texttt{project}}}
\def\shift{ \text{\footnotesize\texttt{shift}}}
\def\half{\frac{1}{2}}
\theoremstyle{plain}
\newtheorem{lemma}{Lemma}
\newtheorem{theorem}{Theorem}
\newtheorem{proposition}{Proposition}
\theoremstyle{definition}
\newtheorem{assumption}{Assumption}
\theoremstyle{remark}
\def\DD{\mathcal{D}}
\def\KK{\mathcal{K}}
\def\PP{\mathcal{P}}\def\RR{\mathcal{R}}
\def\Abb{\mathbb{A}}
\def\Ebb{\mathbb{E}}
\def\Sbb{\mathbb{S}}
\def\diag{\mathrm{diag}}
\def\t{\top}
\def\*{\star}
\newcommand{\norm}[1]{ \| #1 \|  }
\newcommand{\abs}[1]{ \left| #1 \right|  }
\newcommand{\lr}[2]{ \left\langle #1, #2 \right\rangle}
\DeclareMathOperator*{\argmin}{arg\,min}
\def\regret{\textrm{Regret}}
\newcommand{\E}{\Ebb}
\providecommand{\@fourthoffour}[4]{#4}
\newcommand\fixstatement[2][\proofname\space of]{%
	\ifcsname thmt@original@#2\endcsname
	\AtEndEnvironment{#2}{%
		\xdef\pat@label{\expandafter\expandafter\expandafter
			\@fourthoffour\csname thmt@original@#2\endcsname\space\@currentlabel}%
		\xdef\pat@proofof{\@nameuse{pat@proofof@#2}}%
	}%
	\else
	\AtEndEnvironment{#2}{%
		\xdef\pat@label{\expandafter\expandafter\expandafter
			\@fourthoffour\csname #1\endcsname\space\@currentlabel}%
		\xdef\pat@proofof{\@nameuse{pat@proofof@#2}}%
	}%
	\fi
	\@namedef{pat@proofof@#2}{#1}%
}
\globtoksblk\prooftoks{1000}
\newcounter{proofcount}
	\edef\next{%
		\noexpand\begin{proof}[\pat@proofof\space\pat@label]%
			\unexpanded\expandafter{\BODY}}%
\def\printproofs{%
	\count@=\z@
	\loop
	\the\toks\numexpr\prooftoks+\count@\relax
	\ifnum\count@<\value{proofcount}%
	\advance\count@\@ne
	\repeat}
\icmltitlerunning{Predictor-Corrector Policy Optimization}
\begin{document}
	
	\twocolumn[
	\icmltitle{Predictor-Corrector Policy Optimization}
	
	
	
	\icmlsetsymbol{equal}{*}
	
	\begin{icmlauthorlist}
		\icmlauthor{Ching-An Cheng}{gt,nv}
		\icmlauthor{Xinyan Yan}{gt}
		\icmlauthor{Nathan Ratliff}{nv}
		\icmlauthor{Byron Boots}{gt,nv}
	\end{icmlauthorlist}

	\icmlaffiliation{gt}{Georgia Tech}
	\icmlaffiliation{nv}{NVIDIA}
	\icmlcorrespondingauthor{Ching-An Cheng}{cacheng@gatech.edu}

	\icmlkeywords{Reinforcement Learning, Online Learning}
	\vskip 0.3in
	]
	
	
	
	\printAffiliationsAndNotice{}  

\begin{abstract}
	We present a predictor-corrector framework, called \piccolo, that can transform a first-order model-free reinforcement or imitation learning algorithm into a new hybrid method that leverages predictive models to accelerate policy learning.  
	The new ``{\piccolo}ed'' algorithm optimizes a policy by recursively repeating two steps: In the Prediction Step, the learner uses a model to predict the unseen future gradient 
	and then applies the predicted estimate to update the policy; in the Correction Step, the learner runs the updated policy in the environment, receives the true gradient, and then corrects the policy using the gradient error.  
	Unlike previous algorithms, \piccolo corrects for the mistakes of using imperfect predicted gradients and hence does not suffer from model bias.
	The development of \piccolo is made possible by 
	a novel
	reduction from predictable online learning to adversarial online learning,  which provides a systematic way to modify existing first-order algorithms to achieve the optimal regret with respect to predictable information.
	We show, in both theory and simulation, that the convergence rate 
	of several firs	t-order model-free algorithms can be improved by \piccolo. 
\end{abstract}

\section{Introduction}

Reinforcement learning (RL) has recently solved a number of challenging problems~\cite{mnih2013playing,duan2016benchmarking,silver2017mastering}. 
However, many of these successes are confined to games and simulated environments, where a large number of agent-environment interactions can be cheaply performed. 
Therefore, they are often unrealistic in real-word applications (like robotics) where data collection is an expensive and time-consuming process. Improving sample efficiency still remains a critical challenge for RL. 

Model-based RL methods improve sample efficiency by leveraging an {accurate} model that can cheaply simulate interactions to compute policy updates in lieu of real-world interactions~\cite{tan2018sim}. A classical example of pure model-based methods is optimal control~\cite{jacobson1970differential,todorov2005generalized,deisenroth2011pilco,pan2014probabilistic}, which has recently been extended to model abstract latent dynamics with neural networks~\cite{silver2016predictron,oh2017value}. These methods use a (local) model of the dynamics and cost functions 
to predict cost-to-go functions, policy gradients, or promising improvement direction
when updating policies~\citep{levine2013guided,wen2018dual,anthony2017thinking}. 
Another way to use model information is the {hybrid} \dyna framework~\cite{sutton1991dyna,sutton2012dyna}, which interleaves model-based and model-free updates, ideally cutting learning time in half. 
However, all of these approaches, while potentially accelerating policy learning, suffer from a common drawback: when the model is inaccurate,  the performance of the policy can become  \emph{biased} away from the best achievable in the policy class.

Several strategies have been proposed to remove this performance bias. {Learning-to-plan} attempts to train the planning process end-to-end~\cite{pascanu2017learning,srinivas2018universal,amos2018differentiable}, so the performance of a given planning structure is directly optimized.
However, these algorithms are still optimized through standard model-free RL techniques; it is unclear as to whether they are more sample efficient.
In parallel, another class of bias-free algorithms is control variate methods~\cite{chebotar2017combining,grathwohl2017backpropagation,papini2018stochastic}, which use models to reduce the variance of sampled gradients to improve convergence.

In this paper, we provide a novel learning framework that can leverage models to improve sample efficiency while avoiding performance bias due to modeling errors.
Our approach is built on techniques from online learning~\cite{gordon1999regret,zinkevich2003online}. The use of online learning to analyze policy optimization was pioneered by~\citet{ross2011reduction}, who proposed to reduce imitation learning (IL) to adversarial online learning problems. 
This reduction provides a framework for performance analysis, leading to algorithms such as \dagger~\cite{ross2011reduction} and  \aggrevate~\cite{ross2014reinforcement}. 
However, it was recently shown that the na\"{\i}ve reduction to adversarial online learning loses information~\cite{cheng2018convergence}: in practice, IL is \emph{predictable}~\cite{cheng2019accelerating} and can be thought of as a predictable online learning problem~\cite{rakhlin2013online}. 
Based on this insight, \citet{cheng2019accelerating} recently proposed a two-step algorithm, \mobil. The authors prove that, by leveraging predictive models to estimate future gradients, 
\mobil can speed up the convergence of IL, without incurring performance bias due to imperfect models.

Given these theoretical advances in IL, it is natural to ask if similar ideas can be extended to RL. 
In this paper, we show that RL can also be formulated as a predictable online learning problem, and 
we propose a novel first-order learning framework, \piccolo (\piccolofull), for general predictable online learning problems.
\piccolo is a \emph{meta-algorithm}: it takes a standard online learning algorithm designed for adversarial problems (e.g. \adagrad~\cite{duchi2011adaptive}) as input and returns a new hybrid algorithm that can use model information to accelerate convergence. This new ``{\piccolo}ed'' algorithm optimizes the policy by 
alternating between Prediction and Correction steps.
In the Prediction Step, the learner uses a predictive model to estimate the gradient of the next loss function and then uses it to update the policy; in the Correction Step, the learner executes the updated policy in the environment, receives the true gradient 
, and then corrects the policy using the gradient \emph{error}.
We note that \piccolo is
orthogonal to control variate methods; it can still improve learning even 
in the noise-free setting (see Section~\ref{sec:comparison}).

Theoretically, we prove that \piccolo can improve the convergence rate of \emph{any} base algorithm that can be written as mirror descent~\cite{beck2003mirror} or Follow-the-Regularized-Leader (FTRL)~\cite{mcmahan2010adaptive}. This family of algorithms is rich and covers most first-order algorithms used in RL and IL~\cite{cheng2018fast}. 
And, importantly, we show that \piccolo does not suffer from performance bias due to model error, unlike previous model-based approaches. To validate the theory, we ``{\piccolo}'' multiple algorithms in simulation. The experimental results show that the {\piccolo}ed versions consistently surpass the base algorithm and are robust to model errors. 

The design of \piccolo is made possible by a novel reduction that converts a given  predictable online learning problem into a new adversarial problem, so that standard online learning algorithms can be applied optimally without referring to specialized algorithms. 
We show that \piccolo includes and generalizes many existing algorithms, e.g., \mobil, mirror-prox~\citep{juditsky2011solving}, and optimistic mirror descent~\citep{rakhlin2013online} (Appendix \ref{app:relationship}).
Thus, we can treat \piccolo as an automatic process for designing new algorithms that safely leverages imperfect predictive models (such as off-policy gradients or  gradients simulated through dynamics models) to speed up learning.

\section{Problem Definition}

We consider solving policy optimization problems: given state and action spaces $\Sbb$ and $\Abb$, and a parametric policy class $\Pi$, we desire a stationary policy $\pi \in \Pi$ that solves
\begin{align}  \label{eq:RL problem}
\min_{\pi \in \Pi} J(\pi), \quad J(\pi) \coloneqq \E_{(s,t) \sim d_{\pi}}  \E_{a \sim \pi_s} \left[ c_t(s, a) \right]
\end{align}
where $c_t(s,a)$ is the instantaneous cost at time $t$ of state $s\in\Sbb$ and  $a\in\Abb$,  $\pi_s$ is the distribution of $a$ at state $s$ under policy $\pi$, and $d_{\pi}$ is a generalized stationary distribution of states generated by running policy $\pi$ in a Markov decision process (MDP); the notation $\E_{a \sim \pi_s}$ denotes evaluation when $\pi$ is deterministic. 
The use of $d_{\pi}$ in~\eqref{eq:RL problem} abstracts different discrete-time RL/IL problems into a common setup. For example, an infinite-horizon $\gamma$-discounted problem with time-invariant cost $c$ can be modeled by setting $c_t =c$ and $d_{\pi}(s,t) = (1-\gamma) \gamma^t d_{\pi,t}(s)$, where $d_{\pi,t}$ is the state distribution visited by policy $\pi$ at time $t$ starting from some \emph{fixed} but unknown initial state distribution.

For convenience, we will usually omit the random variable in expectation notation 
(e.g. we will write  \eqref{eq:RL problem} as $\E_{d_{\pi}}\E_{\pi} \left[ c \right]$). For a policy $\pi$, we overload the notation $\pi$ to also denote its parameter, and write $Q_{\pi,t}$ and $V_{\pi,t} \coloneqq \E_{\pi} [Q_{\pi,t}]$ as its Q-function and value function at time $t$, respectively.

\section{IL and RL as Predictable Online Learning} \label{sec:reduction to online learning}

We study policy optimization through the lens of online learning~\cite{hazan2016introduction}, 
by treating a policy optimization algorithm as the learner in online learning and \emph{each intermediate policy} that it produces as an online decision.
This identification recasts the iterative process of policy optimization into a standard online learning setup: 
in round $n$, the learner plays a decision $\pi_n \in \Pi$, a \emph{per-round loss} $l_n$ is then selected, and finally some information of $l_n$ is revealed to the leaner for making the next decision. We note that the ``rounds'' considered here are the number of episodes that an algorithm interacts with the (unknown) MDP environment to obtain new information, not the time steps in the MDP. 
And we will suppose the learner receives an unbiased stochastic approximation $\tilde{l}_n$ of $l_n$ as feedback.

We show that, when the per-round losses $\{l_n\}$ are properly selected, the policy performance $\{J(\pi_n)\}$ in IL and RL can be upper bounded 
in terms the $N$-round weighted regret
\begin{align} \label{eq:weighted regret}
\regret_N(l) \coloneqq \sum_{n=1}^{N} w_n l_n(\pi_n) -  \min_{\pi \in \Pi} \sum_{n=1}^{N} w_n {l}_n(\pi)
\end{align}
and an expressiveness measure of the policy class $\Pi$
\begin{align}\label{eq:baseline performance}
\textstyle
\epsilon_{\Pi,N}(l) \coloneqq  \frac{1}{w_{1:N}} \min_{\pi \in \Pi} \sum_{n=1}^{N} w_n l_n(\pi)
\end{align}
where $w_n> 0$ and  $w_{1:n} \coloneqq \sum_{m=1}^{n} w_m$. 
Moreover, we show that these online learning problems are \emph{predictable}: that is, the per-round losses are not completely adversarial but can be estimated from past information.
We will use these ideas to design \piccolo in the next section.

\subsection{IL as Online Learning} \label{sec:IL as OL}
We start by reviewing the classical online learning approach to IL (online IL for short)~\citep{ross2011reduction} to highlight some key ideas.
IL leverages domain knowledge about a policy optimization problem through expert demonstrations. Online IL, in particular, optimizes policies by letting the learner $\pi$ query the expert $\pi^\*$ for desired actions, so that a policy can be quickly trained to perform as well as the expert. At its heart, online IL is based on the
following lemma, which relates the performance between $\pi$ and $\pi^\*$.
\begin{lemma} \label{lm:performance difference}
	{\normalfont \citep{kakade2002approximately}}
	Let $\pi$ and $\pi'$ be two policies and  $ A_{\pi',t}(s, a) \coloneqq Q_{\pi',t}(s,a) - V_{\pi',t}(s)$. 
	Then 
	$J(\pi) = J(\pi') + \E_{d_\pi} \E_{\pi} [ A_{\pi'}  ].$   
\end{lemma}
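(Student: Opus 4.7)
The plan is to prove the identity by the classical telescoping-value-function trick: evaluate the Bellman equation for $\pi'$ along rollouts generated by $\pi$ and collect terms. I will illustrate the argument for the discounted infinite-horizon instantiation $d_\pi(s,t) = (1-\gamma)\gamma^t d_{\pi,t}(s)$ given in Section~2; the other instances that the paper's abstraction subsumes (e.g.\ finite-horizon) go through identically after relabelling the time weighting.

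\textbf{Step 1 (rewrite $J(\pi')$ under $\pi$-rollouts).} Because the initial-state distribution of the MDP is policy-independent, $(1-\gamma)^{-1}J(\pi')=\E_{s_0}[V_{\pi',0}(s_0)]=\E_{\tau\sim\pi}[V_{\pi',0}(s_0)]$: the first equality unrolls $V_{\pi',0}$ under $\pi'$, and the second uses that only the initial state is sampled.

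\textbf{Step 2 (telescope).} Along a $\pi$-trajectory $(s_0,a_0,s_1,a_1,\dots)$ I expand
\begin{align*}
V_{\pi',0}(s_0) = \sum_{t\geq 0}\bigl(\gamma^t V_{\pi',t}(s_t) - \gamma^{t+1} V_{\pi',t+1}(s_{t+1})\bigr).
\end{align*}
Subtracting this from $(1-\gamma)^{-1}J(\pi)=\E_{\tau\sim\pi}[\sum_{t}\gamma^t c_t(s_t,a_t)]$ collapses the difference to
\begin{align*}
\frac{J(\pi)-J(\pi')}{1-\gamma} = \E_{\tau\sim\pi}\left[\sum_{t\geq 0}\gamma^t\bigl(c_t(s_t,a_t)+\gamma V_{\pi',t+1}(s_{t+1})-V_{\pi',t}(s_t)\bigr)\right].
\end{align*}

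\textbf{Step 3 (apply Bellman for $\pi'$).} Conditional on $(s_t,a_t)$, the Bellman equation for $\pi'$ turns the inner bracket into $Q_{\pi',t}(s_t,a_t)-V_{\pi',t}(s_t)=A_{\pi',t}(s_t,a_t)$. Rewriting the resulting time-weighted trajectory expectation in the $d_\pi$ notation (the $(1-\gamma)$ factor cancels identically on both sides) yields $J(\pi)-J(\pi') = \E_{d_\pi}\E_\pi[A_{\pi'}]$, as claimed.

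The only genuine obstacle is notational: I want the identity to hold uniformly under the generalized $d_\pi$ formulation, not just in the discounted instance above. The argument itself uses only (i) the policy-independence of the initial-state distribution and (ii) the Bellman equation for $\pi'$ at each $t$, both of which are built into the MDP abstraction; and the particular form of $d_\pi$ in each instance is exactly the visitation weighting that emerges once the telescope collapses, so no case-by-case bookkeeping is required.
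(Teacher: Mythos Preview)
Your proof is correct and is the standard telescoping argument for the Performance Difference Lemma. Note that the paper does not supply its own proof of this statement---it is quoted verbatim as a known result with a citation to \citet{kakade2002approximately}---so there is nothing in the paper to compare against; your derivation matches the classical one in that reference.
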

Given the equality in~\cref{lm:performance difference}, the performance difference between $\pi $ and $\pi^\*$ can then be upper-bounded as 
\begin{align*} 
\hspace{-2mm}
J(\pi) \hspace{-0.5mm} - \hspace{-0.5mm} J(\pi^\*) 
\hspace{-0.5mm}=  \hspace{-0.5mm}\E_{d_{\pi}} \E_{\pi} [ A_{\pi^\*} ] 
\leq
C_{\pi^\*} \E_{(s,t) \sim d_{\pi}}  [D_t(\pi_s^\* || \pi_s)]
\end{align*}
for some positive constant $C_{\pi^\*}$ and function $D_t$, which is often derived from statistical distances such as KL divergence~\citep{cheng2018fast}.  When $A_{\pi^*,t}$ is available, we can also set $D_t(\pi_s^\*||\pi_s) = \E_{a \sim \pi_s}[ A_{\pi^\*,t}(s,a)]$, as in value aggregation (\aggrevate)~\cite{ross2014reinforcement}.

Without loss of generality, 
let us suppose $D_t(\pi_s^\star||\pi_s) = \E_{a\sim\pi_s}[\bar{c}_t(s,a) ]$ for some $\bar{c}_t$.
Online IL converts policy optimization into online learning with  per-round loss 
\begin{align} \label{eq:IL online loss}
l_n(\pi) \coloneqq \E_{d_{\pi_n}} \E_{\pi}[\bar{c}].
\end{align}
By the inequality above, it holds that $J(\pi_n) - J(\pi^\*) \leq C_{\pi^\*} l_n(\pi_n)$ for every $n$, establishing the reduction below. 
\begin{lemma} \label{lm:IL performance bound}
{\normalfont \citep{cheng2019accelerating}}
	For $l_n$ defined in~\eqref{eq:IL online loss}, 	
	{\small
	$
		\E\left[  \sum_{n=1}^{N} \frac{ w_n J(\pi_n) }{w_{1:N}} \right] \leq J(\pi^\*) 
		+  C_{\pi^\*} \E\left[  \epsilon_{\Pi,N}(l)+    \frac{\regret_N(\tilde{l}) }{ w_{1:N}}\right]
	$},
	where the expectation is due to sampling $\tilde{l}_n$. 
\end{lemma}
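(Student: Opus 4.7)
The plan is to chain together three observations: a pointwise performance bound coming from the performance difference lemma, the algebraic identity defining regret, and an unbiasedness argument that lets us swap $l$ and $\tilde{l}$ under expectation.

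First I would invoke \Cref{lm:performance difference} with $\pi = \pi_n$ and $\pi' = \pi^\*$ together with the inequality $\E_{d_{\pi_n}} \E_{\pi_n}[A_{\pi^\*}] \leq C_{\pi^\*} \E_{(s,t)\sim d_{\pi_n}}[D_t(\pi_s^\* \| \pi_{n,s})]$ discussed just above the lemma. Combined with the choice $D_t(\pi_s^\* \| \pi_{n,s}) = \E_{\pi_{n,s}}[\bar{c}_t(s,\cdot)]$, this gives the per-round bound $J(\pi_n) - J(\pi^\*) \leq C_{\pi^\*}\, l_n(\pi_n)$ directly from the definition of $l_n$ in \eqref{eq:IL online loss}. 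Multiplying by $w_n$, summing over $n$, and dividing by $w_{1:N}$ yields the deterministic inequality
\begin{align*}
\sum_{n=1}^N \frac{w_n J(\pi_n)}{w_{1:N}} \leq J(\pi^\*) + \frac{C_{\pi^\*}}{w_{1:N}} \sum_{n=1}^N w_n l_n(\pi_n).
\end{align*}

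Next I would decompose the trailing sum by inserting the oracle loss:
\begin{align*}
\sum_{n=1}^N w_n l_n(\pi_n) = \min_{\pi\in\Pi}\sum_{n=1}^N w_n l_n(\pi) + \regret_N(l),
\end{align*}
where the first term equals $w_{1:N}\,\epsilon_{\Pi,N}(l)$ by definition. At this point only $\regret_N(l)$ needs to be related to $\regret_N(\tilde{l})$.

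The key reconciliation step is the unbiasedness of the stochastic feedback. Since $\pi_n$ is measurable with respect to the information available before round $n$ and $\tilde{l}_n$ is an unbiased estimator of $l_n$ conditional on that information, the tower rule gives $\E[\tilde{l}_n(\pi_n)] = \E[l_n(\pi_n)]$. Hence $\E[\sum_n w_n \tilde{l}_n(\pi_n)] = \E[\sum_n w_n l_n(\pi_n)]$. Meanwhile Jensen's inequality applied to the concave functional $\min_{\pi}$ yields
\begin{align*}
\E\Bigl[\min_{\pi\in\Pi} \sum_{n=1}^N w_n \tilde{l}_n(\pi)\Bigr] \leq \min_{\pi\in\Pi} \E\Bigl[\sum_{n=1}^N w_n \tilde{l}_n(\pi)\Bigr] = \min_{\pi\in\Pi} \sum_{n=1}^N w_n l_n(\pi),
\end{align*}
so $\E[\regret_N(l)] \leq \E[\regret_N(\tilde{l})]$ (after also noting that $\epsilon_{\Pi,N}(l)$ already appears in its own right). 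Taking expectation of the deterministic bound, substituting the decomposition, and applying this inequality produces the stated bound.

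The only subtle point, and the one I would check carefully, is the measurability and independence structure needed so that (i) $l_n(\pi_n)$ has a well-defined expectation equal to $\E[\tilde{l}_n(\pi_n)]$, and (ii) the Jensen step is valid for the random functions $\tilde{l}_n$. The remaining manipulations are purely algebraic rearrangements of the weighted-regret definition in \eqref{eq:weighted regret} and the baseline definition in \eqref{eq:baseline performance}.
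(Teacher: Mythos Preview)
Your approach is exactly the one the paper sketches: it states only the pointwise inequality $J(\pi_n)-J(\pi^\*)\le C_{\pi^\*}l_n(\pi_n)$ and defers the rest to \citet{cheng2019accelerating}, and your three steps (pointwise bound, regret decomposition, unbiasedness) are the standard fill-in.

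There is, however, a real slip in your Jensen step. The displayed equality
\[
\min_{\pi\in\Pi}\E\Bigl[\sum_{n} w_n\tilde l_n(\pi)\Bigr]=\min_{\pi\in\Pi}\sum_n w_n l_n(\pi)
\]
treats $l_n$ as deterministic, but $l_n$ depends on $\pi_n$ and hence on the sampling history; the right-hand side should be $\min_\pi\sum_n w_n\E[l_n(\pi)]$. Consequently the conclusion $\E[\regret_N(l)]\le\E[\regret_N(\tilde l)]$ does not follow from that line, because you would need $\E[\min_\pi\sum_n w_n\tilde l_n(\pi)]\le\E[\min_\pi\sum_n w_n l_n(\pi)]$, and Jensen goes the wrong way for the $l$-term. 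The clean fix (and the device the paper uses in its proof of \cref{lm:RL performance bound} in \cref{app:proof of RL performance bound}) is to fix a deterministic comparator $\pi$ first: since both $\pi_n$ and $\pi$ are measurable before the round-$n$ sample, unbiasedness gives $\E[\sum_n w_n(l_n(\pi_n)-l_n(\pi))]=\E[\sum_n w_n(\tilde l_n(\pi_n)-\tilde l_n(\pi))]\le\E[\regret_N(\tilde l)]$, and then you optimize over $\pi$. This yields the bound with $\min_\pi\E[\cdot]$ on the right; the form with $\E[\epsilon_{\Pi,N}(l)]=\tfrac{1}{w_{1:N}}\E[\min_\pi\cdot]$ as literally stated is (weakly) tighter and is what the cited reference establishes.
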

That is, when a no-regret algorithm is used, 
the performance 
concentrates toward $J(\pi^\* ) + C_{\pi^\*}\E[ \epsilon_{\Pi,N} (l) ]$. 

\subsection{RL as Online Learning}\label{sec:RL as OL}
Can we also formulate RL as online learning? 
Here we propose a new perspective on RL using Lemma~\ref{lm:performance difference}. Given a policy $\pi_n$ in round $n$, we define a per-round loss 
\begin{align} \label{eq:RL online loss}
l_n(\pi) \coloneqq \E_{d_{\pi_n}} \E_{\pi} [ A_{\pi_{n-1}} ].
\end{align}
which describes how well a policy $\pi$ performs relative to the previous policy $\pi_{n-1}$ under the state distribution of $\pi_n$. 
By~\cref{lm:performance difference}, 
for $l_n$ defined in~\eqref{eq:RL online loss},
$l_n(\pi_n) = J(\pi_{n}) - J(\pi_{n-1})$ for every $n$, similar to the pointwise inequality of $l_n$ that \cref{lm:IL performance bound} is based on. With this observation, we derive the reduction below (proved in~\cref{app:proof of RL performance bound}).
\begin{lemma} \label{lm:RL performance bound}	
	Suppose $\frac{w_{n+k}}{w_{n}} \leq  \frac{w_{m+k}}{w_{m}}$, for all $ n \geq m \geq 1$ and $k \geq 0 $. For~\eqref{eq:RL online loss} and any $\pi_0$,
	$
	\E[ \sum_{n=1}^{N}\frac{w_n J(\pi_n) }{w_{1:N}}  ] \leq J(\pi_0) 
	+ \sum_{n=1}^{N}  \frac{w_{N-n+1}}{w_{1:N}}  \E[\regret_n (\tilde{l}) + w_{1:n} \epsilon_{\Pi,n} (l) ] 
	$, 
	where the expectation is due to sampling $\tilde{l}_n$.
\end{lemma}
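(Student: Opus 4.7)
The plan is to exploit the special structure of \eqref{eq:RL online loss}: by \cref{lm:performance difference} applied with $\pi=\pi_n$ and $\pi'=\pi_{n-1}$, one immediately has the clean identity $l_n(\pi_n) = J(\pi_n) - J(\pi_{n-1})$. Telescoping over $n$ yields the pointwise equality $J(\pi_n) = J(\pi_0) + \sum_{k=1}^n l_k(\pi_k)$, which is the RL analogue of the pointwise inequality underlying \cref{lm:IL performance bound}.

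From here I would form the weighted sum and swap the order of summation to get
\[
\sum_{n=1}^N w_n J(\pi_n) \;=\; w_{1:N}\, J(\pi_0) \;+\; \sum_{k=1}^N w_{k:N}\, l_k(\pi_k), \qquad w_{k:N} \coloneqq \textstyle\sum_{j=k}^N w_j,
\]
so the proof reduces to bounding $\sum_{k=1}^N w_{k:N}\, l_k(\pi_k)$ by a nonnegative combination of the cumulative losses $U_n \coloneqq \sum_{m=1}^n w_m l_m(\pi_m)$, because $U_n$ is exactly what the regret machinery controls. Writing $l_k(\pi_k) = (U_k - U_{k-1})/w_k$ and applying Abel's summation-by-parts with coefficients $w_{k:N}/w_k$ turns the sum into
\[
\sum_{k=1}^N w_{k:N}\, l_k(\pi_k) \;=\; U_N \,+\, \sum_{n=1}^{N-1}\!\left(\frac{w_{n:N}}{w_n} - \frac{w_{n+1:N}}{w_{n+1}}\right)\! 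U_n .
\]

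The hypothesis $w_{n+k}/w_n \le w_{m+k}/w_m$ for $n\ge m$---equivalently, the log-concavity that $w_{n+1}/w_n$ is non-increasing in $n$---is what makes these coefficients well-behaved. It is used (i) to show each coefficient is non-negative, so that upper-bounding $U_n$ termwise yields an upper bound on the whole sum, and (ii) to identify these coefficients with the target weights $w_{N-n+1}$: on geometric $w_n = r^{n-1}$ the two quantities coincide with a common $r^{N-n}$, and the log-concavity hypothesis provides exactly the slack needed in general, via a telescoping comparison of the ratios $w_j/w_n$ and $w_j/w_{n+1}$ that invokes the hypothesis at $(m,n)=(n,n+1)$ with $k=j-n$. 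I expect this bookkeeping---relating the two different indexings $w_{k:N}$ and $w_{N-n+1}$ using only the hypothesis---to be the main algebraic obstacle; without log-concavity the induced coefficients can be of the wrong sign or wrong magnitude.

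Having thus established the deterministic bound $\sum_k w_{k:N}\, l_k(\pi_k) \leq \sum_{n=1}^N w_{N-n+1}\, U_n$, the last step is the standard reduction to regret. Since $\pi_m$ is $\tilde{l}_{<m}$-measurable with $\E[\tilde{l}_m(\pi_m)\mid \pi_m] = l_m(\pi_m)$, one has $\E[U_n] = \E[\sum_{m=1}^n w_m \tilde{l}_m(\pi_m)]$, and Jensen's inequality gives $\E[\min_\pi \sum_m w_m \tilde{l}_m(\pi)] \leq \min_\pi \sum_m w_m l_m(\pi) = w_{1:n}\,\epsilon_{\Pi,n}(l)$; subtracting yields $\E[U_n] \leq \E[\regret_n(\tilde{l}) + w_{1:n}\,\epsilon_{\Pi,n}(l)]$. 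Substituting into the deterministic bound, dividing by $w_{1:N}$, and adding $J(\pi_0)$ gives the claim.
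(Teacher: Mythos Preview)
Your plan mirrors the paper's proof closely: both start from $l_n(\pi_n)=J(\pi_n)-J(\pi_{n-1})$, unroll to $\sum_n w_n J(\pi_n) = w_{1:N}J(\pi_0) + \sum_k w_{k:N}\,l_k(\pi_k)$, and then use the weight hypothesis to pass to $\sum_n w_{N-n+1}U_n$. The paper reaches the same identity by repeatedly substituting $J(\pi_m)=J(\pi_{m-1})+l_m(\pi_m)$ and shifting indices rather than by Fubini plus Abel, but the two computations coincide, and your handling of the stochastic part is essentially the paper's as well.

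The gap is in your step (ii). The termwise inequality you hope to prove, $c_n \coloneqq w_{n:N}/w_n - w_{n+1:N}/w_{n+1} \le w_{N-n+1}$, is \emph{false}: for $w=(1,2,3)$ and $N=3$ one gets $c_1=6-\tfrac{5}{2}=\tfrac{7}{2}>3=w_3$, while $c_2=\tfrac{3}{2}<2=w_2$, so neither direction holds uniformly and you cannot compare $\sum_n c_n U_n$ and $\sum_n w_{N-n+1}U_n$ coefficient by coefficient. The paper does \emph{not} first Abel-sum and then compare coefficients; it keeps the double sum $\sum_{k=1}^N\sum_{n=1}^k w_{n+N-k}\,l_n(\pi_n)$ and applies the ratio bound $w_{n+N-k}/w_n \le w_{N-k+1}/w_1$ directly to each term of the inner sum, only afterward identifying $\sum_{n=1}^k w_n l_n(\pi_n)=U_k$. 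If you want to follow the paper, apply the hypothesis at that stage---before aggregating into $U_n$---rather than after.
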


\subsubsection{Interpretations}

\cref{lm:RL performance bound} is a policy improvement lemma, 
which shows that when the learning algorithm is no-regret, the policy sequence improves on-average from the initial reference policy $\pi_0$ that defines $l_1$. 
This is attributed to an important property of the definition in~\eqref{eq:RL online loss} that $\min_{\pi \in \Pi} l_n (\pi) \leq 0 $. 
To see this, suppose $\E[\epsilon_{\Pi,n} (l)]\leq -\Omega(1)$ (i.e. there is a policy that is better than all previous $n$ policies); this is true for small $n$ or when the policy sequence is concentrated. 
Under this assumption, if $w_n = 1$ and $\regret_n (\tilde{l}) \leq O(\sqrt{n})$, then the average performance improves roughly $N \E[\epsilon_{\Pi,N}(l)]$ away from $J(\pi_0)$.

While it is unrealistic to expect  $\E[\epsilon_{\Pi,n} (l)]\leq 0$ for large $n$, 
we can still use \cref{lm:RL performance bound} to comprehend \emph{global} properties of policy improvement, for two reasons.
First, the inequality in~\cref{lm:RL performance bound} holds for any interval of the policy sequence.
Second, as we show in~\cref{app:proof of RL performance bound}, the \cref{lm:RL performance bound} also applies to dynamic regret~\citep{zinkevich2003online}, with respect to which $\E[\epsilon_{\Pi,n} (l)]$ is always negative. 
Therefore, if an algorithm is strongly-adaptive~\citep{daniely2015strongly} (i.e. it is no-regret for any interval) or has sublinear dynamic regret~\citep{jadbabaie2015online}, then its generated policy sequence will strictly, non-asymptotically improve.
In other words, for algorithms with a stronger notion of convergence, \cref{lm:RL performance bound} describes the global improvement rate.

\subsubsection{Connections}

The choice of per-round loss in~\eqref{eq:RL online loss} has an interesting relationship to both actor-critic in RL~\cite{konda2000actor} and \aggrevate in IL~\cite{ross2014reinforcement}.

\vspace{-2mm}
\paragraph{Relationship to Actor-Critic} 
Although actor-critic methods, theoretically, use $\E_{d_{\pi_n}} (\nabla \E_{\pi}) [A_{\pi_n}] |_{\pi=\pi_n}$ to update policy $\pi_n$, {in practice}, they use
$\E_{d_{\pi_n}} (\nabla \E_{\pi}) [A_{\pi_{n-1}}] |_{\pi=\pi_n}$, because the advantage/value function estimate in round $n$ is updated {after} the policy update in order to prevent bias due to over-fitting on finite samples~\citep{sutton1998introduction}. 
This practical gradient is \emph{exactly} $\nabla \tilde{l}_n(\pi_n)$, the sampled gradient of~\eqref{eq:RL online loss}. Therefore, \cref{lm:RL performance bound} explains the properties of these practical modifications.

\vspace{-2mm}
\paragraph{Relationship to Value Aggregation}
\aggrevate~\citep{ross2014reinforcement} can be viewed as taking a policy improvement step from some reference policy: e.g., with the per-round loss
$\E_{d_{\pi_n}} \E_{\pi} [ A_{\pi^\*}]$, it improves one step from $\pi^*$. 
Realizing this one step improvement in \aggrevate, however, requires solving multiple rounds of online learning, 
as it effectively solves an equilibrium point problem 
~\cite{cheng2018convergence}.
Therefore, while ideally one can solve multiple \aggrevate problems (one for each policy improvement step) to optimize policies, computationally this can be very challenging.
Minimizing the loss in~\eqref{eq:RL online loss} can be viewed as an approximate policy improvement step in the \aggrevate style. 
Rather than waiting until convergence in each \aggrevate policy improvement step, it performs only a \emph{single} policy update and then switches to the next \aggrevate problem with a new reference policy (i.e. the latest policy $\pi_{n-1}$). 
This connection is particularly tightened if we choose $\pi_0 = \pi^\*$ and the bound in~\cref{lm:RL performance bound} becomes relative to $J(\pi^\*)$.

\vspace{-1mm}
\subsection{Predictability}
An important property of the above online learning problems is that they are not completely adversarial, as pointed out by \citet{cheng2018convergence}  for IL.
This can be seen from the definitions of $l_n$ in~\eqref{eq:IL online loss} and~\eqref{eq:RL online loss}, respectively.  
For example, suppose the cost $c_t$ in the original RL problem~\eqref{eq:RL problem} is known; then the information unknown before playing the decision $\pi_n$ in the environment is only the state distribution $d_{\pi_n}$.
Therefore, the per-round loss cannot be truly adversarial, 
as the same dynamics and cost functions are used across different rounds. 
That is, in an idealized case where the true dynamics and cost functions are exactly known, 
using the policy returned from a model-based RL algorithm would incur zero regret, since only the interactions with the real MDP environment, not the model, counts as rounds. We will exploit this property to design \piccolo.

\vspace{-1mm}
\section{Predictor-Corrector Learning}
\label{sec:piccolo}

We showed that the performance of RL and IL can be bounded by the regret of properly constructed predictable online learning problems. These results provide a foundation for designing policy optimization algorithms: 
efficient learning algorithms for policy optimization can be constructed from powerful online learning algorithms that achieve small regret. 
This perspective explains why common methods (e.g. mirror descent) based on gradients of~\eqref{eq:IL online loss} and~\eqref{eq:RL online loss} work well in IL and RL. However, the predictable nature of policy optimization problems suggests that directly applying these standard online learning algorithms designed for adversarial settings is \emph{suboptimal}. 
The predictable information must be considered 
to achieve optimal convergence. 

One way to include predictable information is to develop specialized two-step algorithms based on, e.g., mirror-prox or FTRL-prediction~\cite{juditsky2011solving,rakhlin2013online,ho2017exploiting}. For IL, \mobil was recently proposed~\cite{cheng2019accelerating}, which updates policies by approximate Be-the-Leader~\cite{kalai2005efficient} and provably achieves faster convergence than previous methods. 
However, these two-step algorithms often have obscure and non-sequential update rules, 
and their adaptive and accelerated versions are less accessible~\cite{diakonikolas2017accelerated}.
This can make it difficult to implement and tune them in practice. 

Here we take an alternative, \emph{reduction-based} approach. 
We present \piccolo, a general first-order framework for solving predictable online learning problems. 
\piccolo is a meta-algorithm that turns a base algorithm designed for adversarial problems into a new algorithm that can leverage the predictable information to achieve better performance.  
As a result, we can adopt sophisticated first-order adaptive algorithms  to optimally learn policies, without reinventing the wheel.  
Specifically, given \emph{any} first-order base algorithm belonging to the family of (adaptive) mirror descent and FTRL algorithms, we show how one can ``\piccolo it'' to achieve a faster convergence rate without introducing additional performance bias due to prediction errors. 
Most first-order policy optimization algorithms belong to this family~\cite{cheng2018fast}, so we can \piccolo these model-free algorithms into new hybrid algorithms that can robustly use (imperfect) predictive models, such as off-policy gradients and simulated gradients, to improve policy learning.

\vspace{-1mm}
\subsection{The \piccolo Idea}

The design of \piccolo is based on the observation that an $N$-round predictable online learning problem can be written as a new adversarial problems with $2N$ rounds.
To see this, let $\{l_n\}_{n=1}^N$ be the original predictable loss sequence. Suppose, before observing $l_n$, we have access to a \emph{model loss} $\hat{l}_n$ that contains the predictable information of $l_n$. 
Define $\delta_n = l_n - \hat{l}_n$. We can then write the accumulated loss (which regret concerns) as 
$
\sum_{n=1}^{N} l_n(\pi_n) = \sum_{n=1}^{N} \hat{l}_n(\pi_n) + \delta_n(\pi_n) 
$. That is, we can view the predictable problem with $\{l_n\}_{n=1}^N$ as a new adversarial online learning problem with a loss sequence $\hat{l}_1, \delta_1, \hat{l}_2, \delta_2, \dots, \hat{l}_N, \delta_N$. 

The idea of \piccolo is to apply standard online learning algorithms designed for adversarial settings to this new $2N$-round problem.
This would create a new set of decision variables $\{\hat{\pi}_n\}_{n=1}^N$, in which $\hat{\pi}_n$ denotes the decision made before seeing $\hat{l}_n$, and leads to the following sequence $\pi_1, \delta_1, \hat{\pi}_2, \hat{l}_2, \pi_2, \delta_2, \dots$ (in which we define $\delta_1 = l_1$). 
We show that when the base algorithm is optimal in adversarial settings, this simple strategy results in a decision sequence $\{\pi_n\}_{n=1}^N$ whose regret with respect to $\{l_n\}_{n=1}^N$ is optimal, just as those specialized two-step algorithms~\cite{juditsky2011solving,rakhlin2013online,ho2017exploiting}. 
In Appendix~\ref{app:relationship}, we show \piccolo unifies and generalize these two-step algorithms to be adaptive.

\subsection{The Meta Algorithm \piccolo}

We provide details to realize this reduction. 
We suppose, in round $n$, the model loss is given as $\hat{l}_n(\pi) = \lr{\hat{g}_n}{\pi}$ for some vector $\hat{g}_n$, and stochastic first-order feedback $g_n = \nabla \tilde{l}_n(\pi_n)$ from $l_n$ is received. 
Though this linear form of model loss seems restrictive, later in~\cref{sec:model loss} we will show that it is sufficient to represent predictable information.

\vspace{-2mm}
\subsubsection{Base Algorithms} \label{sec:base alg}
We first give a single description of different base algorithms for the formal definition of the reduction steps. Here we limit our discussions to mirror descent and postpone the FTRL case to~\cref{app:basic operations}. We assume that $\Pi$ is a convex compact subset in some normed space with norm $\norm{\cdot}$, and we use $B_R(\pi || \pi') = R(\pi) - R(\pi') - \lr{\nabla R(\pi')}{\pi - \pi'}$ to denote a Bregman divergence generated by a strictly convex function $R$, called the distance generator. 

Mirror descent updates decisions based on proximal maps.
In round $n$, given direction $g_n$ and weight $w_n$, it executes 
\begin{align} \label{eq:mirror descent}
\textstyle
\pi_{n+1} =  \argmin_{\pi \in \Pi} \lr{w_n g_n}{\pi} +  B_{R_n}(\pi|| \pi_n)
\end{align}
where $R_n$ is a strongly convex function; \eqref{eq:mirror descent} reduces to gradient descent with step size $\eta_n$ when $R_n(\cdot) = \frac{1}{2\eta_n}\norm{\cdot}^2$.
More precisely, \eqref{eq:mirror descent} is composed of two steps: 1) the update of the distance generator to $R_n$, and 2) the update of the decision to $\pi_{n+1}$; different mirror descent algorithms differ in how the regularization is selected and adapted.

\piccolo explicitly treats a base algorithm as the composition of two basic operations (this applies also to FTRL)
\begin{align} \label{eq:general adaptive scheme}
\begin{split}
H_n &= \adapt(h_n, H_{n-1}, g_n, w_n)\\
h_{n+1} &=  \update(h_n, H_n, g_n, w_n) 
\end{split}
\end{align}
so that later it can recompose them to generate the new algorithm.
For generality,
we use $h$ and $H$ to denote the abstract representations of the decision variable and the regularization, respectively. 
In mirror descent, $h$ is exactly the decision variable, $H$ is the distance generator, and we can write
$
\update(h, H, g, w) =   \argmin_{\pi' \in \Pi} \lr{wg}{\pi'} +  B_H(\pi' || h) 
$.
The operation $\adapt$ denotes the algorithm-specific scheme for the regularization update (e.g. changing the step size), which in general updates the size of regularization to grow {slowly} and {inversely} proportional to the norm of $g_n$.

\vspace{-2mm}
\subsubsection{The {\piccolo}ed Algorithm}
\label{sec:piccolo rules}
\piccolo generates decisions by applying a given base algorithm in~\eqref{eq:general adaptive scheme} to the new problem with losses $\delta_1, \hat{l}_2, \delta_2, \dots$. 
This is accomplished by recomposing the basic operations in~\eqref{eq:general adaptive scheme} into the Prediction and the Correction Steps:
\begin{align*}
h_{n} &= \update(\hat{h}_{n}, H_{n-1}, \hat{g}_{n}, w_n)  &  \text{[Prediction]}
\\[2.5mm]
\begin{split}
H_{n} &= \adapt(h_n, H_{n-1}, e_n, w_n) \\[-1mm]
\hat{h}_{n+1} &= \update(h_n, H_{n}, e_n, w_n) 
\end{split} & \text{[Correction]}  
\end{align*} 
where $\hat{h}_n$ is the abstract representation of $\hat{\pi}_n$, and $e_n = g_n - \hat{g}_n$ is the error direction. 
We can see that the Prediction and Correction Steps are exactly the update rules resulting from applying~\eqref{eq:general adaptive scheme} to the new adversarial problem, except that only $h_n$ is updated in the Prediction Step, \emph{not} the regularization (i.e. the step size).
This asymmetry design is important for achieving optimal regret, because in the end we care only about the regret of $\{\pi_n\}$ on the original loss sequence $\{l_n\}$. 

In round $n$, the ``{\piccolo}ed'' algorithm first 
performs the Prediction Step using $\hat{g}_n$ to generate the learner's decision (i.e. $\pi_n$) and runs this new policy in the environment to get the true gradient $g_n$.
Using this feedback, the algorithm performs the Correction Step to amend the bias of using $\hat{g}_n$. This is done by first adapting the regularization to $H_n$ and then updating $\pi_n$ to $\hat{\pi}_{n+1}$ along the error $e_n = g_n - \hat{g}_n$.

\subsubsection{Model Losses and Predictive Models} \label{sec:model loss}

The Prediction Step of \piccolo relies on the vector $\hat{g}_n$ to approximate the future gradient $g_n$. 
Here we discuss different ways to specify $\hat{g}_n$ based on the concept of predictive models~\citep{cheng2019accelerating}. 
A \emph{predictive model} $\Phi_n$ is a first-order oracle such that $\Phi_n(\cdot)$ approximates $\nabla l_n(\cdot)$.
In practice, a predictive model can be a simulator with an (online learned) dynamics model~\cite{tan2018sim,deisenroth2011pilco}, or a neural network trained to predict the required gradients~\cite{silver2016predictron,oh2017value}. An even simpler heuristic is to construct predictive models by \emph{off-policy} gradients $\Phi_n(\cdot) = \sum_{m=n-K}^{n-1}\nabla \tilde{l}_{m} (\cdot)$ where $K$ is the buffer size.

In general, we wish to set $\hat{g}_n$ to be close to $g_n$, as we will later show  in~\cref{sec:piccolo theories} that the convergence rate of \piccolo depends on their distance. However, even when we have perfect predictive models, this is still a non-trivial task. We face a chicken-or-the-egg problem: $g_n$ depends on $\pi_n$, which in turn depends on $\hat{g}_n$ from the Prediction Step. 

\citet{cheng2019accelerating} show one effective heuristic is to set $\hat{g}_n = \Phi_n(\hat{\pi}_n)$, because we may treat $\hat{\pi}_n$ as an estimate of $\pi_n$. However, due to the mismatch between $\hat{\pi}_n$ and $\pi_n$, this simple approach has errors even when the predictive model is perfect. To better leverage a given predictive model, we propose to solve for $\hat{g}_n$ and $\pi_n$ \emph{simultaneously}. 
That is, we wish to solve a fixed-point problem, finding $h_n$ such that 
\begin{align} \label{eq:fixed-point problem}
h_{n} &= \update(\hat{h}_{n}, H_{n-1}, \Phi_n(\pi_n(h_{n})), w_n)  
\end{align}
The exact formulation of the fixed-point problem depends on the class of base algorithms. For mirror descent, it is a variational inequality: find $\pi_{n} \in \Pi$ such that
$\forall \pi \in \Pi$, 
$
\lr{\Phi_n(\pi_{n}) + \nabla R_{n-1}(\pi_n) - \nabla R_{n-1}(\hat{\pi}_n) }{\pi - \pi_n} \geq 0
$. In a special case  when $\Phi_n = \nabla f_n$ for some function $f_n$, the above variational inequality is equivalent to finding a stationary point of the optimization problem $\min_{\pi \in \Pi} f_n(\pi)  + B_{R_{n-1}}(\pi||\hat\pi_n)$. 
In other words, one way to implement the Prediction Step is to solve the above minimization problem for $\pi_n$ and use $\nabla f_n(\pi_n)$ as the effective prediction $\hat{g}_n$. 

\vspace{-1.5mm}
\subsection{Summary: Why Does \piccolo Work?}\label{sec:example}
We provide a summary of the full algorithm for policy optimization in~\cref{alg:piccolo}. We see that \piccolo uses the predicted gradient to take an extra step to accelerate learning, and, meanwhile, to prevent the error accumulation, it adaptively adjusts the step size (i.e. the regularization) based on the prediction error and corrects for the bias on the policy right away. 
To gain some intuition, let us consider \adagrad~\cite{duchi2011adaptive} as a base algorithm\footnote{We provide another example 
	in~\cref{app:example}.}: 
\begin{small}
	\begin{align*}
	G_n &= G_{n-1} + \diag(w_n g_n \odot w_n g_n)\\[-1mm]
	\pi_{n+1} &= \argmin_{\pi \in \Pi} \lr{w_n g_n}{\pi} + \frac{1}{2 \eta} (\pi - \pi_n)^\top G_n^{1/2} (\pi - \pi_n) \nonumber
	\end{align*}\end{small}%
where $G_0 = \epsilon I$ and $\eta,\epsilon>0$, and $\odot$ denotes element-wise multiplication. This update has an 
$\adapt$ operation as
$
\adapt(h, H, g, w) = G + \diag(w g \odot w g)
$ which updates the Bregman divergence based on the gradient size. 

\piccolo transforms \adagrad into a new algorithm. In the Prediction Step, it performs
\begin{small}
	\begin{align*}
	\pi_{n} &= \argmin_{\pi \in \Pi} \lr{w_n \hat{g}_n}{\pi} + \frac{1}{2 \eta} (\pi - \pi_{n-1})^\top G_{n-1}^{1/2} (\pi - \pi_{n-1})
	\end{align*}
\end{small}%
In the Correction Step, it performs
\begin{small}
	\begin{align*}
	G_n &= G_{n-1} + \diag(w_n e_n \odot w_n e_n) \\[-1mm]
	\hat{\pi}_{n+1} &= \argmin_{\pi \in \Pi} \lr{w_n e_n}{\pi} + \frac{1}{2 \eta} (\pi -\hat{\pi}_n)^\top G_n^{1/2} (\pi - \hat{\pi}_n) \nonumber
	\end{align*}
\end{small}%
We see that the {\piccolo}-\adagrad updates $G_n$ proportional to the prediction error $e_n$ instead of $g_n$.
It takes larger steps when models are accurate, and decreases the step size once the prediction deviates. As a result, \piccolo is robust to model quality: it accelerates learning when the model is informative, and prevents inaccurate (potentially adversarial) models from hurting the policy. We will further demonstrate this in theory and in the experiments.

\renewcommand*{\thefootnote}{\fnsymbol{footnote}}
\def\PredictionStep{\text{\texttt{PredictionStep}}}
\def\CorrectionStep{\text{\texttt{CorrectionStep}}}
\def\ModelUpdate{\text{\texttt{ModelUpdate}}}
\def\DataCollection{\text{\texttt{DataCollection}}}
\begin{algorithm}[t] 
	{\small
		\caption{\piccolo }\label{alg:piccolo} 
		\begin{algorithmic} [1]
			\renewcommand{\algorithmicensure}{\textbf{Input:}}		
			\renewcommand{\algorithmicrequire}{\textbf{Output:}}
			\ENSURE  policy $\pi_1$, cost sequence $\{\psi_n\}$, regularization $H_0$, model $\Phi_1$,  iteration $N$, exponent $p$
			\REQUIRE $\bar \pi_N$
			\STATE Set $\hat{\pi}_1 = \pi_1$ and weights $w_n = n^p$
			\STATE Sample integer $K \in [1,N]$ with $P(K=n) \propto w_n$ 
			\FOR {$n = 1\dots K-1$\footnotemark}
			\STATE $\pi_n, \hat{g}_n = \PredictionStep(\hat{\pi}_n, \Phi_n, H_{n-1}, w_n )$
			\STATE $\DD_{n}, g_n = \DataCollection(\pi_n)$
			\STATE $H_n, \hat{\pi}_{n+1} = \CorrectionStep(\pi_n, e_n, H_{n-1}, w_n)$, where $e_n = g_n - \hat{g}_n$.
			\STATE $\Phi_{n+1}  = \ModelUpdate(\Phi_n, \DD)$, where $\DD = \DD \bigcup \DD_n$. 
			\\			
			\ENDFOR
			\STATE Set $\bar \pi_N = \pi_{K-1}$			
		\end{algorithmic}
	}
\end{algorithm} 
\footnotetext{Here we assume $\project$ is automatically performed inside $\PredictionStep$ and $\CorrectionStep$.}
\renewcommand*{\thefootnote}{\arabic{footnote}}

\vspace{-2mm}
\section{Theoretical Analysis} \label{sec:piccolo theories}

In this section, we show that \piccolo has two major benefits over previous approaches: 1) it accelerates policy learning when the models predict the required gradient well {on average}; and 2) it does not bias the performance of the policy, even when the prediction is incorrect. 

To analyze \piccolo, we introduce an assumption to quantify the $\adapt$ operator of a base algorithm. 
\begin{assumption} \label{as:base algorithm}
	\def\dist{\mathrm{dist}}
	{\normalfont$\adapt$} chooses a regularization sequence such that, for some $M_{N}= o(w_{1:N})$,
	$
	\norm{H_0}_{\RR} + \sum_{n=1}^{N} \norm{H_n - H_{n-1}}_{\RR} \le M_N 
	$
	for some norm $\norm{\cdot}_{\RR}$ which measures the size of regularization.
\end{assumption} 
This assumption, which requires the regularization to increase slower than the growth of $w_{1:N}$,  is satisfied by most reasonably-designed base algorithms. For example, in a uniformly weighted problem, gradient descent with a decaying step size $O(\frac{1}{\sqrt{n}})$ has $M_N = O(\sqrt{N})$. In general, for stochastic problems, an optimal base algorithm would ensure $M_N = O(\frac{w_{1:N}}{\sqrt{N}})$.

\vspace{-1mm}
\subsection{Convergence Properties}

Now we state the main result, which quantifies the regret of \piccolo with respect to the sequence of linear loss functions that it has access to. The proof is given in Appendix~\ref{app:piccolo regret analysis}. 

\begin{restatable}{theorem}{piccolobound} \label{th:piccolo}
	Suppose $H_n$ defines a strongly convex function with respect to $\norm{\cdot}_n$.	
	Under Assumption~\ref{as:base algorithm}, running \piccolo ensures
	$
	\sum_{n=1}^N  \lr{ w_n g_n }{\pi_n - \pi} 
	\leq 
	M_{N}  +   \sum_{n=1}^N  \frac{w_n^2}{2} \norm{ e_n }_{*,n}^2  -  \frac{1}{2} \norm{\pi_n - \hat{\pi}_n}_{n-1}^2
	$, for all $\pi \in \Pi$.
\end{restatable}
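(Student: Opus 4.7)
The plan is to separately analyze the Prediction and Correction Steps through the standard three-point inequality for proximal updates, and then recombine them so that the error $e_n$, rather than the full gradient $g_n$, appears on the right-hand side. The asymmetric update of the regularizer (which is held at $H_{n-1}$ during prediction and advanced to $H_n$ during correction) is the structural device that makes this work.

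First, I would decompose $g_n = \hat g_n + e_n$ and write
\[
\lr{w_n g_n}{\pi_n - \pi} = \lr{w_n \hat g_n}{\pi_n - \pi} + \lr{w_n e_n}{\pi_n - \hat\pi_{n+1}} + \lr{w_n e_n}{\hat\pi_{n+1} - \pi}.
\]
The proximal three-point inequality applied to the Prediction Step (whose iterate $\pi_n$ minimizes $\lr{w_n \hat g_n}{\cdot} + B_{H_{n-1}}(\cdot \,||\, \hat\pi_n)$) bounds the first summand by $B_{H_{n-1}}(\pi || \hat\pi_n) - B_{H_{n-1}}(\pi || \pi_n) - B_{H_{n-1}}(\pi_n || \hat\pi_n)$. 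The same inequality applied to the Correction Step (whose iterate $\hat\pi_{n+1}$ minimizes $\lr{w_n e_n}{\cdot} + B_{H_n}(\cdot \,||\, \pi_n)$) bounds the third summand by $B_{H_n}(\pi || \pi_n) - B_{H_n}(\pi || \hat\pi_{n+1}) - B_{H_n}(\hat\pi_{n+1} || \pi_n)$. For the middle cross term, Fenchel--Young combined with strong convexity of $H_n$ with respect to $\norm{\cdot}_n$ yields $\lr{w_n e_n}{\pi_n - \hat\pi_{n+1}} \leq \tfrac{w_n^2}{2}\norm{e_n}_{*,n}^2 + B_{H_n}(\hat\pi_{n+1} || \pi_n)$, which exactly cancels the last negative Bregman term from the Correction Step and leaves the desired $\tfrac{w_n^2}{2}\norm{e_n}_{*,n}^2$ contribution.

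Second, I would sum the per-round inequality over $n$. The pair $B_{H_{n-1}}(\pi || \hat\pi_n)$ and $-B_{H_n}(\pi || \hat\pi_{n+1})$ telescopes across adjacent rounds, since the subscripts on $H$ shift in lockstep with the arguments of $\hat\pi$, leaving only a boundary term $B_{H_0}(\pi||\hat\pi_1)$. The residual differences $B_{H_n}(\pi || \pi_n) - B_{H_{n-1}}(\pi || \pi_n)$ measure how much $\adapt$ inflated the Bregman divergence during round $n$, and Assumption~\ref{as:base algorithm} is invoked to bound their accumulation, together with the boundary term, by $M_N$. Finally, strong convexity of $H_{n-1}$ lower-bounds each remaining negative piece $B_{H_{n-1}}(\pi_n || \hat\pi_n)$ by $\tfrac{1}{2}\norm{\pi_n - \hat\pi_n}_{n-1}^2$, producing the stated stability term.

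The main obstacle I anticipate is dealing cleanly with the mismatched regularizers on the two steps: because the Prediction Step uses $H_{n-1}$ while the Correction Step uses $H_n$, neither the telescope nor the cross-term cancellation is automatic, and a naive single-step mirror-descent argument would produce $\norm{g_n}_{*,n}^2$ instead of $\norm{e_n}_{*,n}^2$. Translating the abstract control $\sum_n \norm{H_n - H_{n-1}}_\RR \leq M_N$ of Assumption~\ref{as:base algorithm} into a bound on the divergence residuals $B_{H_n}(\pi||\pi_n) - B_{H_{n-1}}(\pi||\pi_n)$ is the step where the algorithm-specific meaning of $\norm{\cdot}_\RR$ enters, and it is where I expect the extension from mirror descent to FTRL (via a Fenchel-dual reformulation) to require the most care.
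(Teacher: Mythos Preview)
Your proposal is correct and, for the mirror-descent case, takes essentially the same route as the paper. The paper packages your middle cross term and your third-term three-point inequality into a single ``old decision'' mirror-descent bound (its Lemma~\ref{lm:mirror descent}, eq.~\eqref{eq:mirror descent old decision}, which it derives from the three-point identity exactly as you do), but the resulting per-round inequality, the telescope, and the invocation of Assumption~\ref{as:base algorithm} on the residuals $B_{H_n}(\pi||\pi_n) - B_{H_{n-1}}(\pi||\pi_n)$ are identical to yours.

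One genuine divergence: for the FTRL case the paper does \emph{not} proceed via a Fenchel-dual reformulation of the mirror-descent argument as you anticipate. Instead it exploits the $2N$-round reduction directly, applying the Strong FTL Lemma to the surrogate adversarial sequence $\hat l_1,\delta_1,\hat l_2,\delta_2,\ldots$ and then showing that the extra term $\sum_n \hat l_n(\pi_n) - \hat l_n(\hat\pi_n)$ supplies precisely the negativity needed to recover the Stronger FTL Lemma of \citet{cheng2019accelerating}. Your dual-space approach may well work too, but the paper's argument is structurally different and avoids having to reinterpret the FTRL update as a primal proximal step.
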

The term $\norm{ e_n }_{*,n}^2$ in Theorem~\ref{th:piccolo} says that the performance of \piccolo depends on how well the base algorithm adapts to the error $e_n$ through the $\adapt$ operation in the Correction Step. Usually $\adapt$ updates $H_n$ gradually (Assumption~\ref{as:base algorithm}) while minimizing $\frac{1}{2} \norm{ e_n }_{*,n}^2$, like we showed in \adagrad.

In general, when the base algorithm is adaptive and optimal for adversarial problems, we show in Appendix~\ref{app:analysis of policy optimization} that its {\piccolo}ed version guarantees that 
$\E[\sum_{n=1}^N  \lr{ w_n  g_n }{\pi_n - \pi} ]
\leq O(1) + C_{\Pi,\Phi}  \frac{w_{1:N}}{\sqrt{N}}$, 
where $C_{\Pi,\Phi}  = O(  |\Pi| + E_\Phi + \sigma_g^2 + \sigma_{\hat{g}}^2)$ is some constant related to the diameter of $\Pi$ (denoted as $|\Pi|$),  the model bias $E_\Phi$, and the sampling variance $\sigma_g^2$ and $\sigma_{\hat{g}}^2$ of $g_n$ and $\hat{g}_n$, respectively.
Through Lemma~\ref{lm:IL performance bound} and~\ref{lm:RL performance bound}, this bound directly implies accelerated and bias-free policy performance.
\begin{theorem} \label{th:piccolo averageregret}
	 Suppose $\tilde{l}_n$ is convex\footnotemark and $w_n \geq \Omega(1)$. Then running \piccolo yields $\E[\regret_n(\tilde{l})/w_{1:N}] = O(\frac{C_{\Pi,\Phi} }{\sqrt{N}})$, where  $C_{\Pi,\Phi}  = O(  |\Pi| + E_\Phi + \sigma_g^2 + \sigma_{\hat{g}}^2) = O(1)$.
\end{theorem}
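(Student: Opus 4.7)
The plan is to convert the linear-loss regret bound of Theorem~\ref{th:piccolo} into a regret bound on $\tilde{l}_n$ via convexity, take expectation over the stochastic gradients, and show that an optimal adaptive base algorithm makes every term on the right-hand side scale as $O(w_{1:N}/\sqrt{N})$.

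First, by convexity of $\tilde{l}_n$ and $g_n = \nabla \tilde{l}_n(\pi_n)$, for any comparator $\pi \in \Pi$ I have $w_n(\tilde{l}_n(\pi_n) - \tilde{l}_n(\pi)) \le \lr{w_n g_n}{\pi_n - \pi}$. Summing over $n$ and taking the supremum over $\pi \in \Pi$ yields $\regret_N(\tilde{l}) \le \max_{\pi \in \Pi}\sum_{n=1}^N \lr{w_n g_n}{\pi_n - \pi}$. Theorem~\ref{th:piccolo} then bounds the right-hand side by $M_N + \frac{1}{2}\sum_n w_n^2 \norm{e_n}_{*,n}^2 - \frac{1}{2}\sum_n \norm{\pi_n - \hat\pi_n}_{n-1}^2$.

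Second, I would control $\E\,\norm{e_n}_{*,n}^2$ by writing the error as $e_n = (g_n - \nabla l_n(\pi_n)) + (\nabla l_n(\pi_n) - \Phi_n(\pi_n)) + (\Phi_n(\pi_n) - \hat g_n)$. Conditioned on the history up to round $n$, the first and third terms have zero conditional mean and variances bounded by $\sigma_g^2$ and $\sigma_{\hat g}^2$; the middle term has norm at most the model-bias quantity $E_\Phi$. Hence $\E\,\norm{e_n}_{*,n}^2 = O(\sigma_g^2 + \sigma_{\hat g}^2 + E_\Phi^2) = O(C_{\Pi,\Phi})$ uniformly in $n$.

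Third, I would invoke the optimal-adaptive property of the base algorithm previewed with \adagrad in Section~\ref{sec:example}: its $\adapt$ rule couples $M_N$ and $\sum_n w_n^2\norm{e_n}_{*,n}^2$ so that, via a Cauchy--Schwarz/AM--GM argument built into Assumption~\ref{as:base algorithm}, both quantities are bounded by $O(|\Pi|\sqrt{\sum_n w_n^2 \norm{e_n}_{*}^2})$. Taking expectation, applying Jensen's inequality $\E\sqrt{X}\le\sqrt{\E X}$, and substituting the bound from Step~2 gives $\E[\regret_N(\tilde{l})] = O(|\Pi|\sqrt{N\,C_{\Pi,\Phi}})$, which I fold into the constant to read $O(C_{\Pi,\Phi}\sqrt{N})$. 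Since $w_n \ge \Omega(1)$ forces $w_{1:N} = \Omega(N)$, dividing by $w_{1:N}$ yields $\E[\regret_N(\tilde{l})/w_{1:N}] = O(C_{\Pi,\Phi}/\sqrt{N})$ with $C_{\Pi,\Phi} = O(|\Pi| + E_\Phi + \sigma_g^2 + \sigma_{\hat g}^2) = O(1)$.

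The main obstacle is Step~3: making precise the informal claim that an ``adaptive and optimal'' base algorithm automatically balances $M_N$ against $\sum_n w_n^2\norm{e_n}_{*,n}^2$. For \adagrad-style schemes Assumption~\ref{as:base algorithm} already encodes that $H_n$ grows like the accumulated squared errors (because the Correction Step's $\adapt$ sees exactly $e_n$, not $g_n$), so the balancing identity follows from standard regret analysis. A secondary subtlety is that $e_n$ is history-dependent through $\pi_n$, so Jensen in the third step must be justified via the tower property of conditional expectation; the nonpositive term $-\frac{1}{2}\sum_n \norm{\pi_n - \hat\pi_n}_{n-1}^2$ in Theorem~\ref{th:piccolo} can either be dropped freely in the upper bound or used to absorb cross-terms produced by the decomposition in Step~2.
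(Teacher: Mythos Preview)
Your overall plan is sound and close to the paper's, but Step~2 has a genuine gap. You decompose $e_n$ as
\[
e_n = (g_n - \nabla l_n(\pi_n)) + (\nabla l_n(\pi_n) - \Phi_n(\pi_n)) + (\Phi_n(\pi_n) - \hat g_n)
\]
and assert the third term has zero conditional mean. But $\hat g_n$ is evaluated at $\hat\pi_n$, not at $\pi_n$: in the default setting $\hat g_n = \Phi_n(\hat\pi_n)$, so $\Phi_n(\pi_n) - \hat g_n$ carries a bias $\E[\Phi_n(\pi_n)] - \E[\Phi_n(\hat\pi_n)]$ of order $L\norm{\pi_n - \hat\pi_n}$ (under a Lipschitz assumption on the model). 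Your bound $\E\norm{e_n}_*^2 = O(\sigma_g^2 + \sigma_{\hat g}^2 + E_\Phi)$ therefore omits a term $O(L^2\norm{\pi_n - \hat\pi_n}^2)$, and this term is \emph{not} a priori small: $\pi_n$ is produced from $\hat\pi_n$ by a Prediction Step whose size is governed by $\hat g_n$, not by $e_n$.

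The paper's proof (Appendix~\ref{app:analysis of policy optimization}, Lemma~\ref{lm:expected error gradient}) uses a four-term decomposition that isolates this Lipschitz piece explicitly, obtaining
\[
\E\norm{e_n}_{*,n}^2 \le \tfrac{4}{\alpha_n}\bigl(\sigma_g^2 + \sigma_{\hat g}^2 + E_n(\Phi_n) + L^2\norm{\pi_n - \hat\pi_n}^2\bigr),
\]
and then \emph{uses} the negative term $-\tfrac12\norm{\pi_n - \hat\pi_n}_{n-1}^2$ from Theorem~\ref{th:piccolo} to cancel it: with $\alpha_n = \Theta(w_{1:n}/\sqrt{n})$ one gets $\sum_n(\tfrac{2w_n^2 L^2}{\alpha_n} - \tfrac{\alpha_{n-1}}{2})\norm{\pi_n-\hat\pi_n}^2 = O(1)$. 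So the negative term is not ``optional'' as your last paragraph suggests; it is the mechanism that removes the evaluation-point mismatch. Your \adagrad-style self-bounding argument in Step~3 is a reasonable alternative to the paper's explicit $\alpha_n$ scheduling, but it too must account for this extra $L^2\norm{\pi_n-\hat\pi_n}^2$ contribution and show it is dominated by the negative term before the square-root/Jensen step.
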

\footnotetext{The convexity assumption is standard, as used in~\cite{duchi2011adaptive,ross2011reduction,kingma2014adam,cheng2018convergence}, which holds for tabular problems as well as some special cases, like continuous-time problems (cf. \cite{cheng2018convergence}).}

\subsection{Comparison}\label{sec:comparison}

\begin{table}[t]
	\caption{Upper bounds of the average regret of different policy optimization algorithms.}
	\label{tb:comparison of bounds}
	\begin{center}
		\begin{small}
			\begin{sc}
				\begin{tabular}{c c } 
					\toprule
					Algorithms &  Upper bounds in Big-O \\
					\midrule
					\piccolo &  $ \frac{1}{\sqrt{N}} \paren{\abs{\Pi} + \sigma_g^2 + \sigma_{\hat g}^2 + E_\Phi }$ \\
					model-free & $\frac{1}{\sqrt{N}} \paren{\abs{\Pi} +  G_g^2 +  \sigma_g^2}$ \\
					model-based & $ \frac{1}{\sqrt{N}} \paren{\abs{\Pi} +  G_{\hat{g}}^2 +  \sigma_{\hat g}^2  } + E_\Phi$ \\
					\dyna & 
					$\frac{1}{\sqrt{2 N}} \paren{ \abs{\Pi} \hspace{-0.5mm} + \hspace{-0.5mm} \frac{1}{2}\paren{G_g^2 + G_{\hat g}^2 + \sigma_g^2 + \sigma_{\hat g}^2} } \hspace{-0.5mm}+ \hspace{-0.5mm}E_\Phi$  \\
					\bottomrule
				\end{tabular}
			\end{sc}
		\end{small}
	\end{center}		
	\vskip -0.1in
	\vspace{-4mm}
\end{table}

To appreciate the advantages of \piccolo, we review several policy optimization algorithms and compare their regret. 
We show that they can be viewed as incomplete versions of \piccolo, which only either
result in accelerated learning \emph{or} are unbiased, but not both (see in Table~\ref{tb:comparison of bounds}).

We first consider the common model-free approach~\cite{sutton2000policy,kakade2002natural,peters2008natural,peters2010relative,  silver2014deterministic,sun2017deeply,cheng2018fast}, i.e. applying the base algorithm with $g_n$. To make the comparison concrete, suppose $\norm{\E[g_n]}_{*}^2 \leq G_g^2$ for some constant $G_g$, where we recall $g_n$ is the sampled true gradient. 
As the model-free approach is equivalent to setting $\hat{g}_n = 0$ in \piccolo, by Theorem~\ref{th:piccolo} (with $e_n = g_n$),  the constant $C_\Pi$ in Theorem~\ref{th:piccolo averageregret} would become $O(|\Pi|+G_g^2 + \sigma_g^2)$. In other words, {\piccolo}ing the base algorithm improves the constant factor from $G_g^2$ to $\sigma_{\hat{g}}^2 + E_{\Phi}$. 
Therefore, while the model-free approach is bias-free, its convergence can be further improved by \piccolo, as long as the models $\{\Phi_n\}$ are reasonably accurate  on average.\footnote{It can be shown that if the model is learned online with a no-regret algorithm, it would perform similarly to the best model in the hindsight (cf. Appendix~\ref{app:model learning})}

Next we consider the pure model-based approach with a model that is potentially learned online~\cite{jacobson1970differential,todorov2005generalized,deisenroth2011pilco,pan2014probabilistic,levine2013guided,wen2018dual}. 
As this approach is equivalent to only performing the Prediction Step\footnote{These algorithms can be realized by the fixed-point formulation of the Prediction Step with (arbitrarily small) regularization.}, its performance suffers from any modeling error. 
Specifically, 
suppose $\norm{\E[\hat{g}_n]}_{*}^2 \leq G_{\hat{g}}^2$ for some constant $G_{\hat{g}}$. One can show that the bound in Theorem~\ref{th:piccolo averageregret} would become 
$O((|\Pi| + G_{\hat{g}}^2+\sigma_{\hat{g}}^2)/\sqrt{N} + E_{\Phi})$, introducing a constant bias in $O(E_{\Phi})$.

A hybrid heuristic to combine the model-based and model-free updates
is \dyna~\cite{sutton1991dyna,sutton2012dyna}, which interleaves the two steps during policy optimization. This is equivalent to applying $g_n$, instead of the error $e_n$, in the Correction Step of \piccolo. Following a similar analysis as above, one can show that the convergence rate in Theorem~\ref{th:piccolo averageregret} would become $O((|\Pi| + G^2 + \sigma^2)/\sqrt{2 N} + E_{\Phi})$, where $G^2 = \half(G_{g}^2 + G_{\hat{g}}^2)$ and $\sigma^2 = \half(\sigma_g^2 + \sigma_{\hat{g}}^2)$. Therefore, \dyna is effectively twice as fast as the pure model-free approach when the model is accurate.
However, it would eventually suffer from the performance bias due model error, as reflected in the term $E_{\Phi}$. We will demonstrate this property experimentally in Figure~\ref{fig:simple exps}.

Finally, we note that the idea of using $\Phi_n$ as control variate~\cite{chebotar2017combining,grathwohl2017backpropagation,papini2018stochastic} is orthogonal to the setups considered above, and it can be naturally combined with \piccolo. For example, we can also use $\Phi_n$ to compute a better sampled gradient $g_n$ with smaller variance (line 5 of Algorithm~\ref{alg:piccolo}). This would improve $\sigma_g^2$ in the bounds of \piccolo to a smaller $\tilde\sigma_g^2$, the size of reduced variance.

\vspace{-2mm}
\section{Experiments}

\def\last{\textsc{last}\xspace}
\def\replay{\textsc{replay}\xspace}
\def\sim{\textsc{TrueDyn}\xspace}
\def\biased{\textsc{BiasedDyn}\xspace}
\def\biasedE{\textsc{BiasedDyn}0.8\xspace}
\def\fp{\textsc{fp}\xspace}
\def\rand{\textsc{random}\xspace}
\def\adv{\textsc{adversarial}\xspace}
\begin{figure} 	
	\centering
	\begin{subfigure}{.23\textwidth}
		\includegraphics[width=\textwidth]{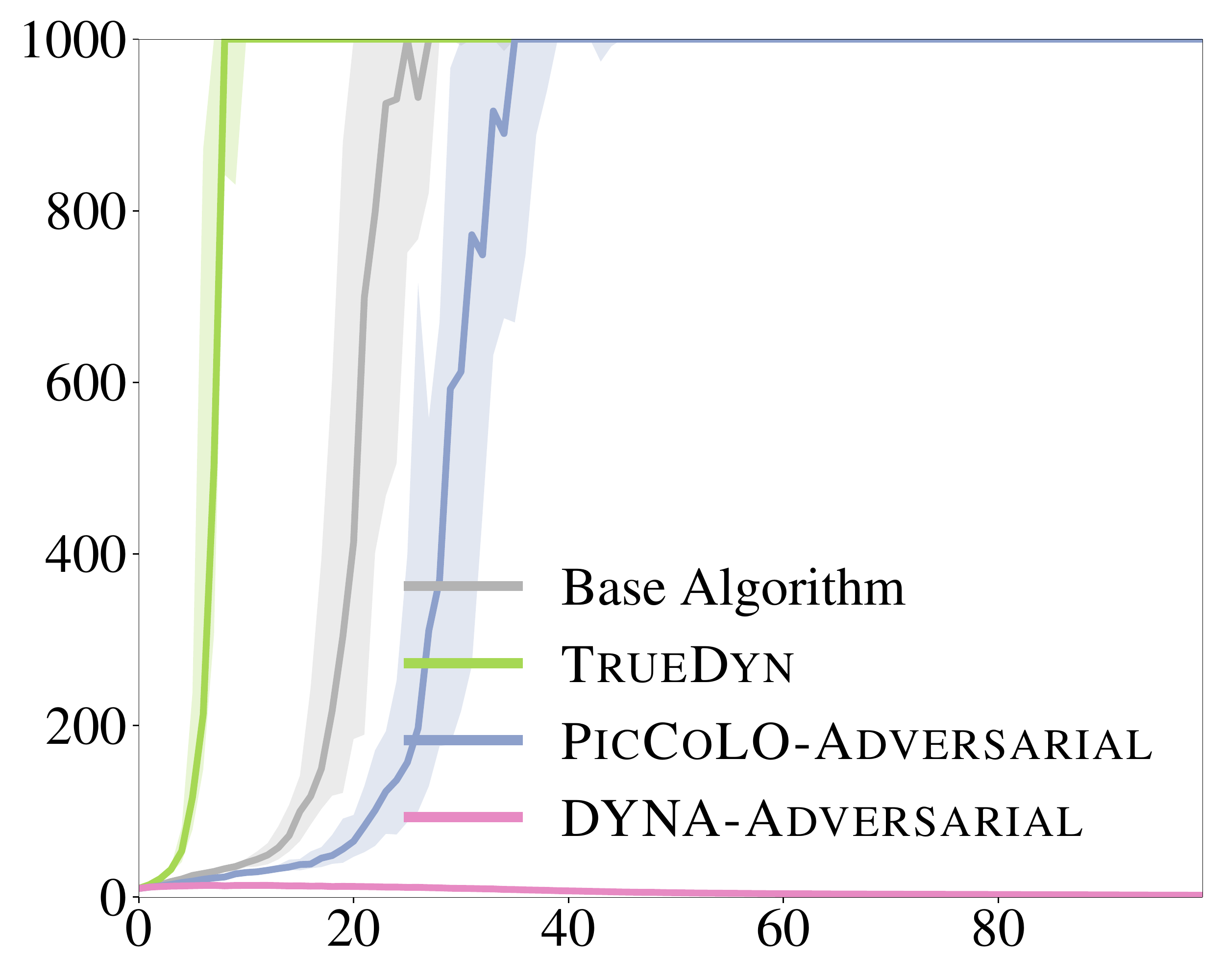}
		\caption{Adversarial model}
	\end{subfigure}
		\begin{subfigure}{.23\textwidth}
	\includegraphics[width=\textwidth]{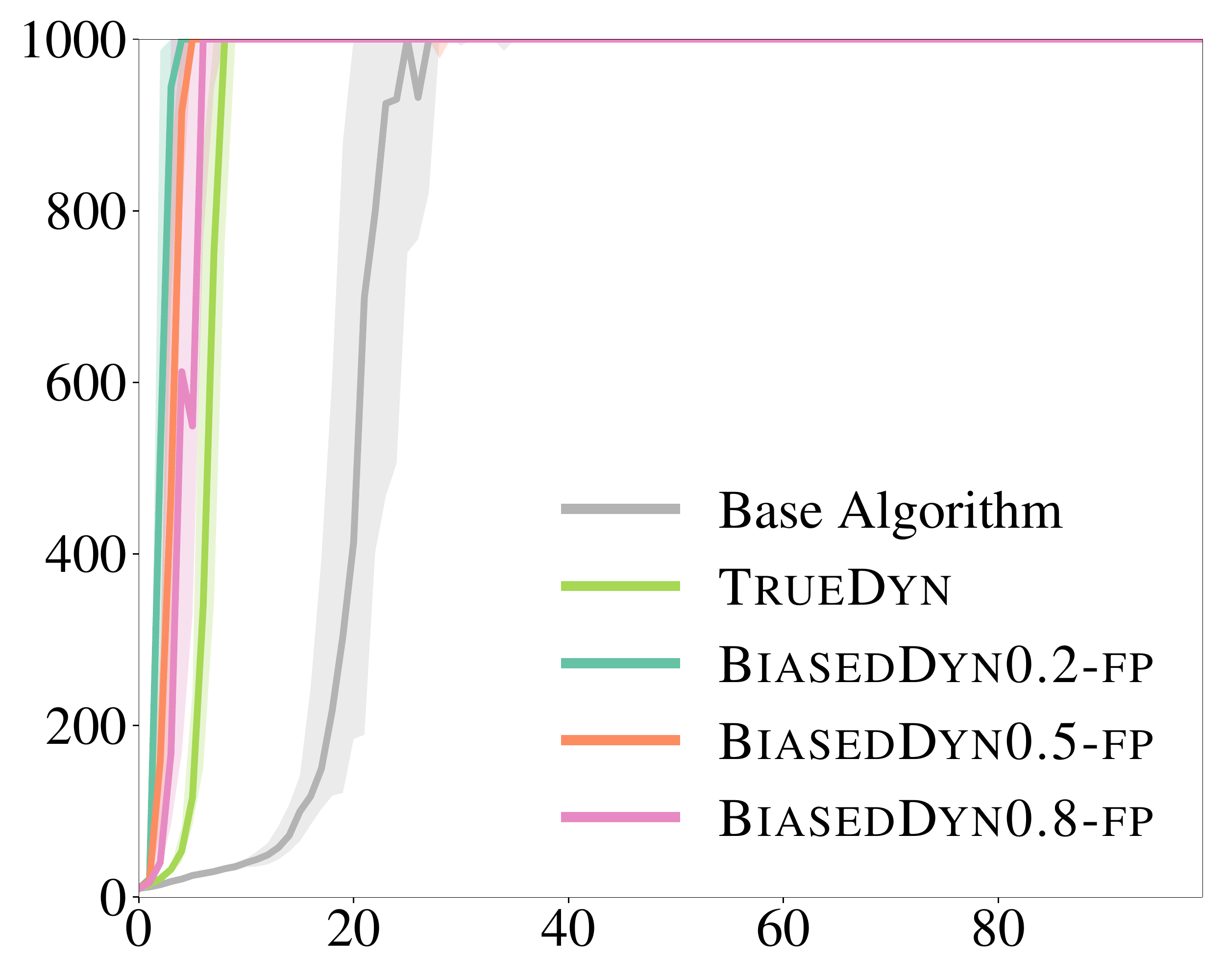}
	\caption{Different model fidelities}
\end{subfigure}
	\caption{ 
	Performance of \piccolo with different predictive models. 
	$x$ axis is iteration number and $y$ axis is sum of rewards. The curves are the median among 8 runs with different seeds, and the shaded regions account for $25\%$ percentile.
	\adam is used as the base algorithm, and
	the update rule, by default, is $\piccolo$; e.g. \sim in (a) refers to \piccolo with \sim predictive model. 
	(a) Comparison of \piccolo and \dyna with adversarial model.  
	(b) \piccolo with the fixed-point setting~\eqref{eq:fixed-point problem} with dynamics model in different fidelities.
	 \biasedE indicates that the mass of each individual robot link is either increased or decreased by $80\%$ with probability 0.5 respectively. 
	}
	\label{fig:simple exps}
	\vspace{-2mm}
\end{figure}

\begin{figure*} 	
	\centering
	\begin{subfigure}{.24\textwidth}
	\includegraphics[width=\textwidth]{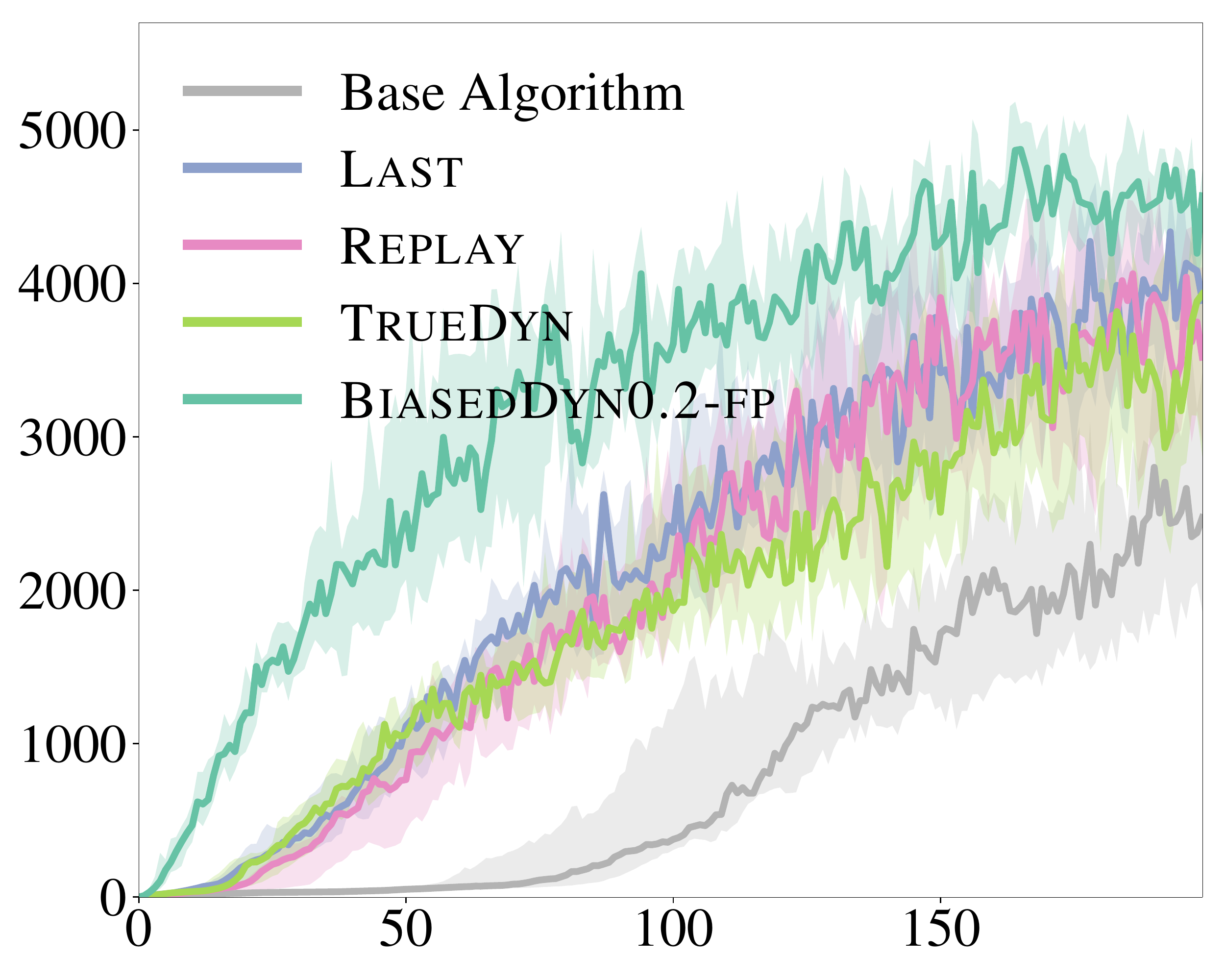}
	\caption{Hopper \adam}
\end{subfigure}
\begin{subfigure}{.24\textwidth}
	\includegraphics[width=\textwidth]{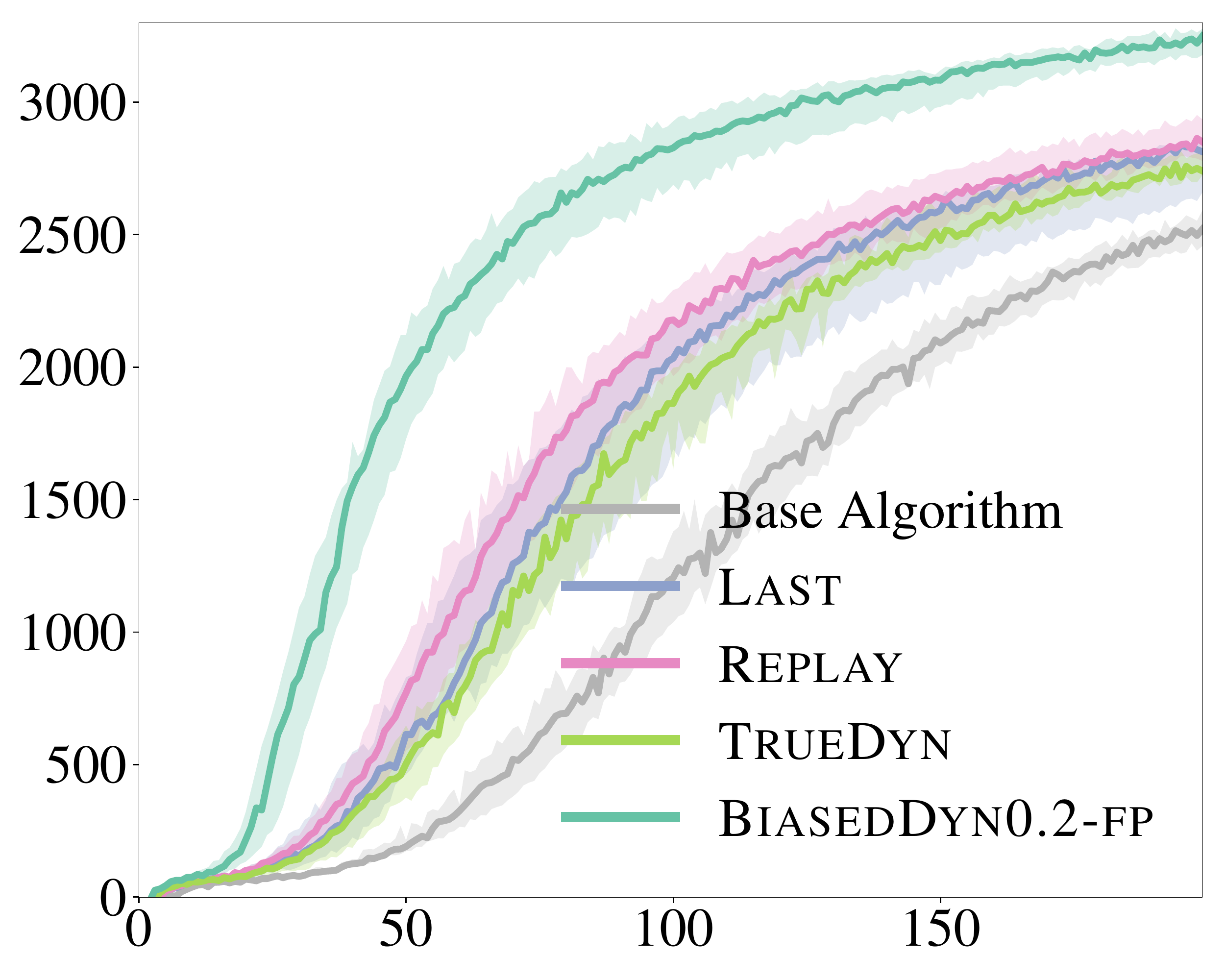}
	\caption{Snake \adam}
\end{subfigure}
	\begin{subfigure}{.24\textwidth}
	\includegraphics[width=\textwidth]{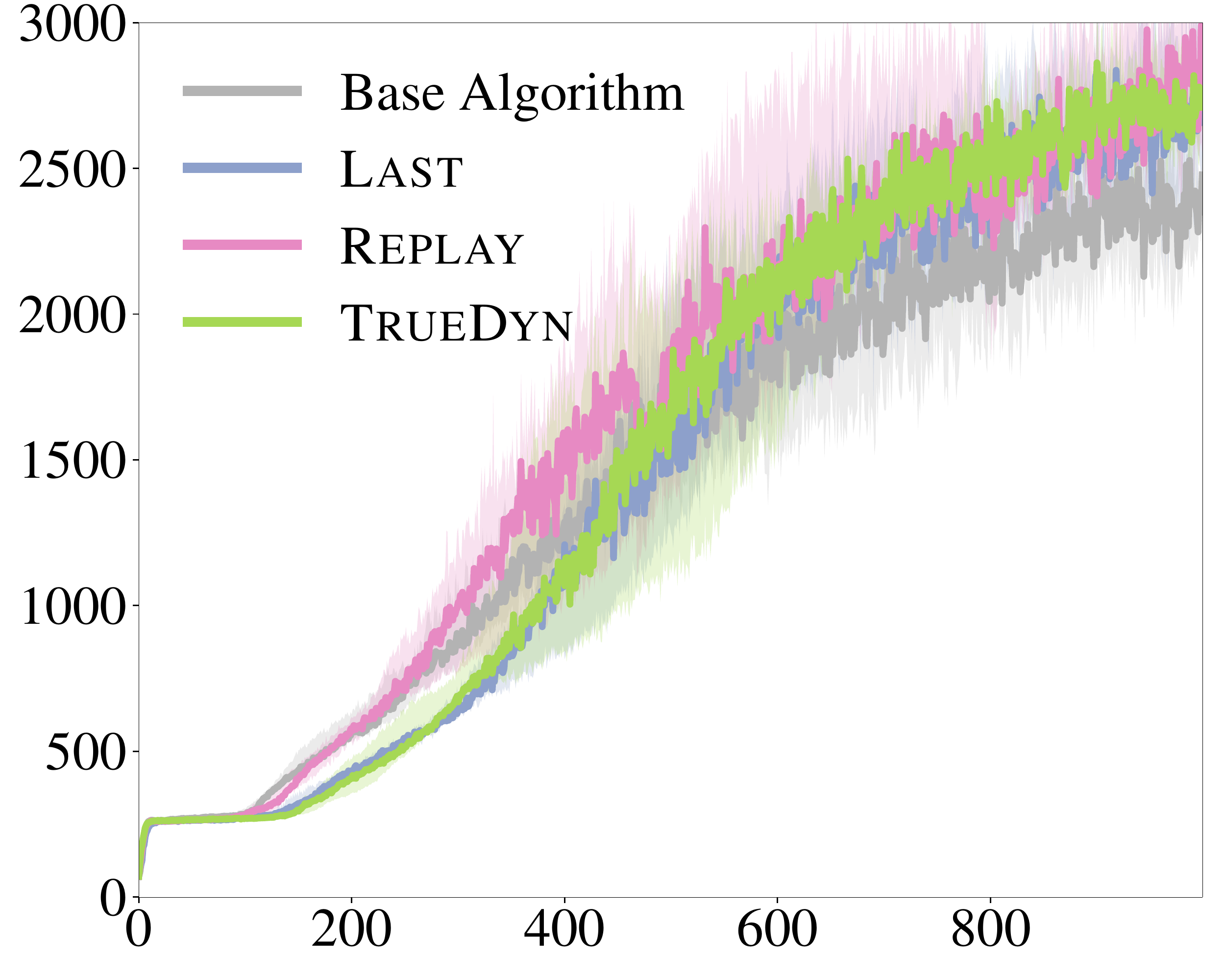}
	\caption{Walker3D \natgrad }
\end{subfigure}
\begin{subfigure}{.24\textwidth}
	\includegraphics[width=\textwidth]{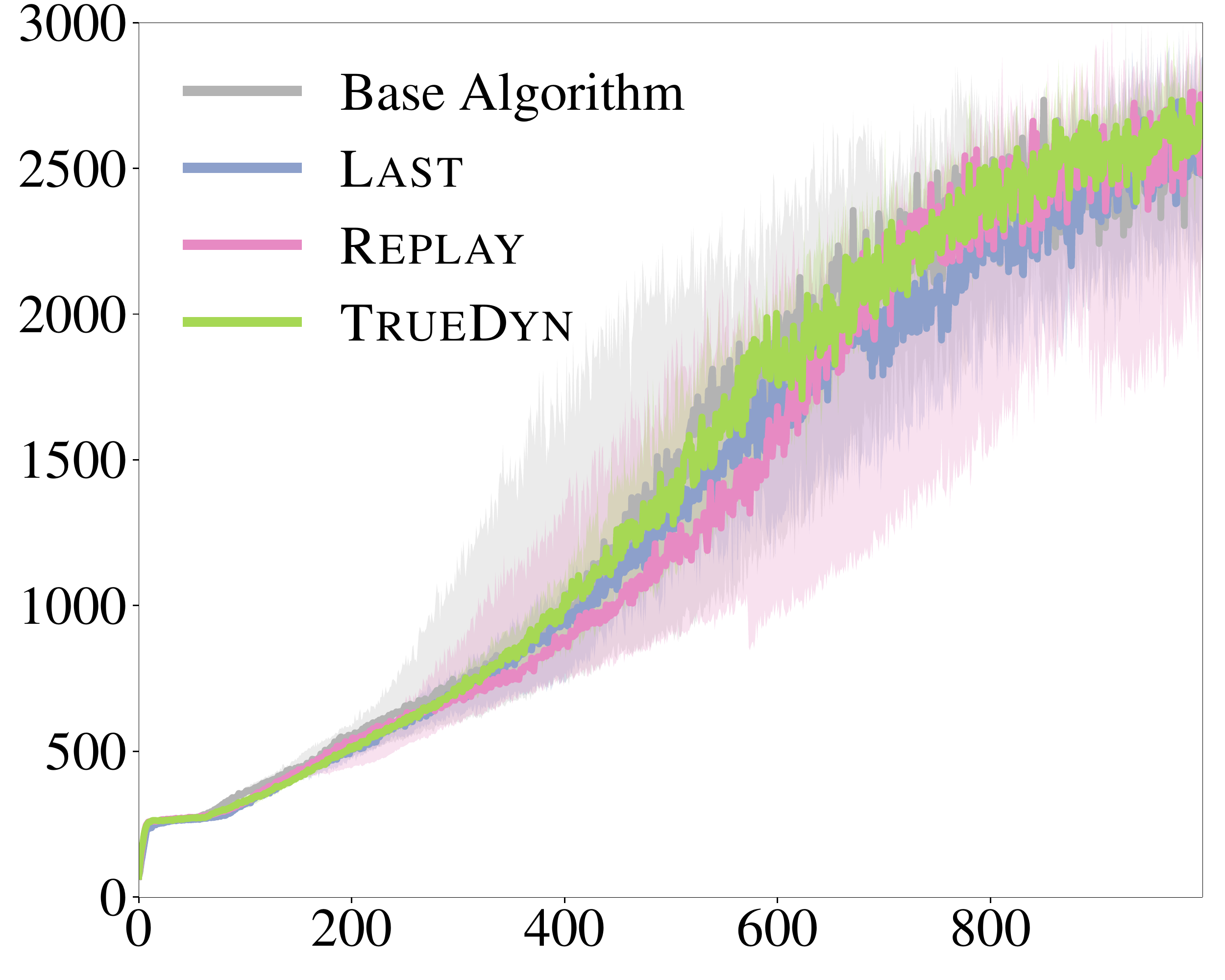}
	\caption{Walker3D \trpo}
\end{subfigure}
\caption{Performance of \piccolo in various tasks. 	
	$x$ axis is iteration number and $y$ axis is sum of rewards.
	The curves are the median among 8 runs with different seeds, and the shaded regions account for $25\%$ percentile. 
}
\label{fig:new_exps}
\vspace{-1mm}
\end{figure*}

We corroborate our theoretical findings with experiments\footnote{The codes are available at \url{https://github.com/gtrll/rlfamily}.} in learning neural network policies to solve robot RL tasks (CartPole, Hopper, Snake, 
and Walker3D) from OpenAI Gym~\citep{brockman2016openai} with the DART physics engine~\citep{Lee2018}\footnote{The environments are defined in DartEnv, hosted at \url{https://github.com/DartEnv}.}.
The aim is to see if \piccolo improves the performance of a base algorithm, even though in these experiments the convexity assumption in the theory does not hold.
We choose several popular first-order mirror descent base algorithms (
\adam~\cite{kingma2014adam}, natural gradient descent \natgrad~\cite{kakade2002natural}, and trust-region optimizer \trpo~\cite{schulman2015trust}). 
We compute $g_n$ by GAE~\cite{schulman2015high}. 
For predictive models, we consider off-policy gradients (with the samples of the last iteration \last or a replay buffer \replay) and gradients computed through simulations with the true or biased dynamics models (\sim or \biased).
We will label a model with \fp if $\hat{g}_n$ is determined by the fixed-point formulation~\eqref{eq:fixed-point problem}\footnote{In implementation, we solve the corresponding optimization problem with a few number of iterations. For example, $\biased$-$\fp$ is aporoximatedly solved with $5$ iterations.}; otherwise, $\hat{g}_n = \Phi_n(\hat{\pi}_n)$.
Please refer to Appendix~\ref{app:exp details} for the details.

In~\cref{fig:simple exps}, we first use CartPole to study Theorem~\ref{th:piccolo averageregret}, which suggests that \piccolo is unbiased and improves the performance when the prediction is accurate. 
Here we additionally consider 
an extremely bad model, \adv, that predicts the gradients adversarially.\footnote{We set 
	$\hat{g}_{n+1} = -  \left(\max_{m=1,\dots,n}\norm{g_m}/\norm{g_n} \right)g_n$.}
Figure~\ref{fig:simple exps} (a) illustrates the performance of \piccolo and \dyna, when \adam is chosen as the base algorithm. 
We observe that \piccolo improves the performance when the model is accurate (i.e. \sim). Moreover, \piccolo is robust to modeling errors. It still converges when the model is adversarially attacking the algorithm,  whereas \dyna fails completely. 
In~\cref{fig:simple exps} (b), we conduct a finer comparison of the effects of different model accuracies (\biased-\fp), when $\hat{g}_n$ is computed using~\eqref{eq:fixed-point problem}.
To realize inaccurate dynamics models to be used in the Prediction step, we change the mass of links of the robot by a certain factor, e.g. \biasedE indicates that the mass of each individual link is either increased or decreased by $80\%$ with probability 0.5, respectively. 
We see that the fixed-point formulation~\eqref{eq:fixed-point problem}, which makes multiple queries of $\Phi_n$ for computing $\hat{g}_n$, performs much better than the heuristic of setting $\hat{g}_n = \Phi(\hat{\pi}_n)$, even when the latter is using the true model (\sim).
Overall, we see \piccolo with \biased-\fp
is able to accelerate learning, though with a degree varying with model accuracies; but even for models with a large bias, it still converges unbiasedly, as we previously observed in~\cref{fig:simple exps} (a),

In~\cref{fig:new_exps}, we study the performance of \piccolo in a range of environments. In general, we find that \piccolo indeed improves the performance\footnote{Note that different base algorithms are not directly comparable, as further fine-tuning of step sizes is required.} though the exact degree depends on how $\hat{g}_n$ is computed. 
In~\cref{fig:new_exps} (a) and (b), we show the results of using \adam as the base algorithm. We observe that, while setting $\hat{g}_n = \Phi_n(\hat{\pi}_n)$ is already an effective heuristic, the performance of \piccolo can be further and largely improved if we adopt the fixed-point strategy in~\eqref{eq:fixed-point problem}, as the latter allows the learner to take more globally informed update directions. Finally, to demonstrate the flexibility of the proposed framework, we also ``\piccolo'' two other base algorithms, \natgrad and \trpo, in~\cref{fig:new_exps} (c) and (d), respectively. 
The complete set of experimental results can be found in Appendix~\ref{app:exp details}.

\vspace{-1mm}
\section{Conclusion}

\piccolo is a general reduction-based framework for solving predictable online learning problems.
It can be viewed as an automatic strategy for generating new algorithms that can leverage prediction to accelerate convergence. Furthermore, \piccolo uses the Correction Step to recover from the mistake made in the Prediction Step, so the presence of modeling errors does not bias convergence, as we show in both the theory and experiments. The design of \piccolo leaves open the question of how to design good predictive models. While \piccolo is robust against modeling error, 
the accuracy of a predictive model can affect its effectiveness. 
\piccolo only improves the performance when the model can make non-trivial predictions.
In the experiments, we found that off-policy and simulated gradients are often useful, but they are not perfect. It would be interesting to see whether a predictive model that is trained to directly minimize the prediction error can further help policy learning.
Finally, we note that, despite the focus of this paper on policy optimization, \piccolo can naturally be applied to other optimization and learning problems. 

\vspace{-1mm}
\section*{Acknowledgements}
This research is supported in part by NSF NRI 1637758 and NSF CAREER 1750483.

\bibliography{ref} 

\begin{thebibliography}{65}
\providecommand{\natexlab}[1]{#1}
\providecommand{\url}[1]{\texttt{#1}}
\expandafter\ifx\csname urlstyle\endcsname\relax
  \providecommand{\doi}[1]{doi: #1}\else
  \providecommand{\doi}{doi: \begingroup \urlstyle{rm}\Url}\fi

\bibitem[Amari(1998)]{amari1998natural}
Amari, S.-I.
\newblock Natural gradient works efficiently in learning.
\newblock \emph{Neural computation}, 10\penalty0 (2):\penalty0 251--276, 1998.

\bibitem[Amos et~al.(2018)Amos, Jimenez, Sacks, Boots, and
  Kolter]{amos2018differentiable}
Amos, B., Jimenez, I., Sacks, J., Boots, B., and Kolter, J.~Z.
\newblock Differentiable {MPC} for end-to-end planning and control.
\newblock In \emph{Advances in Neural Information Processing Systems}, pp.\
  8299--8310, 2018.

\bibitem[Anthony et~al.(2017)Anthony, Tian, and Barber]{anthony2017thinking}
Anthony, T., Tian, Z., and Barber, D.
\newblock Thinking fast and slow with deep learning and tree search.
\newblock In \emph{Advances in Neural Information Processing Systems}, pp.\
  5360--5370, 2017.

\bibitem[Beck \& Teboulle(2003)Beck and Teboulle]{beck2003mirror}
Beck, A. and Teboulle, M.
\newblock Mirror descent and nonlinear projected subgradient methods for convex
  optimization.
\newblock \emph{Operations Research Letters}, 31\penalty0 (3):\penalty0
  167--175, 2003.

\bibitem[Brockman et~al.(2016)Brockman, Cheung, Pettersson, Schneider,
  Schulman, Tang, and Zaremba]{brockman2016openai}
Brockman, G., Cheung, V., Pettersson, L., Schneider, J., Schulman, J., Tang,
  J., and Zaremba, W.
\newblock Open{AI} {G}ym.
\newblock \emph{arXiv preprint arXiv:1606.01540}, 2016.

\bibitem[Chebotar et~al.(2017)Chebotar, Hausman, Zhang, Sukhatme, Schaal, and
  Levine]{chebotar2017combining}
Chebotar, Y., Hausman, K., Zhang, M., Sukhatme, G., Schaal, S., and Levine, S.
\newblock Combining model-based and model-free updates for trajectory-centric
  reinforcement learning.
\newblock In \emph{Proceedings of the 34th International Conference on Machine
  Learning-Volume 70}, pp.\  703--711, 2017.

\bibitem[Cheng \& Boots(2018)Cheng and Boots]{cheng2018convergence}
Cheng, C.-A. and Boots, B.
\newblock Convergence of value aggregation for imitation learning.
\newblock In \emph{International Conference on Artificial Intelligence and
  Statistics}, volume~84, pp.\  1801--1809, 2018.

\bibitem[Cheng et~al.(2018)Cheng, Yan, Wagener, and Boots]{cheng2018fast}
Cheng, C.-A., Yan, X., Wagener, N., and Boots, B.
\newblock Fast policy learning through imitation and reinforcement.
\newblock In \emph{Proceedings of the 34th Conference on Uncertanty in
  Artificial Intelligence}, pp.\  845--855, 2018.

\bibitem[Cheng et~al.(2019)Cheng, Yan, Theodorou, and
  Boots]{cheng2019accelerating}
Cheng, C.-A., Yan, X., Theodorou, E., and Boots, B.
\newblock Accelerating imitation learning with predictive models.
\newblock In \emph{International Conference on Artificial Intelligence and
  Statistics (AISTATS)}, 2019.

\bibitem[Chiang et~al.(2012)Chiang, Yang, Lee, Mahdavi, Lu, Jin, and
  Zhu]{chiang2012online}
Chiang, C.-K., Yang, T., Lee, C.-J., Mahdavi, M., Lu, C.-J., Jin, R., and Zhu,
  S.
\newblock Online optimization with gradual variations.
\newblock In \emph{Conference on Learning Theory}, pp.\  6--1, 2012.

\bibitem[Daniely et~al.(2015)Daniely, Gonen, and
  Shalev-Shwartz]{daniely2015strongly}
Daniely, A., Gonen, A., and Shalev-Shwartz, S.
\newblock Strongly adaptive online learning.
\newblock In \emph{International Conference on Machine Learning}, pp.\
  1405--1411, 2015.

\bibitem[Deisenroth \& Rasmussen(2011)Deisenroth and
  Rasmussen]{deisenroth2011pilco}
Deisenroth, M. and Rasmussen, C.~E.
\newblock Pilco: A model-based and data-efficient approach to policy search.
\newblock In \emph{International Conference on machine learning}, pp.\
  465--472, 2011.

\bibitem[Diakonikolas \& Orecchia(2017)Diakonikolas and
  Orecchia]{diakonikolas2017accelerated}
Diakonikolas, J. and Orecchia, L.
\newblock Accelerated extra-gradient descent: A novel accelerated first-order
  method.
\newblock \emph{arXiv preprint arXiv:1706.04680}, 2017.

\bibitem[Duan et~al.(2016)Duan, Chen, Houthooft, Schulman, and
  Abbeel]{duan2016benchmarking}
Duan, Y., Chen, X., Houthooft, R., Schulman, J., and Abbeel, P.
\newblock Benchmarking deep reinforcement learning for continuous control.
\newblock In \emph{International Conference on Machine Learning}, pp.\
  1329--1338, 2016.

\bibitem[Duchi et~al.(2011)Duchi, Hazan, and Singer]{duchi2011adaptive}
Duchi, J., Hazan, E., and Singer, Y.
\newblock Adaptive subgradient methods for online learning and stochastic
  optimization.
\newblock \emph{Journal of Machine Learning Research}, 12\penalty0
  (Jul):\penalty0 2121--2159, 2011.

\bibitem[Gordon(1999)]{gordon1999regret}
Gordon, G.~J.
\newblock Regret bounds for prediction problems.
\newblock In \emph{Annual Conference on Computational Learning Theory}, pp.\
  29--40. ACM, 1999.

\bibitem[Grathwohl et~al.(2018)Grathwohl, Choi, Wu, Roeder, and
  Duvenaud]{grathwohl2017backpropagation}
Grathwohl, W., Choi, D., Wu, Y., Roeder, G., and Duvenaud, D.
\newblock Backpropagation through the void: Optimizing control variates for
  black-box gradient estimation.
\newblock In \emph{International Conference on Learning Representations}, 2018.

\bibitem[Gupta et~al.(2017)Gupta, Koren, and Singer]{gupta2017unified}
Gupta, V., Koren, T., and Singer, Y.
\newblock A unified approach to adaptive regularization in online and
  stochastic optimization.
\newblock \emph{arXiv preprint arXiv:1706.06569}, 2017.

\bibitem[Hazan et~al.(2007)Hazan, Agarwal, and Kale]{hazan2007logarithmic}
Hazan, E., Agarwal, A., and Kale, S.
\newblock Logarithmic regret algorithms for online convex optimization.
\newblock \emph{Machine Learning}, 69\penalty0 (2-3):\penalty0 169--192, 2007.

\bibitem[Hazan et~al.(2016)]{hazan2016introduction}
Hazan, E. et~al.
\newblock Introduction to online convex optimization.
\newblock \emph{Foundations and Trends{\textregistered} in Optimization},
  2\penalty0 (3-4):\penalty0 157--325, 2016.

\bibitem[Ho-Nguyen \& K{\i}l{\i}n{\c{c}}-Karzan(2018)Ho-Nguyen and
  K{\i}l{\i}n{\c{c}}-Karzan]{ho2017exploiting}
Ho-Nguyen, N. and K{\i}l{\i}n{\c{c}}-Karzan, F.
\newblock Exploiting problem structure in optimization under uncertainty via
  online convex optimization.
\newblock \emph{Mathematical Programming}, pp.\  1--35, 2018.

\bibitem[Jacobson \& Mayne(1970)Jacobson and Mayne]{jacobson1970differential}
Jacobson, D.~H. and Mayne, D.~Q.
\newblock Differential dynamic programming.
\newblock 1970.

\bibitem[Jadbabaie et~al.(2015)Jadbabaie, Rakhlin, Shahrampour, and
  Sridharan]{jadbabaie2015online}
Jadbabaie, A., Rakhlin, A., Shahrampour, S., and Sridharan, K.
\newblock Online optimization: Competing with dynamic comparators.
\newblock In \emph{Artificial Intelligence and Statistics}, pp.\  398--406,
  2015.

\bibitem[Juditsky et~al.(2011)Juditsky, Nemirovski, and
  Tauvel]{juditsky2011solving}
Juditsky, A., Nemirovski, A., and Tauvel, C.
\newblock Solving variational inequalities with stochastic mirror-prox
  algorithm.
\newblock \emph{Stochastic Systems}, 1\penalty0 (1):\penalty0 17--58, 2011.

\bibitem[Kakade \& Langford(2002)Kakade and Langford]{kakade2002approximately}
Kakade, S. and Langford, J.
\newblock Approximately optimal approximate reinforcement learning.
\newblock In \emph{International Conference on Machine Learning}, volume~2,
  pp.\  267--274, 2002.

\bibitem[Kakade(2002)]{kakade2002natural}
Kakade, S.~M.
\newblock A natural policy gradient.
\newblock In \emph{Advances in neural information processing systems}, pp.\
  1531--1538, 2002.

\bibitem[Kalai \& Vempala(2005)Kalai and Vempala]{kalai2005efficient}
Kalai, A. and Vempala, S.
\newblock Efficient algorithms for online decision problems.
\newblock \emph{Journal of Computer and System Sciences}, 71\penalty0
  (3):\penalty0 291--307, 2005.

\bibitem[Kingma \& Ba(2015)Kingma and Ba]{kingma2014adam}
Kingma, D.~P. and Ba, J.
\newblock Adam: A method for stochastic optimization.
\newblock In \emph{International Conference on Learning Representations
  (ICLR)}, 2015.

\bibitem[Konda \& Tsitsiklis(2000)Konda and Tsitsiklis]{konda2000actor}
Konda, V.~R. and Tsitsiklis, J.~N.
\newblock Actor-critic algorithms.
\newblock In \emph{Advances in neural information processing systems}, pp.\
  1008--1014, 2000.

\bibitem[Korpelevich(1976)]{korpelevich1976extragradient}
Korpelevich, G.
\newblock The extragradient method for finding saddle points and other
  problems.
\newblock \emph{Matecon}, 12:\penalty0 747--756, 1976.

\bibitem[Lee et~al.(2018)Lee, Grey, Ha, Kunz, Jain, Ye, Srinivasa, Stilman, and
  Liu]{Lee2018}
Lee, J., Grey, M.~X., Ha, S., Kunz, T., Jain, S., Ye, Y., Srinivasa, S.~S.,
  Stilman, M., and Liu, C.~K.
\newblock {DART}: Dynamic animation and robotics toolkit.
\newblock \emph{The Journal of Open Source Software}, 3\penalty0 (22):\penalty0
  500, feb 2018.

\bibitem[Levine \& Koltun(2013)Levine and Koltun]{levine2013guided}
Levine, S. and Koltun, V.
\newblock Guided policy search.
\newblock In \emph{International Conference on Machine Learning}, pp.\  1--9,
  2013.

\bibitem[McMahan(2017)]{mcmahan2017survey}
McMahan, H.~B.
\newblock A survey of algorithms and analysis for adaptive online learning.
\newblock \emph{The Journal of Machine Learning Research}, 18\penalty0
  (1):\penalty0 3117--3166, 2017.

\bibitem[McMahan \& Streeter(2010)McMahan and Streeter]{mcmahan2010adaptive}
McMahan, H.~B. and Streeter, M.
\newblock Adaptive bound optimization for online convex optimization.
\newblock In \emph{{COLT} 2010 - The 23rd Conference on Learning Theory}, 2010.

\bibitem[Mnih et~al.(2013)Mnih, Kavukcuoglu, Silver, Graves, Antonoglou,
  Wierstra, and Riedmiller]{mnih2013playing}
Mnih, V., Kavukcuoglu, K., Silver, D., Graves, A., Antonoglou, I., Wierstra,
  D., and Riedmiller, M.
\newblock Playing atari with deep reinforcement learning.
\newblock \emph{arXiv preprint arXiv:1312.5602}, 2013.

\bibitem[Nemirovski(2004)]{nemirovski2004prox}
Nemirovski, A.
\newblock Prox-method with rate of convergence o (1/t) for variational
  inequalities with lipschitz continuous monotone operators and smooth
  convex-concave saddle point problems.
\newblock \emph{SIAM Journal on Optimization}, 15\penalty0 (1):\penalty0
  229--251, 2004.

\bibitem[Nesterov(2013)]{nesterov2013introductory}
Nesterov, Y.
\newblock \emph{Introductory lectures on convex optimization: A basic course},
  volume~87.
\newblock Springer Science \& Business Media, 2013.

\bibitem[Oh et~al.(2017)Oh, Singh, and Lee]{oh2017value}
Oh, J., Singh, S., and Lee, H.
\newblock Value prediction network.
\newblock In \emph{Advances in Neural Information Processing Systems}, pp.\
  6120--6130, 2017.

\bibitem[Pan \& Theodorou(2014)Pan and Theodorou]{pan2014probabilistic}
Pan, Y. and Theodorou, E.
\newblock Probabilistic differential dynamic programming.
\newblock In \emph{Advances in Neural Information Processing Systems}, pp.\
  1907--1915, 2014.

\bibitem[Papini et~al.(2018)Papini, Binaghi, Canonaco, Pirotta, and
  Restelli]{papini2018stochastic}
Papini, M., Binaghi, D., Canonaco, G., Pirotta, M., and Restelli, M.
\newblock Stochastic variance-reduced policy gradient.
\newblock In \emph{Proceedings of the 35th International Conference on Machine
  Learning}, pp.\  4023--4032, 2018.

\bibitem[Pascanu et~al.(2017)Pascanu, Li, Vinyals, Heess, Buesing,
  Racani{\`e}re, Reichert, Weber, Wierstra, and Battaglia]{pascanu2017learning}
Pascanu, R., Li, Y., Vinyals, O., Heess, N., Buesing, L., Racani{\`e}re, S.,
  Reichert, D., Weber, T., Wierstra, D., and Battaglia, P.
\newblock Learning model-based planning from scratch.
\newblock \emph{arXiv preprint arXiv:1707.06170}, 2017.

\bibitem[Peters \& Schaal(2008)Peters and Schaal]{peters2008natural}
Peters, J. and Schaal, S.
\newblock Natural actor-critic.
\newblock \emph{Neurocomputing}, 71\penalty0 (7-9):\penalty0 1180--1190, 2008.

\bibitem[Peters et~al.(2010)Peters, M{\"u}lling, and Altun]{peters2010relative}
Peters, J., M{\"u}lling, K., and Altun, Y.
\newblock Relative entropy policy search.
\newblock In \emph{AAAI}, pp.\  1607--1612. Atlanta, 2010.

\bibitem[Rakhlin \& Sridharan(2013{\natexlab{a}})Rakhlin and
  Sridharan]{rakhlin2013online}
Rakhlin, A. and Sridharan, K.
\newblock Online learning with predictable sequences.
\newblock In \emph{{COLT} 2013 - The 26th Annual Conference on Learning
  Theory}, pp.\  993--1019, 2013{\natexlab{a}}.

\bibitem[Rakhlin \& Sridharan(2013{\natexlab{b}})Rakhlin and
  Sridharan]{rakhlin2013optimization}
Rakhlin, S. and Sridharan, K.
\newblock Optimization, learning, and games with predictable sequences.
\newblock In \emph{Advances in Neural Information Processing Systems}, pp.\
  3066--3074, 2013{\natexlab{b}}.

\bibitem[Reddi et~al.(2018)Reddi, Kale, and Kumar]{reddi2018convergence}
Reddi, S.~J., Kale, S., and Kumar, S.
\newblock On the convergence of adam and beyond.
\newblock In \emph{International Conference on Learning Representations}, 2018.

\bibitem[Ross \& Bagnell(2014)Ross and Bagnell]{ross2014reinforcement}
Ross, S. and Bagnell, J.~A.
\newblock Reinforcement and imitation learning via interactive no-regret
  learning.
\newblock \emph{arXiv preprint arXiv:1406.5979}, 2014.

\bibitem[Ross et~al.(2011)Ross, Gordon, and Bagnell]{ross2011reduction}
Ross, S., Gordon, G., and Bagnell, D.
\newblock A reduction of imitation learning and structured prediction to
  no-regret online learning.
\newblock In \emph{International conference on artificial intelligence and
  statistics}, pp.\  627--635, 2011.

\bibitem[Schulman et~al.(2015)Schulman, Levine, Abbeel, Jordan, and
  Moritz]{schulman2015trust}
Schulman, J., Levine, S., Abbeel, P., Jordan, M., and Moritz, P.
\newblock Trust region policy optimization.
\newblock In \emph{International Conference on Machine Learning}, pp.\
  1889--1897, 2015.

\bibitem[Schulman et~al.(2016)Schulman, Moritz, Levine, Jordan, and
  Abbeel]{schulman2015high}
Schulman, J., Moritz, P., Levine, S., Jordan, M., and Abbeel, P.
\newblock High-dimensional continuous control using generalized advantage
  estimation.
\newblock In \emph{Proceedings of the International Conference on Learning
  Representations (ICLR)}, 2016.

\bibitem[Silver et~al.(2014)Silver, Lever, Heess, Degris, Wierstra, and
  Riedmiller]{silver2014deterministic}
Silver, D., Lever, G., Heess, N., Degris, T., Wierstra, D., and Riedmiller, M.
\newblock Deterministic policy gradient algorithms.
\newblock In \emph{Proceedings of the 31th International Conference on Machine
  Learning}, pp.\  387--395, 2014.

\bibitem[Silver et~al.(2017)Silver, van Hasselt, Hessel, Schaul, Guez, Harley,
  Dulac-Arnold, Reichert, Rabinowitz, Barreto, et~al.]{silver2016predictron}
Silver, D., van Hasselt, H., Hessel, M., Schaul, T., Guez, A., Harley, T.,
  Dulac-Arnold, G., Reichert, D., Rabinowitz, N., Barreto, A., et~al.
\newblock The predictron: End-to-end learning and planning.
\newblock In \emph{Proceedings of the 34th International Conference on Machine
  Learning}, 2017.

\bibitem[Silver et~al.(2018)Silver, Hubert, Schrittwieser, Antonoglou, Lai,
  Guez, Lanctot, Sifre, Kumaran, Graepel, Lillicrap, Simonyan, and
  Hassabis]{silver2017mastering}
Silver, D., Hubert, T., Schrittwieser, J., Antonoglou, I., Lai, M., Guez, A.,
  Lanctot, M., Sifre, L., Kumaran, D., Graepel, T., Lillicrap, T., Simonyan,
  K., and Hassabis, D.
\newblock A general reinforcement learning algorithm that masters chess, shogi,
  and go through self-play.
\newblock \emph{Science}, 362\penalty0 (6419):\penalty0 1140--1144, 2018.
\newblock ISSN 0036-8075.

\bibitem[Srinivas et~al.(2018)Srinivas, Jabri, Abbeel, Levine, and
  Finn]{srinivas2018universal}
Srinivas, A., Jabri, A., Abbeel, P., Levine, S., and Finn, C.
\newblock Universal planning networks: Learning generalizable representations
  for visuomotor control.
\newblock In \emph{Proceedings of the 35th International Conference on Machine
  Learning}, 2018.

\bibitem[Sun et~al.(2017)Sun, Venkatraman, Gordon, Boots, and
  Bagnell]{sun2017deeply}
Sun, W., Venkatraman, A., Gordon, G.~J., Boots, B., and Bagnell, J.~A.
\newblock Deeply aggrevated: Differentiable imitation learning for sequential
  prediction.
\newblock In \emph{Proceedings of the 34th International Conference on Machine
  Learning}, pp.\  3309--3318, 2017.

\bibitem[Sun et~al.(2018)Sun, Gordon, Boots, and Bagnell]{wen2018dual}
Sun, W., Gordon, G.~J., Boots, B., and Bagnell, J.~A.
\newblock Dual policy iteration.
\newblock In \emph{Advances in Neural Information Processing Systems 31}, pp.\
  7059--7069, 2018.

\bibitem[Sutton(1991)]{sutton1991dyna}
Sutton, R.~S.
\newblock Dyna, an integrated architecture for learning, planning, and
  reacting.
\newblock \emph{ACM SIGART Bulletin}, 2\penalty0 (4):\penalty0 160--163, 1991.

\bibitem[Sutton \& Barto(1998)Sutton and Barto]{sutton1998introduction}
Sutton, R.~S. and Barto, A.~G.
\newblock \emph{Introduction to reinforcement learning}, volume 135.
\newblock MIT press Cambridge, 1998.

\bibitem[Sutton et~al.(2000)Sutton, McAllester, Singh, and
  Mansour]{sutton2000policy}
Sutton, R.~S., McAllester, D.~A., Singh, S.~P., and Mansour, Y.
\newblock Policy gradient methods for reinforcement learning with function
  approximation.
\newblock In \emph{Advances in Neural Information Processing Systems}, pp.\
  1057--1063, 2000.

\bibitem[Sutton et~al.(2012)Sutton, Szepesv{\'a}ri, Geramifard, and
  Bowling]{sutton2012dyna}
Sutton, R.~S., Szepesv{\'a}ri, C., Geramifard, A., and Bowling, M.~P.
\newblock Dyna-style planning with linear function approximation and
  prioritized sweeping.
\newblock \emph{arXiv preprint arXiv:1206.3285}, 2012.

\bibitem[Tan et~al.(2018)Tan, Zhang, Coumans, Iscen, Bai, Hafner, Bohez, and
  Vanhoucke]{tan2018sim}
Tan, J., Zhang, T., Coumans, E., Iscen, A., Bai, Y., Hafner, D., Bohez, S., and
  Vanhoucke, V.
\newblock Sim-to-real: Learning agile locomotion for quadruped robots.
\newblock In \emph{Robotics: Science and Systems XIV}, 2018.

\bibitem[Tieleman \& Hinton(2012)Tieleman and Hinton]{tieleman2012lecture}
Tieleman, T. and Hinton, G.
\newblock Lecture 6.5-rmsprop: Divide the gradient by a running average of its
  recent magnitude.
\newblock \emph{COURSERA: Neural networks for machine learning}, 4\penalty0
  (2):\penalty0 26--31, 2012.

\bibitem[Todorov \& Li(2005)Todorov and Li]{todorov2005generalized}
Todorov, E. and Li, W.
\newblock A generalized iterative {LQG} method for locally-optimal feedback
  control of constrained nonlinear systems.
\newblock In \emph{American Control Conference}, pp.\  300--306. IEEE, 2005.

\bibitem[Zeiler(2012)]{zeiler2012adadelta}
Zeiler, M.~D.
\newblock Adadelta: an adaptive learning rate method.
\newblock \emph{arXiv preprint arXiv:1212.5701}, 2012.

\bibitem[Zinkevich(2003)]{zinkevich2003online}
Zinkevich, M.
\newblock Online convex programming and generalized infinitesimal gradient
  ascent.
\newblock In \emph{International Conference on Machine Learning}, pp.\
  928--936, 2003.

\end{thebibliography}
\bibliographystyle{icml2019}
\clearpage

\appendix
\allowdisplaybreaks
\onecolumn

\section{Relationship between \piccolo and Existing Algorithms} \label{app:relationship}

We discuss how the framework of \piccolo unifies existing online learning algorithms and provides their natural adaptive generalization.
To make the presentation clear, we summarize the effective update rule of \piccolo when the base algorithm is mirror descent
\begin{align}  \label{eq:piccolo md}
\begin{split}
\pi_{n} &= \argmin_{\pi \in \Pi} \lr{w_n \hat{g}_n}{\pi} +  B_{R_{n-1}}(\pi||\hat{\pi}_n) \\
\hat{\pi}_{n+1} &= \argmin_{\pi \in \Pi} \lr{w_n e_n}{\pi} +  B_{R_n}(\pi||\pi_n)  
\end{split}
\end{align}
and that when the base algorithm is FTRL,
\begin{align} \label{eq:piccolo ftrl}
\begin{split}
\pi_{n} &= \argmin_{\pi \in \Pi} \lr{w_n \hat{g}_n}{\pi} + \sum_{m=1}^{n-1} \lr{w_m g_m}{\pi} + B_{r_{m}}(\pi||\pi_m)  \\
\hat{\pi}_{n+1} &= \argmin_{\pi \in \Pi} \lr{w_n e_n}{\pi} + B_{r_{n}}(\pi||\pi_n)   + \lr{w_n \hat{g}_n}{\pi} + \sum_{m=1}^{n-1} \lr{w_m g_m}{\pi} + B_{r_{m}}(\pi||\pi_m)  
\end{split}
\end{align}
Because $e_n = g_n - \hat{g}_n$, \piccolo with FTRL  exactly matches the update rule (\mobil) proposed by \citet{cheng2018fast}
\begin{align}  \label{eq:mobil}
\begin{split}
\pi_{n} &= \argmin_{\pi \in \Pi} \lr{w_n \hat{g}_n}{\pi} + \sum_{m=1}^{n-1} \lr{w_m g_m}{\pi} + B_{r_{m}}(\pi||\pi_m)   \\
\hat{\pi}_{n+1} &= \argmin_{\pi \in \Pi} \sum_{m=1}^{n} \lr{w_m g_m}{\pi} + B_{r_{m}}(\pi||\pi_m)  
\end{split}
\end{align}

As comparisons, we consider existing two-step update rules, which in our notation can be written as follows:
\begin{itemize}
	\item Extragradient descent~\citep{korpelevich1976extragradient}, mirror-prox~\citep{nemirovski2004prox,juditsky2011solving} or optimistic mirror descent \citep{chiang2012online,rakhlin2013online}
	\begin{align}  \label{eq:mirror prox}
	\begin{split}
	\pi_{n} &= \argmin_{\pi \in \Pi} \lr{\hat{g}_n}{\pi} +  B_{R}(\pi||\hat{\pi}_n) \\
	\hat{\pi}_{n+1} &= \argmin_{\pi \in \Pi} \lr{g_n}{\pi} +  B_{R}(\pi||\hat{\pi}_n)  
	\end{split}
	\end{align}

	\item FTRL-with-Prediction/optimistic FTRL \citep{rakhlin2013online}
	\begin{align} \label{eq:ftrl with prediction}
	\pi_{n} &= \argmin_{\pi \in \Pi} R(\pi) + \lr{\hat{g}_n}{\pi} + \sum_{m=1}^{n-1} \lr{w_m g_m}{\pi}  
	\end{align}
\end{itemize}

Let us first review the previous update rules.
Originally extragradient descent~\citep{korpelevich1976extragradient} and mirror prox~\citep{nemirovski2004prox,juditsky2011solving} were proposed to solve VIs (the latter is an extension to consider general Bregman divergences).
As pointed out by \citet{cheng2019accelerating}, when applied to an online learning problem, these algorithms effectively assign $\hat{g}_n$ to be the online gradient as if the learner plays a decision at $\hat{\pi}_n$. On the other hand, in the online learning literature, optimistic mirror descent~\citep{chiang2012online} was proposed to use $\hat{g}_n = g_{n-1}$. Later \cite{rakhlin2013online} generalized it to use some arbitrary sequence $\hat{g}_n$, and provided a FTRL version update rule in~\eqref{eq:ftrl with prediction}. However, it is unclear in \citep{rakhlin2013online} where the prediction $\hat{g}_n$ comes from in general, though they provide an example in the form of learning from experts.

Recently \citet{cheng2018fast} generalized the FTRL version of these ideas to design \mobil, which introduces extra features 1) use of weights 2) non-stationary Bregman divergences (i.e. step size) and 3) the concept of predictive models ($\Phi_n \approx \nabla l_n$). The former two features are important to speed up the convergence rate of IL. With predictive models, they propose a conceptual idea (inspired by Be-the-Leader) which solves for $\pi_n$ by the VI of finding $\pi_n$ such that 
\begin{align} \label{eq:mobil vi}
\lr{w_n \Phi_n(\pi_n) + \sum_{m=1}^{n} w_m g_m }{\pi' - \pi_n} \geq 0 \qquad \forall \pi' \in \Pi
\end{align}
and a more practical version~\eqref{eq:mobil} which sets $\hat{g}_n = \Phi_n(\pi_n)$. Under proper assumptions, they prove that the practical version achieves the same rate of non-asymptotic convergence as the conceptual one, up to constant factors.

\piccolo unifies and generalizes the above update rules.
We first notice that when the weight is constant, the set $\Pi$ is unconstrained, and the Bregman divergence is constant, \piccolo with mirror descent in \eqref{eq:piccolo md} is the same as \eqref{eq:mirror prox}, i.e., 
\begin{align*}
\hat{\pi}_{n+1} &= \argmin_{\pi \in \Pi} \lr{e_n}{\pi} +  B_{R}(\pi||\pi_n)  \\
&= \argmin_{\pi \in \Pi} \lr{e_n}{\pi} +  R(\pi) - \lr{\nabla R(\pi_n)}{\pi}\\
&= \argmin_{\pi \in \Pi} \lr{g_n - \hat{g}_n}{\pi} +  R(\pi) - \lr{\nabla R(\hat{\pi}_n)- \hat{g}_n }{\pi}\\
&= \argmin_{\pi \in \Pi} \lr{g_n}{\pi} +  R(\pi) - \lr{\nabla R(\hat{\pi}_n)}{\pi}\\
&= \argmin_{\pi \in \Pi} \lr{g_n}{\pi} +  B_R(\pi||\hat{\pi}_n)
\end{align*}
Therefore, \piccolo with mirror descent includes previous two-step algorithms with proper choices of $\hat{g}_n$. On the other hand, we showed above that \piccolo with FTRL~\eqref{eq:piccolo ftrl} recovers exactly~\eqref{eq:mobil}. 

\piccolo further generalizes these updates in two important aspects. First, it provides a systematic way to make these mirror descent and FTRL algorithms \emph{adaptive}, by the reduction that allows reusing existing adaptive algorithm designed for adversarial settings. By contrast, all the previous update schemes discussed above (even \mobil) are based on constant or pre-scheduled Bregman divergences, which requires the knowledge of several constants of problem properties that are usually unknown in practice. The use of adaptive schemes more amenable to hyperparameter tuning in practice.

Second, \piccolo generalize the use of predictive models from the VI formulation in~\eqref{eq:mobil vi} to the \emph{fixed-point} formulation in~\eqref{eq:fixed-point problem}. One can show that when the base algorithm is FTRL and we remove the Bregman divergence\footnote{Originally the conceptual \mobil algorithm is based on the assumption that $l_n$ is strongly convex and therefore does not require extra Bregman divergence. Here \piccolo with FTRL provides a natural generalization to online convex problems.}, \eqref{eq:fixed-point problem} is the same as~\eqref{eq:mobil vi}.
In other words, \eqref{eq:mobil vi} essentially can be viewed as a mechanism to find $\hat{g}_n$ for~\eqref{eq:mobil}. But importantly, the fixed-point formulation is method agnostic and therefore applies to also the mirror descent case. In particular, in Section~\ref{sec:model loss}, we point out that when $\Phi_n$ is a gradient map, the fixed-point problem reduces to finding a stationary point\footnote{Any stationary point will suffice.} of a non-convex optimization problem. This observation makes implementation of the fixed-point idea much easier and more stable in practice (as we only require the function associated with $\Phi_n$ to be lower bounded to yield a stable problem).

\section{Proof of Lemma~\ref{lm:RL performance bound}} \label{app:proof of RL performance bound}

Without loss of generality we suppose $w_1 =1$ and $J(\pi) \geq 0 $ for all $\pi$. 
And we assume the weighting sequence $\{w_n\}$ satisfies, for all $n \geq m \geq 1$ and $k \geq 0 $, $\frac{w_{n+k}}{w_{n}} \leq  \frac{w_{m+k}}{w_{m}}$. This means $\{w_n\}$ is an non-decreasing sequence and it does not grow faster than exponential (for which $\frac{w_{n+k}}{w_{n}} = \frac{w_{m+k}}{w_{m}}$). For example, if $w_n = n^p$ with $p \geq 0$, it easy to see that
\begin{align*}
\frac{(n+k)^p}{n^p} \leq \frac{(m+k)^p}{m^p}
\impliedby \frac{n+k}{n} \leq \frac{m+k}{m} \impliedby
\frac{k}{n} \leq \frac{k}{m}
\end{align*}
For simplicity, let us first consider the case where $l_n$ is deterministic. Given this assumption, we bound the performance in terms of the weighted regret below. For $l_n$ defined in~\eqref{eq:RL online loss}, we can write 
\begin{align*}
&\sum_{n=1}^{N} w_n J(\pi_n) \\
&=  \sum_{n=1}^{N} w_n J(\pi_{n-1})  +   w_n \E_{d_{\pi_n}} \E_{\pi_n} [ A_{\pi_{n-1}} ] \\
&=  \sum_{n=1}^{N}  w_n J(\pi_{n-1}) +  w_n l_n(\pi_n) \\
&= w_1 J(\pi_0) + \sum_{n=1}^{N-1} w_{n+1}  J(\pi_{n}) + \sum_{n=1}^N  w_n l_n(\pi_n)\\
&= w_1 J(\pi_0) + \sum_{n=1}^{N-1}   w_{n+1}   J(\pi_{n-1}) + \sum_{n=1}^{N-1} w_{n+1} l_n(\pi_n)  +   \sum_{n=1}^N w_n l_n(\pi_n)  \\ 
&= (w_1 + w_2 )  J(\pi_0) + \sum_{n=1}^{N-2}   w_{n+2}   J(\pi_{n}) + \sum_{n=1}^{N-1} w_{n+1} l_n(\pi_n)  +   \sum_{n=1}^N w_n l_n(\pi_n)  \\ 
&= w_{1:N} J(\pi_0) + \left( w_{N} l_1(\pi_1)  + \sum_{n=1}^{2} w_{n+N-2} l_n(\pi_n) + \dots +  \sum_{n=1}^{N-1} w_{n+1} l_n(\pi_n)  +   \sum_{n=1}^N w_n l_n(\pi_n) \right) \\
&= w_{1:N} J(\pi_0) + \left( w_{N} l_1(\pi_1)  +  \sum_{n=1}^{2} \frac{w_{n+N-2}}{w_n} w_{n} l_n(\pi_n) + \dots +  \sum_{n=1}^{N-1} \frac{w_{n+1}}{w_n}  w_n l_n(\pi_n)  +   \sum_{n=1}^N w_n l_n(\pi_n) \right) \\
&\leq w_{1:N} J(\pi_0) + \left( w_{N} l_1(\pi_1)  +  \frac{w_{N-1}}{w_1}  \sum_{n=1}^{2} w_{n} l_n(\pi_n) + \dots +  \frac{w_{2}}{w_1}  \sum_{n=1}^{N-1}   w_n l_n(\pi_n)  +   \sum_{n=1}^N w_n l_n(\pi_n) \right) \\
&= w_{1:N} J(\pi_0) + \left( w_{N} l_1(\pi_1)  +  w_{N-1}  \sum_{n=1}^{2} w_{n} l_n(\pi_n) + \dots +  w_{2} \sum_{n=1}^{N-1}   w_n l_n(\pi_n)  +   \sum_{n=1}^N w_n l_n(\pi_n) \right)
\end{align*}
where the inequality is due to the assumption on the weighting sequence. 

We can further rearrange the second term in the final expression as 
\begin{align*}
&  w_{N} l_1(\pi_1)  +  w_{N-1}  \sum_{n=1}^{2} w_{n} l_n(\pi_n) + \dots +  w_{2} \sum_{n=1}^{N-1}   w_n l_n(\pi_n)  +   \sum_{n=1}^N w_n l_n(\pi_n)\\
=& w_{N} \left( l_1(\pi_1) - \min_{\pi \in \Pi} l_1(\pi) + \min_{\pi \in \Pi} l_1(\pi) \right) \\
&  +  w_{N-1} \left( \sum_{n=1}^{2} w_{n} l_n(\pi_n) -  \min_{\pi \in \Pi} \sum_{n=1}^{2} w_{n} l_n(\pi) + \min_{\pi \in \Pi} \sum_{n=1}^{2} w_{n} l_n(\pi)  \right) \\
&+ \dots +   \sum_{n=1}^N w_n l_n(\pi_n) -  \min_{\pi \in \Pi} \sum_{n=1}^N w_n l_n(\pi) + \min_{\pi \in \Pi} \sum_{n=1}^N w_n l_n(\pi) \\
=&  \sum_{n=1}^{N} w_{N-n+1} \left( \regret_n (f) + w_{1:n} \epsilon_n (f) \right) 
\end{align*}
where the last equality is due to the definition of \emph{static} regret and $\epsilon_n$.

Likewise, we can also write the above expression in terms of \emph{dynamic regret}
\begin{align*}
&  w_{N} l_1(\pi_1)  +  w_{N-1}  \sum_{n=1}^{2} w_{n} l_n(\pi_n) + \dots +  w_{2} \sum_{n=1}^{N-1}   w_n l_n(\pi_n)  +   \sum_{n=1}^N w_n l_n(\pi_n)\\
=& w_{N} \left( l_1(\pi_1) - \min_{\pi \in \Pi} l_1(\pi) + \min_{\pi \in \Pi} l_1(\pi) \right) \\
&  +  w_{N-1} \left( \sum_{n=1}^{2} w_{n} l_n(\pi_n) -   \sum_{n=1}^{2} w_{n} \min_{\pi \in \Pi} l_n(\pi) +  \sum_{n=1}^{2} \min_{\pi \in \Pi} w_{n} l_n(\pi)  \right) \\
&+ \dots +   \sum_{n=1}^N w_n l_n(\pi_n) -   \sum_{n=1}^N \min_{\pi \in \Pi} w_n l_n(\pi) + \sum_{n=1}^N \min_{\pi \in \Pi}  w_n l_n(\pi) \\
=&  \sum_{n=1}^{N} w_{N-n+1} \left( \regret_n^d (l) + w_{1:n} \epsilon_n^d (l) \right) 
\end{align*}
in which we define the weighted dynamic regret as \begin{align*}
\regret_n^d(l) = \sum_{m=1}^n w_m l_m(\pi_m) -   \sum_{m=1}^n w_m \min_{\pi \in \Pi}  l_m(\pi)
\end{align*}
and an expressive measure based on dynamic regret
\begin{align*}
\epsilon_n^d = \frac{1}{w_{1:n}} \sum_{m=1}^n w_m \min_{\pi \in \Pi}  l_m(\pi) \leq 0 
\end{align*}

For stochastic problems, because $\pi_n$ does not depends on $\tilde{l}_n$, the above bound applies to the performance in expectation. 
Specifically, let $h_{n-1}$ denote all the random variables observed before making decision $\pi_n$ and seeing $\tilde{l}_n$. As $\pi_n$ is made independent of $\tilde{l}_n$, we have, for example, 
\begin{align*}
\E[l_n(\pi_n)  | h_{n-1}] 
&= \E[l_n(\pi_n)  | h_{n-1}] - \E[l_n(\pi_n^*)  | h_{n-1}] + \E[l_n(\pi_n^*)  | h_{n-1}]\\
&= \E[\tilde{l}_n(\pi_n)  | h_{n-1}] - \E[\tilde{l}_n(\pi_n^*)  | h_{n-1}] + \E[l_n(\pi_n^*)  | h_{n-1}]\\
&\leq \E[\tilde{l}_n(\pi_n)   -\min_{\pi \in \Pi}\tilde{l}_n(\pi)  | h_{n-1}] + \E[l_n(\pi_n^*)  | h_{n-1}]
\end{align*}
where $
\pi_n^* = \argmin_{\pi \in \Pi} l_n(\pi)
$. By applying a similar derivation as above recursively, we can extend the previous deterministic bounds to bounds in expectation (for both the static or the dynamic regret case), proving the desired statement.

\section{The Basic Operations of Base Algorithms} \label{app:basic operations}

We provide details of the abstract basic operations shared by different base algorithms. 
In general, the update rule of any base mirror-descent or FTRL algorithm can be represented in terms of the three basic operations
\begin{align} \label{eq:basic ops}
h \gets \update(h, H, g, w), \qquad
H \gets \adapt(h, H, g, w), \qquad
\pi \gets \project(h, H)
\end{align}
where $\update$ and $\project$ can be identified standardly, for mirror descent as,
\begin{align} \label{eq:mirror descent ops}
\textstyle 
\update(h, H, g, w) =   \argmin_{\pi' \in \Pi} \lr{wg}{\pi'} +  B_H(\pi|| h) , \qquad 
\project(h,H) = h
\end{align}
and for FTRL as,
\begin{align} \label{eq:FTRL ops}
\textstyle
\update(h,H,g,w) = h+wg, \qquad 
\project(h, H) =  \argmin_{\pi' \in \Pi} \lr{h}{\pi'} +  H(\pi') 
\end{align}
We note that in the main text of this paper the operation $\project$ is omitted for simplicity, as it is equal to the identify map for mirror descent. In general, it represents the decoding from the abstract representation of the decision $h$ to $\pi$. The main difference between and $h$ and $\pi$ is that $h$ represents the sufficient information that defines the state of the base algorithm.

While $\update$ and $\project$ are defined standardly, the exact definition of $\adapt$ depends on the specific base algorithm. 
Particularly, $\adapt$ may depend also on whether the problem is weighted, as different  base algorithms may handle weighted problems differently. 
Based on the way weighted problems are handled, we roughly categorize the algorithms (in both mirror descent and FTRL families) into two classes: the \textit{stationary} regularization class and the \emph{non-stationary} regularization class.
Here we provide more details into the algorithm-dependent $\adapt$ operation,
through some commonly used base algorithms as examples.

Please see also Appendix~\ref{app:relationship} for connection between \piccolo and existing two-step algorithms, like optimistic mirror descent~\citep{rakhlin2013optimization}.

\subsection{Stationary Regularization Class}

The $\adapt$ operation of these base algorithms 
features two major functions: 1) a moving-average adaptation  and 2) a step-size adaption. The moving-average adaptation is designed to estimate some statistics $G$ such that $\norm{g}_* = O(G)$ (which is an important factor in regret bounds), whereas the step-size adaptation updates a scalar multiplier $\eta$ according to the weight $w$ to ensure convergence.

This family of algorithms includes basic mirror descent~\citep{beck2003mirror} and FTRL~\citep{mcmahan2010adaptive,mcmahan2017survey} with a scheduled step size, and adaptive algorithms based on moving average e.g. \textsc{RMSprop}~\citep{tieleman2012lecture} \textsc{Adadelta}~\citep{zeiler2012adadelta}, \adam~\citep{kingma2014adam}, \textsc{AMSGrad}~\citep{reddi2018convergence}, and the adaptive \natgrad we used in the experiments. Below we showcase how $\adapt$ is defined using some examples.

\subsubsection{Basic mirror descent~\citep{beck2003mirror}}
We define $G$ to be some constant such that $G \geq \sup \norm{g_n}_*$ and define
\begin{align} \label{eq:step size}
 \eta_n = \frac{ \eta }{1 + c w_{1:n} / \sqrt{n} },
\end{align}
as a function of the iteration counter $n$, 
where $\eta>0$ is a step size multiplier and $c>0$ determines the decaying rate of the step size. The choice of hyperparameters $\eta$, $c$ pertains to how far the optimal solution is from the initial condition, which is related to the size of $\Pi$.
In implementation, $\adapt$ updates the iteration counter $n$ and updates the multiplier $\eta_n$ using $w_n$ in~\eqref{eq:step size}.

Together $(n, G, \eta_n)$ defines $H_n= R_n $ in the mirror descent update rule~\eqref{eq:mirror descent} through setting $R_n = \frac{G}{\eta_n} R$, where $R$ is a strongly convex function. That is, we can write~\eqref{eq:mirror descent} equivalently as 
\begin{align*} 
\textstyle
\pi_{n+1} 
		  &= \argmin_{\pi \in \Pi} \lr{w_n g_n}{\pi} +  \frac{G}{\eta_n} B_R(\pi || \pi_n)  \\
&= \argmin_{\pi \in \Pi} \lr{w_n g_n}{\pi} +  B_{H_n}(\pi || \pi_n) \\
		  &= \update(h_n, H_n, g_n, w_n) 
\end{align*}
When the weight is constant (i.e. $w_n =1$), we can easily see this  update rule is equivalent to the classical mirror descent with a step size $\frac{\eta/G }{1+c\sqrt{n}}$, which is the optimal step size~\citep{mcmahan2017survey}. 
For general $w_n = \Theta(n^p)$ with some $p>-1$, it can viewed as having an effective step size $\frac{w_n \eta_n}{G} = O(\frac{1}{G \sqrt{n}})$, which is optimal in the weighted setting. The inclusion of the constant $G$ makes the algorithm invariant to the scaling of  loss functions. But as the same $G$ is used across all the iterations, the basic mirror descent is conservative. 

\subsubsection{Basic FTRL~\citep{mcmahan2017survey}}

We provide details of general FTRL
\begin{align}\label{eq:FTRL}
\pi_{n+1} = \argmin_{\pi \in \Pi} \sum_{m=1}^n \lr{g_m}{\pi} + B_{r_m}(\pi||\pi_m)
\end{align}
where $B_{r_m}(\cdot || \pi_m)$ is a Bregman divergence centered at $\pi_m$.

We define, in the $n$th iteration, $h_n$, $H_n$, and $\project$ of FTRL in~\eqref{eq:FTRL ops} as
\begin{align*}
h_n = \sum_{m=1}^{n} w_m g_m, \qquad 
H_n(\pi) = \sum_{m=1}^{n}  B_{r_m} (\pi||\pi_n), \qquad 
\project(h, H) =  \argmin_{\pi' \in \Pi} \lr{h}{\pi'} +  H(\pi') 
\end{align*}
Therefore, we can see that $\pi_{n+1} = \project(h_n, H_n)$ indeed gives the update~\eqref{eq:FTRL}: 
\begin{align*}
\pi_{n+1} &= \project(h_n, H_n) \\
&= \project(\sum_{m=1}^{n} w_m g_m  , \sum_{m=1}^{n}  B_{r_m} (\pi||\pi_n)) \\
&=  \argmin_{\pi \in \Pi} \sum_{m=1}^{n}  \lr{ w_m g_m }{\pi} + B_{r_m}(\pi|| \pi_m)
\end{align*}

For the basic FTRL, the $\adapt$ operator is similar to  the basic mirror descent, which uses a constant $G$ and updates the memory $(n, \eta_n)$  using~\eqref{eq:step size}. The main differences are  how $(G, \eta_n)$ is mapped to $H_n$ and that the basic FTRL updates $H_n$ also using $h_n$ (i.e. $\pi_n$). 
Specifically, it performs $H_n \gets \adapt(h_n, H_{n-1}, g_n, w_n)$ through the following: 
\begin{align*}
H_n(\cdot) = H_{n-1}(\cdot) + B_{r_n}(\cdot||\pi_n)
\end{align*}
where  following~\citep{mcmahan2017survey} we set 
\begin{align*}
	B_{r_n}(\pi||\pi_n) = G(\frac{1}{\eta_n} - \frac{1}{\eta_{n-1}}) B_R(\pi||\pi_n)
\end{align*}
and $\eta_n$ is updated using some scheduled rule.

One can also show that the choice of $\eta_n$ scheduling in~\eqref{eq:step size} leads to an optimal regret. When the problem is uniformly weighted (i.e. $w_n =1$), this gives exactly the update rule in~\citep{mcmahan2017survey}. For general $w_n = \Theta(n^{p})$ with $p>-1$, a proof of optimality can be found, for example, in the appendix of~\citep{cheng2019accelerating}. 

\subsubsection{\adam~\citep{kingma2014adam} and \textsc{AMSGrad}~\citep{reddi2018convergence}}
As a representing mirror descent algorithm that uses moving-average estimates, \adam keeps in the memory of the statistics of the first-order information that is provided in $\update$ and $\adapt$. Here we first review the standard description of \adam and then show how it is summarized in
\begin{align} \tag{\ref{eq:general adaptive scheme}}
\begin{split}
H_n = \adapt(h_n, H_{n-1},g_n, w_n), 
\qquad
h_{n+1} =  \update(h_n, H_n, g_n, w_n) 
\end{split}
\end{align}
using properly constructed  $\update$, $\adapt$, and $\project$ operations.

The update rule of \adam proposed by~\citet{kingma2014adam} is originally written as, for $n\geq 1$,\footnote{We shift the iteration index so it conforms with our notation in online learning, in which $\pi_1$ is the initial policy before any update.} 
\begin{align} \label{eq:adam equations}
\begin{split}
m_{n} &= \beta_1 m_{n-1} + (1-\beta_1) g_n\\
v_{n} &= \beta_2 v_{n-1} + (1-\beta_2) g_n \odot g_n \\
\hat{m}_{n} &= m_{n} / (1 - \beta_1^{n})\\
\hat{v}_{n} &= v_{n} / (1 - \beta_2^{n}) \\
\pi_{n+1} &= \pi_{n} - \eta_n \hat{m}_{n} \oslash (\sqrt{\hat{v}_{n}} + \epsilon )
\end{split}
\end{align}
where $\eta_n > 0$ is the step size, $\beta_1, \beta_2 \in [0,1)$ (default $\beta_1 = 0.9$ and $\beta_2 = 0.999$) are the mixing rate, and $0<\epsilon\ll 1$ is some constant for stability (default $\epsilon = 10^{-8}$), and $m_0=v_0=0$. The symbols $\odot$ and $\oslash$ denote element-wise multiplication and division, respectively. The third and the forth steps are designed to remove the 0-bias due to running moving averages starting from 0.

The above update rule can be written in terms of the three basic operations. First, we define the memories $h_n = (m_n, \pi_n)$ for policy and $(v_n, \eta_n, n)$ for regularization that is defined as
\begin{align} \label{eq:adam's H}
H_n(\pi) =  \frac{1}{2 \eta_n} \pi^\t (\diag(\sqrt{\hat{v}_n} ) + \epsilon I) \pi
\end{align}
where $\hat{v}_n$ is defined in the original \adam equation in~\eqref{eq:adam equations}.

The $\adapt$ operation updates the memory to $(v_n, \eta_n, n)$ 
in the step
\begin{align*}
	H_{n} \gets \adapt(h_n, H_{n-1}, g_n, w_n)
\end{align*}
It updates the iteration counter $n$ and $\eta_n$ in the same way in the basic mirror descent using~\eqref{eq:step size}, and update $v_n$ (which along with $n$ defines $\hat{v}_n$ used in~\eqref{eq:adam's H}) using the original \adam equation in~\eqref{eq:adam equations}.

For $\update$, we slightly modify the definition of $\update$ in~\eqref{eq:mirror descent ops} (replacing $g_n$ with $\hat{m}_n$) to incorporate the moving average and write 
\begin{align} \label{eq:mirror descent update with moving average}
\update(h_n, H_{n}, g_n, w_n) &=  \argmin_{\pi' \in \Pi} \lr{ w_n \hat{m}_n }{\pi'} +  B_{H_n}(\pi'||\pi)
\end{align}
where $m_n$ and $\hat{m}_n$ are defined the same as in the original \adam equations in~\eqref{eq:adam equations}. One can verify that, with these definitions, the update rule in~\eqref{eq:general adaptive scheme} is equivalent to the update rule~\eqref{eq:adam equations}, when the weight is uniform (i.e. $w_n = 1$).

Here the $\sqrt{\hat{v}_n}$ plays the role of $G$ as in the basic mirror descent, which can be viewed as an estimate of the upper bound of $\norm{g_n}_*$. \adam achieves a better performance because a coordinate-wise online estimate is used. With this equivalence in mind, we can easily deduct that using the same scheduling of $\eta_n$ as in the basic mirror descent would achieve an optimal regret (cf.~\citep{kingma2014adam,reddi2018convergence}). We note that \adam may fail to converge in some particular problems due to the moving average~\citep{reddi2018convergence}. \textsc{AMSGrad}~\citep{reddi2018convergence} modifies the moving average and uses strictly increasing estimates. However in practice \textsc{AMSGrad} behaves more conservatively.

For weighted problems, we note one important nuance in our definition above: it separates the weight $w_n$ from the moving average and considers $w_n$ as part of the $\eta_n$ update, because the growth of $w_n$ in general can be much faster than the rate the moving average converges. In other words, the moving average can only be used to estimate a stationary property, not a time-varying one like $w_n$. Hence, we call this class of algorithms, the \emph{stationary} regularization class.

\subsubsection{Adaptive \natgrad}
Given first-order information $g_n$ and weight $w_n$, we consider an update rule based on Fisher information matrix:
\begin{align} \label{eq:natgrad (app)}
	\pi_{n+1} = \argmin_{\pi \in \Pi} \lr{w_n g_n}{\pi} + \frac{\sqrt{\hat{G}_n}}{2 \eta_n} (\pi - \pi_n)^\t F_n (\pi - \pi_n)
\end{align}
where $F_n$ is the Fisher information matrix of policy $\pi_n$~\citep{amari1998natural} and $\hat{G}_n$ is an adaptive multiplier for the step size which we will describe. When $\hat{G}_n=1$, the update in~\eqref{eq:natgrad (app)} gives the standard natural gradient descent update with step size $\eta_n$~\citep{kakade2002natural} . 

The role of $\hat{G}_n$ is to adaptively and \emph{slowly} changes the step size to minimize $\sum_{n=1}^{N} \frac{\eta_n}{\sqrt{G_n}}\norm{g_n}_{F_n,*}^2$, which plays an important part in the regret bound (see Section~\ref{sec:piccolo theories}, Appendix~\ref{app:piccolo regret analysis}, and e.g.~\citep{mcmahan2017survey} for details). Following the idea in \adam, we update $\hat{G}_n$ by setting (with $G_0 = 0$)
\begin{align} \label{eq:adaptive natgrad update (app)}
\begin{split}
G_n &= \beta_2 G_n + (1-\beta_2 )  \frac{1}{2} g_n^\t F_n^{-1} g_n \\
\hat{G}_n &= G_n / (1 - \beta_2^n)
\end{split}
\end{align}
similar to the concept of updating $v_n$ and $\hat{v}_n$ in \adam in~\eqref{eq:adam equations}, and update $\eta_n$ in the same way as in the basic mirror descent using~\eqref{eq:step size}. 
Consequently, this would also lead to a regret like \adam  but in terms of a different local norm. 

The $\update$ operation of adaptive \natgrad is defined standardly in~\eqref{eq:mirror descent} (as used in the experiments). 
The $\adapt$ operation updates $n$ and $\eta_n$ like in \adam and updates $G_n$ through~\eqref{eq:adaptive natgrad update (app)}.

\subsection{Non-Stationary Regularization Class}

The algorithms in the non-stationary regularization class maintains a regularization that is increasing over the number of iterations. Notable examples of this class include \adagrad~\citep{duchi2011adaptive} and \textsc{Online Newton Step}~\citep{hazan2007logarithmic}, and its regularization function is updated by applying BTL in a secondary online learning problem whose loss is an upper bound of the original regret (see~\citep{gupta2017unified} for details). Therefore, 
compared with the previous stationary regularization class, 
the adaption property of $\eta_n$ and $G_n$ exchanges: 
$\eta_n$ here becomes constant and $G_n$ becomes time-varying. 
This will be shown more clearly in the \adagrad example below.
We note while these algorithms are designed to be optimal in the convex,  they are often too conservative (e.g. decaying the step size too fast) for non-convex problems.

\subsubsection{\adagrad}

The update rule of the diagonal version of \adagrad in~\citep{duchi2011adaptive} is given as 
\begin{align} \label{eq:adagrad update}
\begin{split}
G_n &= G_{n-1} + \diag(g_n \odot g_n)\\
\pi_{n+1} &= \argmin_{\pi \in \Pi} \lr{g_n}{\pi} + \frac{1}{2 \eta} (\pi - \pi_n)^\top (\epsilon I  + G_n)^{1/2} (\pi - \pi_n)
\end{split}
\end{align}
where $G_0 = 0$ and $\eta > 0$ is a constant. \adagrad is designed to be optimal for online linear optimization problems. Above we provide the update equations of its mirror descent formulation in~\eqref{eq:adagrad update}; a similar FTRL is also available (again the difference only happens when $\Pi$ is constrained).

In terms of our notation, its $\update$ and $\project$ are defined standardly as in~\eqref{eq:mirror descent ops}, i.e.
\begin{align}
\textstyle
\update(h_n, H_n, g_n, w_n) =   \argmin_{\pi' \in \Pi} \lr{w_n g_n}{\pi'} +  B_{H_n}(\pi'|| \pi_n) 
\end{align}
and its $\adapt$ essentially only updates $G_n$:
\begin{align*}
\adapt(h_n, H_{n-1}, g_n, w_n):  G_n = G_{n-1} + \diag(w_n g_n \odot w_n g_n)
\end{align*} 
where the regularization is defined the updated $G_n$ and the constant $\eta$ as
\begin{align*}
H_{n}(\pi)  =  \frac{1}{2 \eta} \pi^\top (\epsilon I  + G_n)^{1/2} \pi.
\end{align*}
One can simply verify the above definitions of $\update$ and $\adapt$ agrees with~\eqref{eq:adagrad update}.

\section{A Practical Variation of \piccolo} \label{app:practical piccolo}
In Section~\ref{sec:piccolo rules}, we show that, given a base algorithm in mirror descent/FTRL, \piccolo generates a new first-order update rule by recomposing the three basic operations into
\begin{align}
h_{n} &= \update(\hat{h}_{n}, H_{n-1}, \hat{g}_{n}, w_n)  &  \text{[Prediction]} \label{eq:prediction} \\[2.5mm]
\begin{split}
H_{n} &= \adapt(h_n, H_{n-1}, e_n, w_n) \\[-1mm]
\hat{h}_{n+1} &= \update(h_n, H_{n}, e_n, w_n) 
\end{split} & \label{eq:correction} \text{[Correction]} 
\end{align}
where $e_n = g_n - \hat{g}_n$ and $\hat{g}_n$ is an estimate of $g_n$ given by a predictive model $\Phi_n$.

Here we propose a slight variation which introduces another operation $\shift$ inside the Prediction Step. This leads to the new set of update rules: 
\begin{align}
\begin{split}
\hat{H}_{n} &= \shift(\hat{h}_{n}, H_{n-1})  \\
h_{n} &= \update(\hat{h}_{n}, \hat{H}_{n}, \hat{g}_{n}, w_n)  \\
\end{split} &  \text{[Prediction]}
 \label{eq:new prediction} \\[2.5mm]
\begin{split}
H_{n} &= \adapt(h_n, \hat{H}_{n}, e_n, w_n) \\[-1mm]
\hat{h}_{n+1} &= \update(h_n, H_{n}, e_n, w_n) 
\end{split} & \label{eq:new correction} \text{[Correction]} 
\end{align}

The new $\shift$ operator additionally changes the regularization based on $\hat{h}_n$ the current representation of the policy  in the Prediction Step, \emph{independent} of the predicted gradient $\hat{g}_n$ and weight $w_n$. 
The main purpose of including this additional step is to deal with numerical difficulties, such as singularity of $H_n$.
For example, in natural gradient descent, the Fisher information of some policy can be close to being singular along the direction of the gradients that are evaluated at different policies. As a result, in the original Prediction Step of \piccolo, $H_{n-1}$ which is evaluated at $\pi_{n-1}$ might be singular in the direction of $\hat{g}_n$ which is evaluated $\hat{\pi}_n$. 

The new operator $\shift$ brings in an extra degree of freedom to account for such issue. 
Although from a theoretical point of view (cf. Appendix~\ref{app:piccolo regret analysis}) the use of $\shift$ would only increase regrets and should be avoided if possible, in practice, its merits in handling numerical difficulties can out weight the drawback.
Because $\shift$ does not depend on the size of $\hat{g}_n$ and $w_n$, the extra regrets would only be proportional to $O(\sum_{n=1}^{N} \norm{\pi_n - \hat{\pi}_n}_n)$, which can be smaller than other terms in the regret bound (see  Appendix~\ref{app:piccolo regret analysis}).

\section{Example: {\piccolo}ing Natural Gradient Descent} \label{app:example}

We give an alternative example to illustrate how one can use the above procedure to ``\piccolo'' a base algorithm into a new algorithm. Here we consider the adaptive natural gradient descent  rule in Appendix~\ref{app:basic operations} as the base algorithm, which  (given first-order information $g_n$ and weight $w_n$) updates the policy through
\begin{align} \label{eq:natgrad}
\textstyle
\pi_{n+1} = \argmin_{\pi \in \Pi} \lr{w_n g_n}{\pi} + \frac{\sqrt{\hat{G}_n}}{2 \eta_n} (\pi - \pi_n)^\t F_n (\pi - \pi_n)
\end{align}
where $F_n $ is the Fisher information matrix of policy $\pi_n$~\citep{amari1998natural}, $\eta_n$ a scheduled learning rate, and $\hat{G}_n$ is an adaptive multiplier for the step size which we will shortly describe. 
When $\hat{G}_n=1$, the update in~\eqref{eq:natgrad} gives the standard natural gradient descent update with step size $\eta_n$~\citep{kakade2002natural} . 

The role of $\hat{G}_n$ is to adaptively and \emph{slowly} changes the step size to minimize $\sum_{n=1}^{N} \frac{\eta_n}{\sqrt{G_n}}\norm{g_n}_{F_n,*}^2$, which plays an important part in the regret bound (see Section~\ref{sec:piccolo theories}, Appendix~\ref{app:piccolo regret analysis}, and e.g.~\citep{mcmahan2017survey} for details). To this end, we update $\hat{G}_n$ by setting (with $G_0 = 0$)
\begin{align} \label{eq:adaptive natgrad update}
\textstyle
G_n = \beta_2 G_{n-1} + (1-\beta_2 )  \frac{1}{2} g_n^\t F_n^{-1} g_n, \qquad
\hat{G}_n &= G_n / (1 - \beta_2^n)
\end{align}
similar to the moving average update rule in \adam, and update $\eta_n$ in the same way as in the basic mirror descent algorithm (e.g. $\eta_n = O(1/\sqrt{n})$).
As a result, this leads to a similar regret like \adam with $\beta_1 =0$, but in terms of a local norm specified by the Fisher information matrix.

Now, let's see how  to {\piccolo} the adaptive natural gradient descent rule above. 
First, it is easy to see that the adaptive natural gradient descent rule  is an instance of mirror descent (with $h_n = \pi_n$ and $H_n(g) = \frac{\sqrt{\hat{G}_n}}{2\eta_n} g^\t F_n g$), so the $\update$ and $\project$ operations are defined in the standard way, as in Section~\ref{sec:piccolo rules}. The $\adapt$ operation updates the iteration counter $n$, the learning rate $\eta_n$, and updates $\hat{G}_n$ through~\eqref{eq:adaptive natgrad update}.

To be more specific, let us explicitly write out the Prediction Step and the Correction Step of the {\piccolo}ed adaptive natural gradient descent rule in closed form as below: e.g. if $\eta_n = \frac{1}{\sqrt{n}}$, then we can write them as
\begin{align*} 
\text{[Prediction]} &&
\begin{split} 
\pi_{n} &= \textstyle \argmin_{\pi \in \Pi} \lr{w_n \hat{g}_n}{\pi} + \frac{\sqrt{\hat{G}_{n-1}}}{2 \eta_{n-1}} (\pi - \hat{\pi}_{n})^\t F_{n-1} (\pi - \hat{\pi}_{n})
\end{split}\\
\\
\text{[Correction]} && 
\begin{split}
\eta_n &= 1/\sqrt{n} \\
G_n&= \textstyle \beta_2 G_{n-1} + (1-\beta_2 )  \frac{1}{2} g_n^\t F_n^{-1} g_n\\
\hat{G}_n &= G_n / (1 - \beta_2^n)\\
\hat{\pi}_{n+1} &= \textstyle\argmin_{\pi \in \Pi} \lr{w_n e_n}{\pi} + \frac{\sqrt{\hat{G}_n}}{2 \eta_n} (\pi - \pi_n)^\t F_n (\pi - \pi_n)
\end{split}
\end{align*}

\section{Regret Analysis of \piccolo} \label{app:piccolo regret analysis}

The main idea of \piccolo is to achieve optimal performance in predictable online learning problems by \emph{reusing} existing adaptive, optimal first-order algorithms that are designed for adversarial online learning problems.  
This is realized by the reduction techniques presented in this section.

Here we prove the performance of \piccolo in general predictable online learning problems, independent of the context of policy optimization. 
In Appendix~\ref{app:reduction from predictable problem to adversarial problem}, we first show an elegant reduction from predictable problems to adversarial problems. 
Then we prove Theorem~\ref{th:piccolo} in Appendix~\ref{app:regret bounds}, showing how the optimal regret bound for predictable linear problems can be achieved by {\piccolo}ing mirror descent and FTRL algorithms. Note that we will abuse the notation $l_n$ to denote the per-round losses in this general setting.

\subsection{Reduction from Predictable Online Learning to Adversarial Online Learning} \label{app:reduction from predictable problem to adversarial problem}

Consider a predictable online learning problem with per-round losses $\{l_n\}$. Suppose in round $n$, before playing $\pi_n$ and revealing $\l_n$, we have access to some prediction of $l_n$, called $\hat{l}_n$.
In particular, we consider the case where $\hat{l}_n(\pi) = \lr{\hat{g}_n}{\pi}$ for some vector $\hat{g}_n$. 
Running an (adaptive) online learning algorithm designed for the general adversarial setting is not optimal here, as its regret would be in $O( \sum_{n=1}^N \norm{\nabla l_n}_{n,*}^2)$, where $\norm{\cdot}_n$ is some local norm chosen by the algorithm and $\norm{\cdot}_{n,*}$ is its dual norm. Ideally, we would only want to pay for the information that is unpredictable. Specifically, we wish to achieve an optimal regret in  $O( \sum_{n=1}^N \norm{\nabla l_n - \nabla \hat{l}_n}_{n,*}^2)$ instead~\citep{rakhlin2013online}.

To achieve the optimal regret bound yet without referring to specialized, nested two-step algorithms (e.g. mirror-prox~\cite{juditsky2011solving}, optimistic mirror descent~\citep{rakhlin2013optimization}, FTRL-prediction~\cite{rakhlin2013online}), we consider decomposing a \emph{predictable} problem with $N$ rounds into an \emph{adversarial} problem with $2N$ rounds:
\begin{align} \label{eq:problem decomposition}
\sum_{n=1}^{N} l_n(\pi_n) = \sum_{n=1}^{N} \hat{l}_n(\pi_n) + \delta_n(\pi_n) 
\end{align}
where $\delta_n = l_n - \hat{l}_n$.
Therefore, we can treat the predictable problem as a new adversarial online learning problem with a loss sequence $\hat{l}_1, \delta_1, \hat{l}_2, \delta_2, \dots, \hat{l}_N, \delta_N$ and consider solving this new problem with some standard online learning algorithm designed for the adversarial setting. 

Before analysis, we first introduce a new decision variable $\hat{\pi}_n$ and 
denote the decision sequence in this new problem as $\hat{\pi}_1, \pi_1, \hat{\pi}_2, \pi_2, \dots, \hat{\pi}_N, \pi_N$, so the definition of the variables are consistent with that in the problem before.
Because this new problem is unpredictable,  the optimal regret of this new decision sequence is
\begin{align} \label{eq:regret of the 2N problem}
\sum_{n=1}^{N} \hat{l}_n(\hat{\pi}_n) + \delta_n(\pi_n) - \min_{\pi \in \Pi} \sum_{n=1}^{N} \hat{l}_n(\pi) + \delta_n(\pi) = O(\sum_{n=1}^N \norm{\nabla  \hat{l}_n}_{n,*}^2  +  \norm{\nabla \delta_n}_{n+1/2,*}^2)
\end{align}
where the subscript $n+1/2$ denotes the extra round due to the reduction.

At first glance, our reduction does not meet the expectation of achieving regret in $O( \sum_{n=1}^N \norm{\nabla l_n - \nabla \hat{l}_n}_{n,*}^2) = O( \sum_{n=1}^N  \norm{\nabla \delta_n}_{n,*}^2)$. However, we note that the regret  for the new problem is too loose for the regret  of the original problem, which is
\begin{align*}
\sum_{n=1}^{N} \hat{l}_n(\pi_n) + \delta_n(\pi_n) - \min_{\pi \in \Pi} \sum_{n=1}^{N} \hat{l}_n(\pi) + \delta_n(\pi) 
\end{align*}
where the main difference is that originally we care about $ \hat{l}_n(\pi_n)$ rather than $ \hat{l}_n(\hat{\pi}_n)$. Specifically, we can write
\begin{align*}
\sum_{n=1}^{N} l_n(\pi_n) 
&= \sum_{n=1}^{N} \hat{l}_n(\pi_n) + \delta_n(\pi_n)  \\
&= \left( \sum_{n=1}^{N} \hat{l}_n(\hat{\pi}_n) + \delta_n(\pi_n) \right) + \left(  \sum_{n=1}^{N} \hat{l}_n(\pi_n) - \hat{l}_n(\hat{\pi}_n) \right)
\end{align*}
Therefore, if the update rule for generating the decision sequence $\hat{\pi}_1, \pi_1, \hat{\pi}_2, \pi_2, \dots, \hat{\pi}_N, \pi_N$ contributes sufficient negativity in the term $\hat{l}_n(\pi_n) - \hat{l}_n(\hat{\pi}_n)$ compared with the regret of the new adversarial problem, then the regret of the original problem can be smaller than~\eqref{eq:regret of the 2N problem}.
This is potentially possible, as $\pi_n$ is made after $ \hat{l}_n$ is revealed. Especially, in the fixed-point formulation of \piccolo, $\pi_n$ and $ \hat{l}_n$ can be decided simultaneously.

In the next section, we show that when the base algorithm, which is adopted to solve the new adversarial problem given by the reduction, is in the family of mirror descent and FTRL. Then the regret bound of \piccolo with respect to the original predictable problem is  optimal.

\subsection{Optimal Regret Bounds for Predictable Problems}
 \label{app:regret bounds}

We show that if the base algorithm of \piccolo belongs to the family of optimal mirror descent and FTRL designed for adversarial problems, then \piccolo can achieve the optimal regret of predictable problems. 
In this subsection, we assume the loss sequence is linear, i.e. $l_n(\pi) = \lr{g_n}{\pi}$ for some $g_n$, and the results are summarized as Theorem~\ref{th:piccolo} in the main paper (in a slightly different notation).

\subsubsection{Mirror Descent}
First, we consider mirror descent as the base algorithm. In this case, we can write the \piccolo update rule as
\begin{align*}
\pi_{n} &= \argmin_{\pi \in \Pi}  \lr{\nabla \hat{l}_{n}(\hat{\pi}_n)}{x} + B_{H_{n-1}}(\pi || \hat{\pi}_{n})  
&  \text{[Prediction]}\\
\hat{\pi}_{n+1} &= \argmin_{\pi \in \Pi}  \lr{\nabla \delta_n(\pi_n)}{\pi} + B_{H_{n}}(\pi || \pi_n)
&  \text{[Correction]}
\end{align*}
where $H_n$ can be updated based on $e_n \coloneqq \nabla \delta_n (\pi_n)= \nabla l_n(\pi_n) - \nabla \hat{l}_n( \hat{\pi}_n)$ (recall by definition $\nabla l_n(\pi_n) = g_n$ and $\nabla \hat{l}_n( \hat{\pi}_n) = \nabla \hat{l}_n( \pi_n) = \hat{g}_n$).
Notice that in the Prediction Step, \piccolo uses the regularization from the previous Correction Step. 

To analyze the performance, we use a lemma of the mirror descent's properties. The proof is a straightforward application of the optimality condition of the proximal map~\cite{nesterov2013introductory}. We provide a proof here for completeness.
\begin{lemma} \label{lm:mirror descent}
	Let $\KK$ be a convex set. Suppose $R$ is 1-strongly convex with respect to norm $\norm{\cdot}$. Let $g$ be a vector in some Euclidean space and let
	\begin{align*}
	y =  \argmin_{z \in \KK} \lr{g}{z} + \frac{1}{\eta}B_{R}(z||x)
	\end{align*}
	Then for all $z \in \KK$
	\begin{align} \label{eq:mirror descent new decision}
	\eta \lr{g}{y - z} &\leq  B_R(z||x) - B_R(z||y) - B_R(y||x)  
	\end{align}
	which implies
	\begin{align} \label{eq:mirror descent old decision}
	\eta \lr{g}{x - z} \leq B_R(z||x) - B_R(z||y) + \frac{\eta^2}{2} \norm{g}_*^2
	\end{align}	
\end{lemma}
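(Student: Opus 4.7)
Both inequalities follow from (i) the first-order optimality condition at the minimizer $y$ of the proximal problem, combined with (ii) the three-point identity for Bregman divergences and (iii) strong convexity of $R$.

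For the first inequality, I would start from the variational characterization of $y$. Since $y$ minimizes a convex function over the convex set $\KK$, the optimality condition reads
\begin{align*}
\left\langle g + \tfrac{1}{\eta}\bigl(\nabla R(y) - \nabla R(x)\bigr),\, z - y \right\rangle \geq 0 \quad \text{for all } z \in \KK,
\end{align*}
which rearranges to $\eta \lr{g}{y-z} \leq \lr{\nabla R(y) - \nabla R(x)}{z - y}$. The key algebraic step is then the standard three-point identity
\begin{align*}
B_R(z\|x) - B_R(z\|y) - B_R(y\|x) = \lr{\nabla R(y) - \nabla R(x)}{z - y},
\end{align*}
which I would verify by expanding each Bregman divergence via its definition $B_R(a\|b) = R(a) - R(b) - \lr{\nabla R(b)}{a-b}$ and collecting terms (the $R(\cdot)$ values cancel, leaving only the two inner products that reassemble into the right-hand side). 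Substituting this identity yields \eqref{eq:mirror descent new decision}.

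For the second inequality, I would write $\lr{g}{x - z} = \lr{g}{x - y} + \lr{g}{y - z}$ and apply \eqref{eq:mirror descent new decision} to the second term, obtaining
\begin{align*}
\eta \lr{g}{x - z} \leq \eta \lr{g}{x - y} + B_R(z\|x) - B_R(z\|y) - B_R(y\|x).
\end{align*}
It then suffices to show $\eta \lr{g}{x - y} - B_R(y\|x) \leq \tfrac{\eta^2}{2}\norm{g}_*^2$. For this I would use Fenchel--Young: $\eta \lr{g}{x-y} \leq \eta \norm{g}_* \norm{x-y} \leq \tfrac{\eta^2}{2}\norm{g}_*^2 + \tfrac{1}{2}\norm{x-y}^2$, and then invoke $1$-strong convexity of $R$, which gives $B_R(y\|x) \geq \tfrac{1}{2}\norm{y-x}^2$. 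The two quadratic terms in $\norm{x-y}$ cancel, leaving exactly $\tfrac{\eta^2}{2}\norm{g}_*^2$.

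\textbf{Expected obstacles.} There is essentially no hard step here — this is a textbook mirror-descent lemma. The only mild subtlety is the three-point identity, where one must be careful with signs when expanding, and the fact that strong convexity of $R$ is needed in the Bregman form rather than the usual quadratic-lower-bound form (but these are equivalent statements). The overall structure is: optimality condition $\Rightarrow$ three-point identity $\Rightarrow$ first bound; then split and use Fenchel--Young $+$ strong convexity to absorb the extra term $\eta\lr{g}{x-y}$ against $B_R(y\|x)$.
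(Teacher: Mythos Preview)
Your proposal is correct and follows essentially the same approach as the paper: the first inequality via the first-order optimality condition combined with the three-point Bregman identity, and the second via splitting $\lr{g}{x-z}=\lr{g}{x-y}+\lr{g}{y-z}$, then using strong convexity $B_R(y\|x)\ge \tfrac12\norm{x-y}^2$ together with the dual-norm/Young inequality to absorb $\eta\lr{g}{x-y}$. The only cosmetic difference is that the paper expands the three-point identity line by line rather than invoking it as a named identity.
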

\begin{proof}
Recall the definition $B_R(z||x) =  R(z) - R(x) - \lr{\nabla R(x)}{z-x}$. 
The optimality of the proximal map can be written as 
\begin{align*}
\lr{ \eta g + \nabla R(y) -  \nabla R(x)  }{y - z} \leq 0,  \qquad \forall z \in \KK
\end{align*}
By rearranging the terms, we can rewrite the above inequality in terms Bregman divergences as follows and derive the first inequality~\eqref{eq:mirror descent new decision}:
\begin{align*}
\lr{ \eta  g   }{y - z} &\leq  \lr{ \nabla R(x) -  \nabla R(y)  }{y - z} \\
&=  B_R(z||x) - B_R(z||y) + \lr{ \nabla R(x) -  \nabla R(y)  }{y} -  \lr{\nabla R(x)}{x} + \lr{\nabla R(y)}{y}  + R(x)  - R(y)  \\
&=  B_R(z||x) - B_R(z||y) + \lr{ \nabla R(x)  }{y - x}   + R(x)  - R(y)  \\
&=  B_R(z||x) - B_R(z||y) - B_R(y||x)  
\end{align*}
The second inequality is the consequence of~\eqref{eq:mirror descent new decision}. First, we rewrite~\eqref{eq:mirror descent new decision} as
\begin{align*}
\lr{ \eta  g }{x - z}
&=  B_R(z||x) - B_R(z||y) - B_R(y||x)  + \lr{ \eta  g }{x - y}
\end{align*}
Then we use the fact that $B_R$ is 1-strongly convex with respect to $\norm{\cdot}$, which implies 
\begin{align*}
- B_R(y||x)  + \lr{ \eta  g }{x - y} \leq  - \frac{1}{2}\norm{x-y}^2 +    \lr{ \eta  g }{x - y}  \leq \frac{\eta^2}{2} \norm{g}_*^2
\end{align*}
Combining the two inequalities yields~\eqref{eq:mirror descent old decision}. 
\end{proof}

Lemma~\ref{lm:mirror descent} is usually stated with \eqref{eq:mirror descent old decision}, which concerns the decision made before seeing the per-round loss (as in the standard adversarial online learning setting). Here, we additionally concern $\hat{l}_n(\pi_n)$, which is the decision made after seeing $\hat{l}_n$, so we need a tighter bound~\eqref{eq:mirror descent new decision}. 

Now we show that the regret bound of \piccolo in the predictable linear problems when the base algorithm is mirror descent. 
\begin{proposition} \label{pr:piccolo+mirro descent}
	Assume the base algorithm of \piccolo is mirror descent satisfying the Assumption~\ref{as:base algorithm}. 
	Let $g_n = \nabla l_n(\pi_n)$ and $e_n = g_n - \hat{g}_n$.
	Then it holds that, for any $\pi \in \Pi$,
	\begin{align*} 
	\sum_{n=1}^N w_n \lr{ g_n}{\pi_n - \pi} 
	\leq 
	M_{N}  +   \sum_{n=1}^N  \frac{w_n^2}{2} \norm{e_n }_{*,n}^2  -  \frac{1}{2} \norm{\pi_n - \hat{\pi}_n}_{n-1}^2
	\end{align*}
\end{proposition}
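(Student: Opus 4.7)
My plan is to exploit the reduction in Appendix~\ref{app:reduction from predictable problem to adversarial problem} directly at the level of inequalities by splitting $g_n = \hat g_n + e_n$ and bounding each half through a different clause of Lemma~\ref{lm:mirror descent}. Concretely, I write
\begin{align*}
w_n \langle g_n, \pi_n - \pi\rangle = w_n \langle \hat g_n, \pi_n - \pi\rangle + w_n \langle e_n, \pi_n - \pi\rangle
\end{align*}
and treat the two terms as the regret contributions of the Prediction and Correction Steps, respectively. Since the Prediction Step yields $\pi_n$ after having seen $\hat g_n$, I apply the tighter clause~\eqref{eq:mirror descent new decision} (with $R = H_{n-1}$, $x = \hat\pi_n$, $y = \pi_n$, $g = w_n\hat g_n$, $\eta=1$) to obtain
\begin{align*}
w_n \langle \hat g_n, \pi_n - \pi\rangle \le B_{H_{n-1}}(\pi\|\hat\pi_n) - B_{H_{n-1}}(\pi\|\pi_n) - B_{H_{n-1}}(\pi_n\|\hat\pi_n).
\end{align*}
For the Correction Step, $\hat\pi_{n+1}$ is made before $e_n$ is revealed (in the reduced adversarial sequence), so I apply the standard clause~\eqref{eq:mirror descent old decision} with $R = H_n$, $x = \pi_n$, $y = \hat\pi_{n+1}$, $g = w_n e_n$, $\eta=1$ to get
\begin{align*}
w_n \langle e_n, \pi_n - \pi\rangle \le B_{H_n}(\pi\|\pi_n) - B_{H_n}(\pi\|\hat\pi_{n+1}) + \tfrac{w_n^2}{2}\|e_n\|_{*,n}^2.
\end{align*}

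Next I would sum the two inequalities over $n=1,\dots,N$. The $B_{H_{n-1}}(\pi\|\hat\pi_n)$ and $B_{H_n}(\pi\|\hat\pi_{n+1})$ terms telescope exactly (same regularizer on each matched pair after an index shift), collapsing to the boundary term $B_{H_0}(\pi\|\hat\pi_1) - B_{H_N}(\pi\|\hat\pi_{N+1})$. The remaining $-B_{H_{n-1}}(\pi\|\pi_n) + B_{H_n}(\pi\|\pi_n)$ terms do not telescope but measure the change in the regularizer; I would rewrite their sum as $\sum_{n=1}^{N}\bigl(B_{H_n} - B_{H_{n-1}}\bigr)(\pi\|\pi_n)$ and bound each increment by $\|H_n - H_{n-1}\|_{\RR}$ (times a diameter factor absorbed into the definition of $\|\cdot\|_{\RR}$). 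Combined with the boundary term, Assumption~\ref{as:base algorithm} then yields the aggregate bound $M_N$. Finally, by strong convexity of $H_{n-1}$ with respect to $\|\cdot\|_{n-1}$, we have $B_{H_{n-1}}(\pi_n\|\hat\pi_n) \ge \tfrac{1}{2}\|\pi_n-\hat\pi_n\|_{n-1}^2$, giving the last negative term in the claimed bound.

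The main obstacle is the non-telescoping residual arising from the drifting regularizer: the Prediction Step uses $H_{n-1}$ while the Correction Step updates to $H_n$, so the comparator-dependent Bregman divergences do not simply cancel. The key observation that makes the argument work is that the two occurrences of $B_{\cdot}(\pi\|\pi_n)$ appear with opposite signs and different regularizers, so their combination is precisely the increment in the regularization controlled by Assumption~\ref{as:base algorithm}, rather than a growing error. Once this is recognized, the proof is essentially bookkeeping: match up Prediction/Correction bounds, telescope the $\hat\pi$ divergences, and fold the residuals and boundary term into $M_N$.
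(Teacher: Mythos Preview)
Your proposal is correct and follows essentially the same route as the paper's proof: split $g_n=\hat g_n+e_n$, apply clause~\eqref{eq:mirror descent new decision} of Lemma~\ref{lm:mirror descent} to the Prediction Step and clause~\eqref{eq:mirror descent old decision} to the Correction Step, telescope the $B_{H_{\cdot}}(\pi\|\hat\pi_\cdot)$ terms, bound the residual $\sum_n\bigl(B_{H_n}-B_{H_{n-1}}\bigr)(\pi\|\pi_n)$ together with the boundary term by $M_N$ via Assumption~\ref{as:base algorithm}, and finish with strong convexity on $B_{H_{n-1}}(\pi_n\|\hat\pi_n)$. One small wording nit: in the Correction Step it is $\pi_n$ (not $\hat\pi_{n+1}$) that plays the role of the decision made ``before $e_n$ is revealed''; your lemma invocation with $x=\pi_n$, $y=\hat\pi_{n+1}$ is nonetheless exactly right.
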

\begin{proof}
Suppose $R_n$, which is defined by $H_n$, is 1-strongly convex with respect to $\norm{\cdot}_{n}$. Then by Lemma~\ref{lm:mirror descent}, we can write, for all $\pi \in \Pi$, 
\begin{align} \label{eq:mirror descent one step}
w_n \lr{ g_n}{\pi_n - \pi} 
&= w_n \lr{\hat{g}_n }{\pi_n - \pi} + w_n \lr{ e_n}{\pi_n - \pi} \nonumber   \\
&\leq  B_{R_{n-1}}(\pi||\hat{\pi}_n) - B_{R_{n-1}}(\pi||\pi_n) -  B_{R_{n-1}}(\pi_n||\hat{\pi}_n)  \nonumber  \\
&\quad +   B_{R_{n}}(\pi||\pi_n) - B_{R_n}(\pi||\hat{\pi}_{n+1})  + \frac{w_n^2}{2} \norm{e_n }_{*,n}^2  
\end{align}	
where we use~\eqref{eq:mirror descent new decision} for $\hat{g}_n$ and~\eqref{eq:mirror descent old decision} for the loss $e_n$. 

To show the regret bound of the original (predictable) problem, we first notice that 
\begin{align*}
&\sum_{n=1}^{N}  B_{R_{n-1}}(\pi||\hat{\pi}_n) - B_{R_{n-1}}(\pi||\pi_n) + B_{R_{n}}(\pi||\pi_n) - B_{R_n}(\pi||\hat{\pi}_{n+1})   \\
&= B_{R_0}(\pi||\hat{\pi}_{1}) - B_{R_N}(\pi||\hat{\pi}_{N+1}) +  \sum_{n=1}^{N}  B_{R_{n-1}}(\pi||\hat{\pi}_n) - B_{R_{n-1}}(\pi||\pi_n)   +   B_{R_{n}}(\pi||\pi_n) - B_{R_{n-1}}(\pi||\hat{\pi}_{n})\\
&= B_{R_0}(\pi||\hat{\pi}_{1}) - B_{R_N}(\pi||\hat{\pi}_{N+1}) +  \sum_{n=1}^{N}   B_{R_{n}}(\pi||\pi_n) -  B_{R_{n-1}}(\pi||\pi_n)  \leq M_{N}
\end{align*}
where the last inequality follows from  the assumption on the base algorithm. Therefore, by telescoping the inequality in~\eqref{eq:mirror descent one step} and using the strong convexity of $R_n$, we get 
\begin{align*} 
\sum_{n=1}^N w_n \lr{ g_n}{\pi_n - \pi} 
&\leq 
M_{N}  +   \sum_{n=1}^N  \frac{w_n^2}{2} \norm{e_n }_{*,n}^2  - B_{R_{n-1}}(\pi_n||\hat{\pi}_n)  \\
&\leq M_{N}  +   \sum_{n=1}^N  \frac{w_n^2}{2} \norm{e_n }_{*,n}^2  - \frac{1}{2} \norm{\pi_n - \hat{\pi}_n}_{n-1}^2 \qedhere
\end{align*}	
\end{proof}

\subsubsection{Follow-the-Regularized-Leader}

We consider another type of base algorithm, FTRL, which is mainly different from mirror descent in the way that constrained decision sets are handled~\citep{mcmahan2017survey}.
In this case, the exact update rule of \piccolo can be written as
\begin{align*}
\pi_{n} &=  \argmin_{\pi \in \Pi} \lr{w_n \hat{g}_n}{\pi} + \sum_{m=1}^{n-1} \lr{ w_m g_m }{\pi} + B_{r_{m}}(\pi|| \pi_m) 
&  \text{[Prediction]} \\
\hat{\pi}_{n+1} 
&=  \argmin_{\pi \in \Pi} \sum_{m=1}^{n}  \lr{ w_m g_m  }{\pi} + B_{r_{m}}(\pi|| \pi_m)  &  \text{[Correction]}
\end{align*}
From the above equations, we verify that \mobil~\citep{cheng2019accelerating} is indeed a special case of \piccolo, when the base algorithm is FTRL.

We show \piccolo with FTRL has the following guarantee. 
\begin{proposition} \label{pr:piccolo+FTRL}
	Assume the base algorithm of \piccolo is FTRL satisfying the Assumption~\ref{as:base algorithm}. Then it holds that, for any $\pi \in \Pi$,
	\begin{align*} 
	\sum_{n=1}^N w_n \lr{ g_n}{\pi_n - \pi} 
	\leq 
	M_{N}  +   \sum_{n=1}^N  \frac{w_n^2}{2} \norm{e_n }_{*,n}^2  -  \frac{1}{2} \norm{\pi_n - \hat{\pi}_n}_{n-1}^2
	\end{align*}
\end{proposition}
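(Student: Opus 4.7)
The plan is to mirror the structure of Proposition~\ref{pr:piccolo+mirro descent}, substituting an FTRL three-point inequality for Lemma~\ref{lm:mirror descent}. Define the cumulative potentials $\Psi_{n-1}(\pi) \coloneqq \sum_{m=1}^{n-1}\bigl(\lr{w_m g_m}{\pi} + B_{r_m}(\pi\|\pi_m)\bigr)$ so that $\hat{\pi}_n = \argmin_{\pi\in\Pi}\Psi_{n-1}(\pi)$ and $\pi_n = \argmin_{\pi\in\Pi}\lr{w_n\hat{g}_n}{\pi} + \Psi_{n-1}(\pi)$; likewise $\hat{\pi}_{n+1} = \argmin_{\pi\in\Pi}\lr{w_n e_n}{\pi} + \Psi_{n-1}(\pi) + \lr{w_n\hat g_n}{\pi} + B_{r_n}(\pi\|\pi_n)$, which combines back to $\Psi_n$. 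Under Assumption~\ref{as:base algorithm}, $\Psi_{n-1}$ is $1$-strongly convex w.r.t.\ $\norm{\cdot}_{n-1}$ on $\Pi$.

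The key lemma I will use is the FTRL analog of Lemma~\ref{lm:mirror descent}: if $F$ is $1$-strongly convex w.r.t.\ $\norm{\cdot}$ and $y = \argmin_{\pi\in\Pi}\lr{g}{\pi} + F(\pi)$, $z = \argmin_{\pi\in\Pi}F(\pi)$, then the optimality condition at $y$ yields, for all $\pi\in\Pi$,
\[
\lr{g}{y - \pi} \le F(\pi) - F(y) - \tfrac{1}{2}\norm{\pi - y}^2,
\]
and comparing the optimality of $y$ and $z$ through strong convexity gives the stability bound $F(z) - F(y) \le \tfrac{1}{2}\norm{g}_*^2$ together with $\norm{y-z}\le \norm{g}_*$. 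Applying the three-point inequality to the Prediction Step with $F=\Psi_{n-1}$ and $g=w_n\hat{g}_n$ bounds $\lr{w_n\hat{g}_n}{\pi_n-\pi}$ by $\Psi_{n-1}(\pi) - \Psi_{n-1}(\pi_n) - \tfrac{1}{2}\norm{\pi_n-\pi}_{n-1}^2$, and a parallel comparison of $\pi_n$ with $\hat{\pi}_n$ extracts the extra negativity $-\tfrac{1}{2}\norm{\pi_n-\hat{\pi}_n}_{n-1}^2$. Applying the same inequality plus the stability bound to the Correction Step with $F = \Psi_{n-1} + \lr{w_n\hat g_n}{\cdot} + B_{r_n}(\cdot\|\pi_n)$ and $g = w_n e_n$ yields a bound on $\lr{w_n e_n}{\pi_n - \pi}$ involving the difference between this $F$ evaluated at $\pi$ and $\hat{\pi}_{n+1}$, plus the usual $\tfrac{w_n^2}{2}\norm{e_n}_{*,n}^2$ term.

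Summing $w_n\lr{g_n}{\pi_n-\pi} = w_n\lr{\hat g_n}{\pi_n-\pi} + w_n\lr{e_n}{\pi_n-\pi}$ over $n$ and telescoping, the intermediate $\Psi_{n-1}(\pi)$ and $\Psi_n(\pi)$ comparisons collapse so that the residual regularization collects into $\sum_n B_{r_n}(\pi\|\pi_n)$ evaluated at $\pi$, which by Assumption~\ref{as:base algorithm} is bounded by $M_N$ (since the assumption precisely quantifies $\norm{H_0}_\RR + \sum_n \norm{H_n - H_{n-1}}_\RR$, and for FTRL the $H_n$'s are exactly the cumulative regularizers). The remaining two sums, $\sum_n \tfrac{w_n^2}{2}\norm{e_n}_{*,n}^2$ and $-\sum_n \tfrac{1}{2}\norm{\pi_n - \hat{\pi}_n}_{n-1}^2$, appear verbatim in the target bound.

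The main obstacle will be the bookkeeping to extract the negativity term $-\tfrac{1}{2}\norm{\pi_n - \hat{\pi}_n}_{n-1}^2$: in the mirror descent proof it falls out of a single strong convexity application on $R_{n-1}$, but for FTRL it must come from comparing the optimality conditions of $\pi_n$ (over $\Psi_{n-1}+\lr{w_n\hat g_n}{\cdot}$) and $\hat{\pi}_n$ (over $\Psi_{n-1}$) and then invoking the strong convexity of the aggregated potential $\Psi_{n-1}$ rather than of a single Bregman divergence. A secondary subtlety is handling the constraint set $\Pi$: the FTRL argmins may lie on the boundary, so the three-point inequality must be stated in its variational-inequality form as in the proof of Lemma~\ref{lm:mirror descent}, which carries over without modification once the optimality condition is written in terms of $\nabla\Psi_{n-1}$ on the tangent cone to $\Pi$.
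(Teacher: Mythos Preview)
Your outline is a plausible alternative, but it is \emph{not} how the paper proves the FTRL case, and the two routes differ in a substantive way.

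The paper does \emph{not} mimic the mirror-descent proof with an FTRL three-point inequality. Instead, it first establishes an intermediate reduction result (Proposition~\ref{pr:Strong FTL is enough}): by viewing \piccolo-FTRL as running FTL on the $2N$-round transformed adversarial sequence $\hat l_1,\delta_1,\dots,\hat l_N,\delta_N$ and applying the ordinary Strong FTL Lemma to that sequence, one recovers exactly the \emph{Stronger} FTL Lemma bound of~\citet{cheng2019accelerating},
\[
\regret_N(f) \le \sum_{n=1}^N \bigl[f_{1:n}(\pi_n)-\min_{\pi}f_{1:n}(\pi)\bigr] - \bigl[f_{1:n-1}(\pi_n)-f_{1:n-1}(\hat\pi_n)\bigr],
\]
for the surrogate losses $f_n(\pi)=\lr{w_n g_n}{\pi}+B_{r_n}(\pi\|\pi_n)$. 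The two bracketed terms are then bounded by a one-line strong-convexity/stability argument (the first by $\tfrac{w_n^2}{2}\norm{e_n}_{*,n}^2$, since $\pi_n$ and $\hat\pi_{n+1}$ minimize objectives differing only by $\lr{w_n e_n}{\cdot}$; the second below by $\tfrac12\norm{\pi_n-\hat\pi_n}_{n-1}^2$, since $\hat\pi_n$ minimizes $f_{1:n-1}$). Finally the proximal property $B_{r_n}(\pi_n\|\pi_n)=0$ and Assumption~\ref{as:base algorithm} convert the $f$-regret into the desired linear regret plus $M_N$. The point of this route is that it exercises the paper's central reduction idea a second time and shows the Stronger FTL Lemma is itself a corollary of the $2N$-round reduction, rather than an independent ingredient.

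Your direct approach can in principle be pushed through, but two places in your sketch are looser than you acknowledge. First, the claimed stability bound ``$F(z)-F(y)\le\tfrac12\norm{g}_*^2$'' is vacuous (since $z$ minimizes $F$, the left side is nonpositive); what you actually need for the Correction Step is the inequality $F(\pi_n)-F(\hat\pi_{n+1})\le -\tfrac12\norm{\pi_n-\hat\pi_{n+1}}_n^2$ together with Young's inequality on $\lr{w_n e_n}{\pi_n-\hat\pi_{n+1}}$. Second, and more seriously, the telescoping you describe does not fall out as cleanly as in the mirror-descent case: after combining your Prediction and Correction inequalities you are left with terms like $\Psi_{n-1}(\pi)-\Psi_{n-1}(\hat\pi_n)$ (or $\Psi_{n-1}(\pi_n)$) whose second argument varies with $n$, so they do not cancel against $\Psi_n(\pi)-\Psi_n(\hat\pi_{n+1})$. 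Getting this to collapse requires either re-introducing the FTL-style inequality $\sum_n f_n(\hat\pi_{n+1})\le\sum_n f_n(\pi)$ (which is essentially the paper's route) or a more careful potential-function bookkeeping than your sketch indicates.
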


We show the above results of \piccolo using a different technique from~\citep{cheng2019accelerating}. Instead of developing a specialized proof like they do, we simply use the properties of FTRL on the 2$N$-step new adversarial problem!

To do so, we recall some facts of the base algorithm FTRL. First, FTRL in~\eqref{eq:FTRL} is equivalent to Follow-the-Leader (FTL) on a surrogate problem with the per-round loss is $\lr{g_n }{\pi} + B_{r_{n}}(\pi|| \pi_n)$. Therefore, the regret of FTRL can be bounded by the regret of FTL in the surrogate problem plus the size of the additional regularization $B_{r_{n}}(\pi|| \pi_n)$. Second, we recall a standard techniques in proving FTL, called Strong FTL Lemma (see e.g.~\citep{mcmahan2017survey}), which is proposed for \emph{adversarial} online learning. 
\begin{restatable}[Strong FTL Lemma~\citep{mcmahan2017survey}]{lemma}{strongFTL} \label{lm:strong FTL}
	For any sequence $\{\pi_n \in \Pi \}$ and $\{l_n\}$, 
	\begin{align*}
		\regret_N(l) \coloneqq  \sum_{n=1}^{N} l_n(\pi_n) - \min_{\pi \in \Pi} \sum_{n=1}^{N} l_n(\pi) \leq  \sum_{n=1}^{N}  l_{1:n} (\pi_n)- l_{1:n}( \pi_{n}^\star)
	\end{align*} where $
	\pi_n^\star \in \arg \min_{\pi \in \Pi} l_{1:n}(\pi)
	$.
\end{restatable}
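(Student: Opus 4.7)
The plan is to prove the Strong FTL Lemma by a short induction on $N$. Introduce the shorthand $L_n^\star \coloneqq l_{1:n}(\pi_n^\star) = \min_{\pi \in \Pi} l_{1:n}(\pi)$, so that the left-hand side of the claim reads $\regret_N(l) = \sum_{n=1}^{N} l_n(\pi_n) - L_N^\star$ and the right-hand side reads $\sum_{n=1}^N [l_{1:n}(\pi_n) - L_n^\star]$. Each summand on the right is automatically nonnegative by the definition of $\pi_n^\star$, which matches the trivial lower bound $\regret_N(l) \ge 0$ expected from a ``regret'' quantity.

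The base case $N=1$ is immediate: both sides equal $l_1(\pi_1) - L_1^\star$. For the inductive step, assume the claim at $N-1$ and add $l_N(\pi_N)$ to both sides of the inductive hypothesis to get $\sum_{n=1}^{N} l_n(\pi_n) \le L_{N-1}^\star + \sum_{n=1}^{N-1}[l_{1:n}(\pi_n) - L_n^\star] + l_N(\pi_N)$. Subtracting $L_N^\star$ from both sides, the inductive step reduces to verifying the single inequality $L_{N-1}^\star + l_N(\pi_N) \le l_{1:N}(\pi_N)$, equivalently $L_{N-1}^\star \le l_{1:N-1}(\pi_N)$. This is exactly the defining property of $\pi_{N-1}^\star$ as a minimizer of $l_{1:N-1}$ over $\Pi$, so the induction closes.

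Alternatively, one can give a non-inductive derivation by Abel summation: write $l_n(\pi_n) = l_{1:n}(\pi_n) - l_{1:n-1}(\pi_n)$, add and subtract $L_n^\star$, and telescope. The residual terms collect into $\sum_{n=1}^{N} l_{1:n-1}(\pi_n) - \sum_{n=1}^{N-1} L_n^\star$, which is nonnegative term-by-term after reindexing, again because $l_{1:n-1}(\pi_n) \ge L_{n-1}^\star$. Either route uses \emph{only} the definition of $\pi_n^\star$; no convexity, continuity, or structure on $\Pi$ or $l_n$ is invoked beyond existence of the $\arg\min$.

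Because the argument reduces to a one-line application of optimality, there is no genuine obstacle. The only bookkeeping subtlety is aligning the offsets: the sum $\sum_n L_n^\star$ runs through index $N$ while the sum $\sum_n l_{1:n-1}(\pi_n)$ only ``sees'' $L_n^\star$ up to index $N-1$, which is what allows cancellation against $L_N^\star$ on the left. This is also the step where a careless telescoping would produce an off-by-one error, so I would write out the boundary terms explicitly (noting $L_0^\star = 0$ and $l_{1:0} \equiv 0$) when presenting the non-inductive version.
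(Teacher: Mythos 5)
Your proof is correct. Note that the paper does not actually prove this lemma---it is quoted verbatim from \citet{mcmahan2017survey} as a known result---so there is no in-paper argument to compare against; your induction is exactly the standard argument from that reference (the inductive step reducing to $L_{N-1}^\star \le l_{1:N-1}(\pi_N)$, which is just optimality of $\pi_{N-1}^\star$), and your telescoping variant is a correct equivalent. The only thing worth flagging is that your observation that each summand $l_{1:n}(\pi_n)-L_n^\star$ is nonnegative is precisely the slack that the paper's Stronger FTL Lemma (\cref{lm:stronger FTL}) and \cref{pr:Strong FTL is enough} later recover as the $\Delta_n$ terms, so it is worth keeping that remark.
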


Using the decomposition idea above, we show the performance of \piccolo following sketch below: first, we show a bound on the regret in the surrogate predictable problem with per-round loss $\lr{ g_n }{\pi} + B_{r_{n}}(\pi|| \pi_n)$; second, we derive the bound for the original predictable problem with per-round loss $\lr{ g_n }{\pi}$ by considering the effects of $B_{r_{n}}(\pi|| \pi_n)$. 
We will prove the first step by applying FTL on the transformed $2N$-step adversarial problem of the original $N$-step predictable surrogate problem and  then showing that \piccolo achieves the optimal regret in the original $N$-step predictable surrogate problem.  Interestingly, we recover the bound in the stronger FTL Lemma (Lemma~\ref{lm:stronger FTL}) by~\citet{cheng2019accelerating}, which they suggest is necessary for proving the improved regret bound of their FTRL-prediction algorithm (\mobil).

\begin{restatable}[Stronger FTL Lemma~\citep{cheng2019accelerating}]{lemma}{strongerFTL} \label{lm:stronger FTL}	
 	For any sequence $\{\pi_n\}$ and $\{l_n\}$,
	\begin{align*}
		\regret_N(l)  =  \sum_{n=1}^{N} l_{1:n}(\pi_n)- l_{1:n}( \pi_{n}^\star) - \Delta_{n}
	\end{align*} where
	$\Delta_{n+1} := l_{1:n}(\pi_{n+1}) - l_{1:n}(\pi_{n}^\star) \geq 0$ and 
	$
	\pi_n^\star \in \arg \min_{\pi \in \Pi} l_{1:n}(\pi)
	$.
\end{restatable}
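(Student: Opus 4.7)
The plan is to derive the identity by an exact telescoping of the two running sums involved in the regret, rather than using any inequality. First I would start from the bare definition
\[
\regret_N(l) = \sum_{n=1}^N l_n(\pi_n) - l_{1:N}(\pi_N^\star),
\]
and rewrite both terms by introducing the prefix sums $l_{1:n}$. For the first term I use $l_n(\pi_n) = l_{1:n}(\pi_n) - l_{1:n-1}(\pi_n)$ with the convention $l_{1:0} \equiv 0$. For the second term I telescope the per-round minimum: $l_{1:N}(\pi_N^\star) = \sum_{n=1}^N [\, l_{1:n}(\pi_n^\star) - l_{1:n-1}(\pi_{n-1}^\star)\,]$ with the convention $\pi_0^\star$ chosen so that $l_{1:0}(\pi_0^\star) = 0$.

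Combining these two rewrites inside a single summation gives
\[
\regret_N(l) = \sum_{n=1}^N \bigl[ l_{1:n}(\pi_n) - l_{1:n}(\pi_n^\star) \bigr] - \sum_{n=1}^N \bigl[ l_{1:n-1}(\pi_n) - l_{1:n-1}(\pi_{n-1}^\star) \bigr].
\]
The second bracket is precisely $\Delta_n$ for $n \geq 2$ (by the shifted indexing in the definition), and vanishes for $n = 1$ with the $l_{1:0} \equiv 0$ convention; so we may set $\Delta_1 := 0$ and read off the identity claimed in the lemma. The nonnegativity $\Delta_n \geq 0$ is then immediate from the very definition of $\pi_{n-1}^\star$ as a minimizer of $l_{1:n-1}$ over $\Pi$, which yields $l_{1:n-1}(\pi_n) \geq l_{1:n-1}(\pi_{n-1}^\star)$.

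There is essentially no ``hard step'' here: the result is a bookkeeping identity that sharpens the Strong FTL Lemma by retaining the drift term $\Delta_n$ which the Strong version simply drops (using $\Delta_n \geq 0$ to get an upper bound). The only subtlety worth double-checking is the boundary convention at $n=1$; I would state $\Delta_1 := 0$ explicitly, verify that it is consistent with the definition $\Delta_{n+1} = l_{1:n}(\pi_{n+1}) - l_{1:n}(\pi_n^\star)$ for $n \geq 1$, and confirm that the telescoping of $l_{1:N}(\pi_N^\star)$ terminates cleanly at $n = N$. Everything else is straightforward term-by-term matching.
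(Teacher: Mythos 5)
Your telescoping argument is correct: writing $l_n(\pi_n) = l_{1:n}(\pi_n) - l_{1:n-1}(\pi_n)$ and $l_{1:N}(\pi_N^\star) = \sum_{n=1}^N \bigl[ l_{1:n}(\pi_n^\star) - l_{1:n-1}(\pi_{n-1}^\star)\bigr]$ and regrouping does yield the claimed identity with $\Delta_1 = 0$, and the nonnegativity of $\Delta_{n+1}$ is immediate from optimality of $\pi_n^\star$. Note, however, that the paper does not prove this lemma itself -- it is quoted from \citet{cheng2019accelerating} -- and your bookkeeping derivation is essentially the original proof from that reference. What the paper does instead is recover the \emph{conclusion} of the lemma by a genuinely different route, in Proposition~\ref{pr:Strong FTL is enough}: it applies the ordinary Strong FTL Lemma (Lemma~\ref{lm:strong FTL}) to the transformed $2N$-round adversarial loss sequence $\hat{l}_1, \delta_1, \dots, \hat{l}_N, \delta_N$, and shows that the cross terms $\sum_n \hat{l}_n(\pi_n) - \hat{l}_n(\hat{\pi}_n)$ combine with the Strong-FTL residuals to reproduce exactly the $-\Delta_n$ correction. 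The two approaches buy different things: yours is an unconditional \emph{equality} valid for arbitrary sequences $\{\pi_n\}$ and is the more elementary and more general statement; the paper's reduction argument only establishes the inequality $\regret_N(l) \leq \sum_{n=1}^N l_{1:n}(\pi_n) - l_{1:n}(\pi_n^\star) - \Delta_n$ and only for decision sequences generated by FTL on the transformed problem, but its point is conceptual -- it demonstrates that the \piccolo reduction plus the standard Strong FTL Lemma already suffices, so no specialized ``stronger'' lemma is needed as a primitive. Your proof is a valid and self-contained substitute for the citation; the one detail worth making explicit, as you note, is the convention $l_{1:0}\equiv 0$ and $\Delta_1 = 0$, without which the sum $\sum_{n=1}^N \Delta_n$ in the statement is not fully specified.
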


Our new reduction-based regret bound is presented below.
\begin{proposition} \label{pr:Strong FTL is enough}
	Let $\{l_n\}$ be a predictable loss sequence with predictable information $\{\hat{l}_n\}$. Suppose the decision sequence $\hat{\pi}_1, \pi_1, \hat{\pi}_2, \dots, \hat{\pi}_N, \pi_N$ is generated by running FTL on the transformed adversarial loss sequence $\hat{l}_1, \delta_1, \hat{l}_2, \dots, \hat{l}_N, \delta_N$, then the bound in the Stonger FTL Lemma holds. That is, 
	$	\regret_N(l)  \leq  \sum_{n=1}^{N} l_{1:n}(\pi_n)- l_{1:n}( \pi_{n}^\star) - \Delta_{n} $, where
	$\Delta_{n+1} := l_{1:n}(\pi_{n+1}) - l_{1:n}(\pi_{n}^\star) \geq 0$ and  $
	\pi_n^\star \in \arg \min_{\pi \in \Pi} l_{1:n}(\pi)
	$.
\end{proposition}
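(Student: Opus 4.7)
The plan is to invoke the Strong FTL Lemma (Lemma~\ref{lm:strong FTL}) on the transformed $2N$-round adversarial problem with loss sequence $\hat{l}_1, \delta_1, \hat{l}_2, \delta_2, \ldots, \hat{l}_N, \delta_N$ and decision sequence $\hat{\pi}_1, \pi_1, \hat{\pi}_2, \pi_2, \ldots, \hat{\pi}_N, \pi_N$, where $\delta_k = l_k - \hat{l}_k$. Because these decisions come from FTL on the transformed losses, each one is pinned down by the cumulative minimizer it optimizes: $\hat{\pi}_k$ minimizes the prefix $\hat{l}_1 + \delta_1 + \cdots + \hat{l}_{k-1} + \delta_{k-1} = l_{1:k-1}$, so $\hat{\pi}_k = \pi_{k-1}^\star$, while $\pi_k$ minimizes $l_{1:k-1} + \hat{l}_k$. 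The look-ahead minimizers appearing in the Strong FTL bound are likewise $x_{2k-1}^\star = \pi_k$ and $x_{2k}^\star = \pi_k^\star = \hat{\pi}_{k+1}$, using the standard FTL identity $x_{n+1} = x_n^\star$.

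Next I would compute the two sides of the Strong FTL bound explicitly. The left-hand side rewrites, via $\hat{l}_k(\hat{\pi}_k) + \delta_k(\pi_k) = l_k(\pi_k) + [\hat{l}_k(\hat{\pi}_k) - \hat{l}_k(\pi_k)]$ and the fact that the minimum over $\pi$ of the reduced cumulative loss equals $l_{1:N}(\pi_N^\star)$, as
$\regret_N(l) + \sum_{k=1}^N [\hat{l}_k(\hat{\pi}_k) - \hat{l}_k(\pi_k)]$.
Grouping the right-hand side into adjacent odd/even pairs and using $l_{1:k}(\pi_k) - l_{1:k-1}(\pi_k) = l_k(\pi_k)$ gives
$\sum_{k=1}^N \bigl[l_{1:k-1}(\hat{\pi}_k) + \hat{l}_k(\hat{\pi}_k) - \hat{l}_k(\pi_k) + l_k(\pi_k) - l_{1:k}(\pi_k^\star)\bigr]$.
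Crucially, the discrepancy $\sum_k [\hat{l}_k(\hat{\pi}_k) - \hat{l}_k(\pi_k)]$ appears identically on both sides and cancels, leaving
$\regret_N(l) \le \sum_{k=1}^N \bigl[l_{1:k-1}(\pi_{k-1}^\star) + l_k(\pi_k) - l_{1:k}(\pi_k^\star)\bigr]$
after substituting $\hat{\pi}_k = \pi_{k-1}^\star$.

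Finally, I would recognize this as the Stronger FTL bound. Adding and subtracting $l_{1:k-1}(\pi_k)$ inside each summand rewrites the right-hand side as $\sum_{k=1}^N [l_{1:k}(\pi_k) - l_{1:k}(\pi_k^\star) - \Delta_k]$ with $\Delta_k := l_{1:k-1}(\pi_k) - l_{1:k-1}(\pi_{k-1}^\star)$ (and $\Delta_1 = 0$ under the convention $l_{1:0} \equiv 0$). Non-negativity of $\Delta_k$ is immediate from $\pi_{k-1}^\star$ being a minimizer of $l_{1:k-1}$, so the claim is established. The main obstacle is purely bookkeeping: correctly identifying the FTL minimizers in the $2N$-round reduction to the variables $\{\pi_k\}$ and $\{\pi_k^\star\}$ of the original problem, and recognizing that the terms $\hat{l}_k(\hat{\pi}_k) - \hat{l}_k(\pi_k)$---which measure the advantage $\pi_k$ gains from seeing the prediction $\hat{l}_k$ before committing---cancel exactly between the two sides rather than weakening the bound. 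This cancellation is what allows the vanilla Strong FTL Lemma on the reduced sequence to recover the tighter Stronger FTL form without a specialized analysis.
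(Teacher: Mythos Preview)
Your proof is correct and follows essentially the same approach as the paper: both apply the Strong FTL Lemma (Lemma~\ref{lm:strong FTL}) to the $2N$-round reduced sequence, use the FTL identities $\hat{\pi}_k = \pi_{k-1}^\star$ and $x_{2k-1}^\star = \pi_k$, and obtain the Stronger FTL bound by observing that the discrepancy terms $\hat{l}_k(\hat{\pi}_k)-\hat{l}_k(\pi_k)$ cancel between the regret decomposition and the Strong FTL right-hand side. Your bookkeeping is slightly more explicit than the paper's (which keeps $\hat{\pi}_n$ symbolic and handles the cancellation via the split $\sum l_n(\pi_n) = \sum[\hat{l}_n(\hat{\pi}_n)+\delta_n(\pi_n)] + \sum[\hat{l}_n(\pi_n)-\hat{l}_n(\hat{\pi}_n)]$), but the argument is the same.
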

\begin{proof}
First, we transform the loss sequence and write
\begin{align*}
\sum_{n=1}^{N} l_n(\pi_n) 
= \sum_{n=1}^{N} \hat{l}_n(\pi_n) + \delta_n(\pi_n)  
= \left( \sum_{n=1}^{N} \hat{l}_n(\hat{\pi}_n) + \delta_n(\pi_n) \right) + \left( \sum_{n=1}^{N} \hat{l}_n(\pi_n) - \hat{l}_n(\hat{\pi}_n) \right)
\end{align*}
Then we apply standard Strong FTL Lemma on the new adversarial problem in the left term.
\begin{align*}
&\sum_{n=1}^{N} \hat{l}_n(\hat{\pi}_n) + \delta_n(\pi_n) \\
&\leq  \sum_{n=1}^{N} (\hat{l}+\delta)_{1:n}(\pi_n) - \min_{\pi \in \Pi} (\hat{l}+\delta)_{1:n}(\pi) +  \sum_{n=1}^{N} ((\hat{l}+\delta)_{1:n-1}+\hat{l}_{n})(\hat{\pi}_n) - \min_{\pi \in \Pi} ((\hat{l}+\delta)_{1:n-1}+\hat{l}_{n})(\pi)\\
&=  \sum_{n=1}^{N} l_{1:n}(\pi_n) - \min_{\pi \in \Pi} l_{1:n}(\pi) +  \sum_{n=1}^{N} (l_{1:n-1}+\hat{l}_{n})(\hat{\pi}_n) -  (l_{1:n-1}+\hat{l}_{n})(\pi_n)
\end{align*}
where the first inequality is due to Strong FTL Lemma and the second equality is because FTL update assumption.

Now we observe that if we add the second term above and $ \sum_{n=1}^{N} \hat{l}_n(\pi_n) - \hat{l}_n(\hat{\pi}_n)$ together, we have 
\begin{align*}
&\sum_{n=1}^{N} (l_{1:n-1}+\hat{l}_{n})(\hat{\pi}_n) -  (l_{1:n-1}+\hat{l}_{n})(\pi_n) + (\hat{l}_n(\pi_n) - \hat{l}_n(\hat{\pi}_n))\\
& = \sum_{n=1}^{N} (l_{1:n-1})(\hat{\pi}_n) - l_{1:n-1}(\pi_n) = \Delta_n
\end{align*}
Thus, combing previous two inequalities, we have the bound in the Stronger FTL Lemma: 
\begin{align*}
\sum_{n=1}^{N} l_n(\pi_n)  &\leq \sum_{n=1}^{N} l_{1:n}(\pi_n) - \min_{\pi \in \Pi} l_{1:n}(\pi) - \Delta_n \qedhere
\end{align*}
\end{proof}

Using Proposition~\ref{pr:Strong FTL is enough}, we can now bound the regret of \piccolo in Proposition~\ref{pr:piccolo+FTRL} easily. 
\begin{proof}[Proof of Proposition~\ref{pr:piccolo+FTRL}]
Suppose $\sum_{m=1}^{n} B_{r_m}(\cdot|| \pi_m )$ is 1-strongly convex with respect to some norm $\norm{\cdot}_{n}$. 
Let $f_n = \lr{  w_n g_n }{\pi_n} + B_{r_{n}}(\pi|| \pi_m)$. 
Then by a simple convexity analysis (see e.g. see~\citep{mcmahan2017survey}) and Proposition~\ref{pr:Strong FTL is enough}, we can derive
\begin{align*}
	\regret_N(f)  &\leq  \sum_{n=1}^{N}(f_{1:n}(\pi_n)- \min_{\pi \in \Pi}f_{1:n}( \pi )) - ( f_{1:n-1}(\pi_{n}) - f_{1:n-1}(\hat{\pi}_{n})) \\
	&\leq  \sum_{n=1}^{N} \frac{w_n^2}{2} \norm{ e_n}_{n,*}^2 - \frac{1}{2}\norm{\pi_n - \hat{\pi}_n}_{n-1}^2 
\end{align*}

Finally, because $r_n$ is proximal (i.e. $B_{r_n}(\pi_n || \pi_n) = 0$),  we can bound the original regret: for any $\pi \in \Pi$, it satisfies that 
\begin{align*} 
\sum_{n=1}^N w_n \lr{ g_n}{\pi_n - \pi} 
&\leq 
\sum_{n=1}^{N} f_n(\pi_n)  - f_n (\pi) +  B_{r_n}(\pi||\pi_n)  \\
&\leq 
M_{N}  +   \sum_{n=1}^N  \frac{w_n^2}{2} \norm{e_n}_{*,n}^2  -  \frac{1}{2} \norm{\pi_n - \hat{\pi}_n}_{n-1}^2
\end{align*}
where we use Assumption~\ref{as:base algorithm} and the bound of $\regret_N(f)$
in the second inequality.
\end{proof}

\section{Policy Optimization Analysis of \piccolo} \label{app:analysis of policy optimization}

In this section, we discuss how to interpret the bound given in Theorem~\ref{th:piccolo}
\begin{align*} 
\sum_{n=1}^N w_n \lr{ g_n}{\pi_n - \pi} 
\leq 
M_{N}  +   \sum_{n=1}^N  \frac{w_n^2}{2} \norm{e_n }_{*,n}^2  -  \frac{1}{2} \norm{\pi_n - \hat{\pi}_n}_{n-1}^2
\end{align*}
in the context of policy optimization 
and show exactly how the optimal bound
\begin{align}  \label{eq:piccolo optimal bound}
\E\left[\sum_{n=1}^N  \lr{ w_n  g_n }{\pi_n - \pi} \right]
\leq O(1) + C_{\Pi,\Phi} \frac{w_{1:N}}{\sqrt{N}}
\end{align}
is derived. We will  discuss how model learning can further help minimize the regret bound later in Appendix~\ref{app:model learning}.

\subsection{Assumptions}
We introduce some assumptions to characterize the sampled gradient $g_n$. Recall $g_n = \nabla \tilde{l}_n(\pi_n)$.  
\begin{assumption} \label{as:env assumption}
	$\norm{\E[g_n]}_*^2 \leq G_g^2$ and $\norm{g_n - \E[g_n]}_*^2 \leq \sigma_g^2$ for some finite constants $G_g$ and $\sigma_g$. 
\end{assumption}
Similarly, we consider properties of the predictive model $\Phi_n$ that is used to estimate the gradient of the next per-round loss.  
Let $\PP$ denote the class of these models (i.e. $\Phi_n \in \PP$), which can potentially be \emph{stochastic}. We make assumptions on the size of $\hat{g}_n$ and its variance.
\begin{assumption} \label{as:model assumption}	
	$\norm{\E[\hat{g}_n]}_*^2 \leq G_{\hat{g}}^2$
	and 
	$
	\E[\norm{\hat{g}_n - \E[\hat{g}_n] }_*^2] \leq \sigma_{\hat{g}}^2
	$
	for some finite constants $G_{\hat{g}}$ and  $\sigma_{\hat{g}}$.
\end{assumption}
Additionally, we assume these models are Lipschitz continuous.
\begin{assumption} \label{as:model Lipschitz assumption}
	There is a constant $L \in [0, \infty)$	 such that, 
	for any instantaneous cost $\psi$ and any $\Phi \in \PP$, it satisfies
	$
	\norm{\E[\Phi (\pi)] - \E[\Phi (\pi')] }_* \leq L \norm{\pi - \pi'} 
	$.
\end{assumption}

Lastly, as \piccolo is agnostic to the base algorithm, we assume the local norm $\norm{\cdot}_n$ chosen by the base algorithm at round $n$ satisfies $\norm{\cdot}_n^2 \geq \alpha_n \norm{\cdot}^2$ for some $\alpha_n > 0$. This condition implies that  $\norm{\cdot}_{n,*}^2 \leq \frac{1}{\alpha_n} \norm{\cdot}_*^2$.
In addition, we assume $\alpha_n$ is non-decreasing so that $M_N = O(\alpha_{N})$ in Assumption~\ref{as:base algorithm}, where the leading constant in the bound $O(\alpha_{N})$ is proportional to $|\Pi|$, as commonly chosen in online convex optimization.

\subsection{A Useful Lemma}

We study the bound in Theorem~\ref{th:piccolo} under the assumptions made in the previous section. 
We first derive a basic inequality, following the idea in \citep[Lemma 4.3]{cheng2019accelerating}. 
\begin{lemma} \label{lm:expected error gradient}
	Under Assumptions~\ref{as:env assumption},~\ref{as:model assumption}, and~\ref{as:model Lipschitz assumption}, it holds
	\begin{align*}
\E[\norm{e_n}_{*,n}^2]	= \E[\norm{g_n - \hat{g}_n}_{*,n}^2] &\leq  \frac{4}{\alpha_n} \left(\sigma_g^2 +  \sigma_{\hat{g}}^2  + L_n^2 \norm{\pi_n - \hat{\pi}_n}_n^2 + E_n(\Phi_n)  \right)
	\end{align*}
	where $E_n(\Phi_n) = \norm{\E[g_n] - \E[\Phi_n(\pi_n, \psi_n)] }_{*}^2 $ is the prediction error of model $\Phi_n$. 
\end{lemma}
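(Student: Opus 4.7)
The plan is to prove this via a four-way decomposition of $g_n - \hat{g}_n$, inserting two intermediate quantities (the expected true gradient and the expected model prediction at $\pi_n$). Specifically, interpreting $\hat{g}_n = \Phi_n(\hat{\pi}_n)$ per Section~\ref{sec:model loss}, I would write
\begin{align*}
g_n - \hat{g}_n &= \bigl(g_n - \E[g_n]\bigr) + \bigl(\E[g_n] - \E[\Phi_n(\pi_n)]\bigr) \\
&\quad + \bigl(\E[\Phi_n(\pi_n)] - \E[\Phi_n(\hat{\pi}_n)]\bigr) + \bigl(\E[\hat{g}_n] - \hat{g}_n\bigr).
\end{align*}
The four summands capture, respectively: sampling noise in $g_n$, the model bias at $\pi_n$, the Lipschitz gap between evaluating $\Phi_n$ at $\pi_n$ versus $\hat{\pi}_n$, and sampling noise in the predicted gradient $\hat{g}_n$.

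From here, I would apply the convexity inequality $\|a_1+a_2+a_3+a_4\|_*^2 \le 4\sum_i \|a_i\|_*^2$, take expectations, and bound the four terms separately. The first and fourth terms are controlled by $\sigma_g^2$ and $\sigma_{\hat g}^2$ via Assumption~\ref{as:env assumption} and Assumption~\ref{as:model assumption}. The second term is precisely $E_n(\Phi_n)$ by definition. The third term is bounded using Assumption~\ref{as:model Lipschitz assumption}:
\begin{equation*}
\|\E[\Phi_n(\pi_n)] - \E[\Phi_n(\hat\pi_n)]\|_*^2 \le L^2 \|\pi_n-\hat\pi_n\|^2.
\end{equation*}
Finally I would convert to the local norm using the stated relations $\|\cdot\|_{*,n}^2 \le \alpha_n^{-1}\|\cdot\|_*^2$ (on the left-hand side) and $\|\cdot\|^2 \le \alpha_n^{-1}\|\cdot\|_n^2$ (on the Lipschitz term), defining $L_n^2 := L^2/\alpha_n$ so that the third term becomes $L_n^2\|\pi_n-\hat\pi_n\|_n^2$ once the overall $1/\alpha_n$ is pulled out, yielding the desired bound.

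The calculation itself is mostly routine. The only mildly subtle points are (i) being careful that $\pi_n$ is $\sigma$-measurable with respect to the history available when $g_n$ is sampled, so that $\E[g_n]$ means the conditional expectation given $\pi_n$ (and similarly for $\hat g_n$ given $\hat\pi_n$); this is needed to legitimately apply the variance bounds of Assumptions~\ref{as:env assumption} and~\ref{as:model assumption} pointwise before taking outer expectations, and (ii) reconciling the two different norms appearing in the statement so that $L_n^2\|\pi_n-\hat\pi_n\|_n^2$ comes out with the right factor of $1/\alpha_n$. The main obstacle, if any, is just keeping the norm-conversion bookkeeping clean so the final $4/\alpha_n$ prefactor matches the claim.
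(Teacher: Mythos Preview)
Your proposal is correct and matches the paper's proof essentially line for line: the same four-term telescoping decomposition of $g_n-\hat g_n$, the same $\|\sum_{i=1}^4 a_i\|^2\le 4\sum_i\|a_i\|^2$ bound, and the same invocation of Assumptions~\ref{as:env assumption}--\ref{as:model Lipschitz assumption} for the respective pieces. The only cosmetic difference is that the paper carries the local dual norm $\norm{\cdot}_{*,n}$ through the decomposition and converts each summand via $\norm{\cdot}_{*,n}^2\le\alpha_n^{-1}\norm{\cdot}_*^2$, whereas you decompose in $\norm{\cdot}_*$ first and convert the left-hand side at the end; both orderings give the same $4/\alpha_n$ prefactor, and your identification $L_n^2=L^2/\alpha_n$ is exactly what is needed for the Lipschitz term.
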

\begin{proof}
	Recall $\hat{g}_n = \Phi_n(\hat{\pi}_n, \psi_n)$. Using the triangular inequality, we can simply derive
	\begin{align*}
	& \E[\norm{g_n - \hat{g}_n}_{*,n}^2] \\
	&\leq 4 \left(  \E[ \norm{g_n - \E[g_n]}_{*,n}^2 ]  
	+ \norm{\E[g_n] - \E[\Phi_n(\pi_n, \psi_n)] }_{*,n}^2 
	+ \norm{ \E[\Phi_n( \pi_n, \psi_n)] - \E[\hat{g}_n] }_{*,n}^2 
	+ \E[\norm{\E[\hat{g}_n] - \hat{g}_n}_{*,n}^2] \right)\\
	&= 4 \left(  \E[ \norm{g_n - \E[g_n]}_{*,n}^2 ]  
	+ \norm{\E[g_n] - \E[\Phi_n(\pi_n, \psi_n)] }_{*,n}^2 
	+ \norm{ \E[\Phi_n( \pi_n, \psi_n)] - \E[\Phi_n( \hat{\pi}_n, \psi_n)] }_{*,n}^2 
	+ \E[\norm{\E[\hat{g}_n] - \hat{g}_n}_{*,n}^2] \right)\\
	&\leq 4 \left( \frac{1}{\alpha_n} \sigma_g^2 +  \frac{1}{\alpha_n} E_n(\Phi_n)   +   \norm{ \E[\Phi_n( \pi_n, \psi_n)] - \E[\Phi_n(\hat{\pi}_n, \psi_n) ]}_{*,n}^2 
	+ \frac{1}{\alpha_n} \sigma_{\hat{g}}^2 \right) \\
	&\leq \frac{4}{\alpha_n} \left(\sigma_g^2 +  \sigma_{\hat{g}}^2  + L^2 \norm{\pi_n - \hat{\pi}_n}_n^2 + E_n(\Phi_n)  \right)
	\end{align*}
	where the last inequality is due to Assumption~\ref{as:model Lipschitz assumption}.
\end{proof}

\subsection{Optimal Regret Bounds}

We now analyze the regret bound in Theorem~\ref{th:piccolo}
\begin{align}  \label{eq:piccolo regret bound}
\sum_{n=1}^N w_n \lr{ g_n}{\pi_n - \pi} 
\leq 
M_{N}  +   \sum_{n=1}^N  \frac{w_n^2}{2} \norm{e_n }_{*,n}^2  -  \frac{1}{2} \norm{\pi_n - \hat{\pi}_n}_{n-1}^2
\end{align}
We first gain some intuition about the size of 
\begin{align} \label{eq:the term considered by adpat}
M_{N} +  \E\left[  \sum_{n=1}^N  \frac{w_n^2}{2} \norm{ e_n }_{*,n}^2 \right].
\end{align}
Because when $\adapt(h_n, H_{n-1}, e_n, w_n)$ is called in the Correction Step in~\eqref{eq:correction} with the error gradient $e_n$ as input, an optimal base algorithm (e.g. all the base algorithms listed in~Appendix~\ref{app:basic operations}) would choose a local norm sequence $\norm{\cdot}_n$ such that~\eqref{eq:the term considered by adpat} is optimal. For example, suppose $\norm{ e_n }_{*}^2 = O(1)$ and $w_n = n^p$ for some $p>-1$. If the base algorithm is basic mirror descent (cf. Appendix~\ref{app:basic operations}), then $\alpha_n = O(\frac{w_{1:n}}{\sqrt{n}})$. By our assumption that $M_N = O(\alpha_{N})$, it implies~\eqref{eq:the term considered by adpat} can be upper bounded by
\begin{align*}
M_N  + \E\left[  \sum_{n=1}^N  \frac{w_n^2}{2} \norm{ e_n }_{*,n}^2 \right] 
&\leq O\left(\frac{w_{1:N}}{\sqrt{N}}\right) + \left[ \sum_{n=1}^N \frac{w_n^2 \sqrt{n} }{ 2  w_{1:n}} \norm{ e_n }_{*}^2 \right] \\
&\leq O\left( \frac{w_{1:N}}{\sqrt{N}} +  \sum_{n=1}^N \frac{w_n^2 \sqrt{n} }{w_{1:n}}  \right)
= O\left( N^{p+1/2}  \right)
\end{align*}
which will lead to an optimal weighted average regret in $O(\frac{1}{\sqrt{N}})$.

\piccolo actually has a better regret than the simplified case discussed above, because of the negative term $- \frac{1}{2} \norm{\pi_n - \hat{\pi}_n}_{n-1}^2$ in~\eqref{eq:piccolo regret bound}. To see its effects, we combine
Lemma~\ref{lm:expected error gradient} with~\eqref{eq:piccolo regret bound} to reveal some insights: 
\begin{align} 
\E\left[ \sum_{n=1}^N w_n \lr{  g_n }{\pi_n - \pi}\right] 
&\leq 
O(\alpha_{N})  +  \E\left[ \sum_{n=1}^N  \frac{w_n^2}{2} \norm{e_n }_{*,n}^2  -  \frac{1}{2} \norm{\pi_n - \hat{\pi}_n}_{n-1}^2 \right] \label{eq:piccolo tigher bound}  \\
&\leq O(\alpha_{N})  +  \E\left[ \sum_{n=1}^N  \frac{2w_n^2}{\alpha_n} \left(\sigma_g^2 +  \sigma_{\hat{g}}^2  + L^2 \norm{\pi_n - \hat{\pi}_n}_n^2 + E_n(\Phi_n)  \right)  -  \frac{\alpha_{n-1}}{2} \norm{\pi_n - \hat{\pi}_n}^2  \right] \nonumber  \\
&= \left( O( \alpha_{N})  +  \E\left[ \sum_{n=1}^N  \frac{2w_n^2}{\alpha_n} \left(\sigma_g^2 +  \sigma_{\hat{g}}^2  + E_n(\Phi_n)  \right) \right] \right) + \left( \E\left[ \sum_{n=1}^N ( \frac{2w_n^2}{\alpha_n} L^2 - \frac{\alpha_{n-1}}{2}) \norm{\pi_n - \hat{\pi}_n}^2 \right]  \right) 
\label{eq:piccolo looser bound}
\end{align}

The first term in~\eqref{eq:piccolo looser bound} plays the same role as~\eqref{eq:the term considered by adpat}; when the base algorithm has an optimal $\adapt$ operation and $w_n = n^p$ for some $p>-1$, it would be in $O\left( N^{p+1/2}  \right)$. Here we see that the constant factor in this bound is proportional to $\sigma_g^2 +  \sigma_{\hat{g}}^2  + E_n(\Phi_n)$. Therefore, if the variances $\sigma_g^2$, $\sigma_{\hat{g}}^2$ of the gradients are small, the regret would mainly depend on the prediction error $E_n(\Phi_n)$ of $\Phi_n$. In the next section (Appendix~\ref{app:model learning}), we will show that when $\Phi_n$ is learned online (as the authors in~\citep{cheng2019accelerating} suggest),  on average the regret is close to the regret of using the best model in the hindsight. 
The second term in~\eqref{eq:piccolo looser bound} contributes to $O(1)$ in the regret, when the base algorithm adapts properly to $w_n$. For example, when $\alpha_n = \Theta(\frac{w_{1:n}}{\sqrt{n}})$ and $w_n = n^p$ for some $p>-1$, then 
\begin{align*}
\sum_{n=1}^{N}\frac{2w_n^2}{\alpha_n} L^2 - \frac{\alpha_{n-1}}{2} = \sum_{n=1}^{N} O(n^{p-1/2} - n^{p+1/2} ) = O(1)
\end{align*}
In addition, because $ \norm{\pi_n - \hat{\pi}_n}$ would converge to zero, the effects of the second term in~\eqref{eq:piccolo looser bound} becomes even minor.

In summary, for a reasonable base algorithm and $w_n = n^p$ with  $p>-1$, running \piccolo has the regret bound 
\begin{align} \label{eq:piccolo regret summary}
\E\left[ \sum_{n=1}^N w_n \lr{  g_n }{\pi_n - \pi} \right] 
&= O(\alpha_N) + O \left( \frac{w_{1:N}}{\sqrt{N}}  (\sigma_g^2 +  \sigma_{\hat{g}}^2 )   \right) + O(1) +  \E\left[ \sum_{n=1}^N  \frac{2w_n^2}{\alpha_n} E_n(\Phi_n)  \right] 
\end{align}
Suppose $\alpha_n = \Theta(|\Pi|\frac{w_{1:n}}{\sqrt{n}})$ and $w_n = n^p$ for some $p>-1$,
This implies the inequality 
\begin{align}  \tag{\ref{eq:piccolo optimal bound}}
\E\left[\sum_{n=1}^N  \lr{ w_n  g_n }{\pi_n - \pi} \right]
&\leq O(1) + C_{\Pi,\Phi} \frac{w_{1:N}}{\sqrt{N}}
\end{align}
where $C_{\Pi,\Phi} = O(|\Pi| + \sigma_g^2 +  \sigma_{\hat{g}}^2 +  \sup_n E_n(\Phi_n) )$. The use of non-uniform weights can lead to a faster on average decay of the standing $O(1)$ term in the final weighted average regret bound, i.e.
\begin{align*}
\frac{1}{w_{1:N}}\E\left[\sum_{n=1}^N  \lr{ w_n  g_n }{\pi_n - \pi} \right]
\leq O\left(\frac{1}{w_{1:N}}\right) +  \frac{C_{\Pi,\Phi}}{\sqrt{N}}
\end{align*}
In general, the authors in~\citep{cheng2018fast,cheng2019accelerating} recommend using $p\ll N$ (e.g. in the range of $[0,5]$) to remove the undesirable constant factor, yet without introducing large multiplicative constant factor.

\subsection{Model Learning} \label{app:model learning}

The regret bound in~\eqref{eq:piccolo regret summary} reveals an important factor that is due to the prediction error
 $\E\left[ \sum_{n=1}^N  \frac{2w_n^2}{\alpha_n} E_n(\Phi_n)  \right] $,
where we recall $E_n(\Phi_n) = \norm{\E[g_n] - \E[\Phi_n(\pi_n)] }_{*}^2 $. 
\citet{cheng2019accelerating} show that, to minimize this error sum through model learning, a secondary online learning problem with per-round loss $E_n(\cdot)$ can be considered.
Note that this is a standard weighted adversarial online learning problem (weighted by $\frac{2w_n^2}{\alpha_n}$), because $E_n(\cdot)$ is revealed after one commits to using model $\Phi_n$. 

While in implementation the exact function $E_n(\cdot)$ is unavailable (as it requires infinite data), we can adopt an unbiased upper bound. For example, \citet{cheng2019accelerating} show that $E_n(\cdot)$ can be upper bounded by the single- or multi-step prediction error of a transition dynamics model. 
More generally, we can learn a neural network to minimize the gradient prediction error directly. As long as this secondary online learning problem is solved by a no-regret algorithm, the error due to online model learning would contribute a term in $O(w_{1:N} \epsilon_{\PP,N}/\sqrt{N} ) +  o(w_{1:N}/\sqrt{N})$ in~\eqref{eq:piccolo regret summary}, where $\epsilon_{\PP,N}$ is the minimal error achieved by the best model in the model class $\PP$ (see~\citep{cheng2019accelerating} for details).

\section{Experimental Details} \label{app:exp details}

\subsection{Algorithms}

\paragraph{Base Algorithms}
In the experiments, we consider three commonly used first-order online learning algorithms: 
\adam, \natgrad, and \trpo,
all of which adapt the regularization online to alleviate the burden of learning rate tuning. We provide the decomposition of \adam into the basic three operations in Appendix~\ref{app:basic operations}, and that of \natgrad in Appendix~\ref{app:example}. In particular, the adaptivity of \natgrad is achieved by adjusting the step size based on a moving average of the dual norm of the gradient. 
\trpo adjusts the step size to minimize a given cost function (here it is a linear function defined by the first-order oracle)
within a pre-specified KL divergence centered at the current decision.
While greedily changing the step size in every iteration makes \trpo an inappropriate candidate for adversarial online learning. Nonetheless, it can still be written in the form of mirror descent and allows a decomposition using the three basic operators; its $\adapt$ operator can be defined as the process of finding the maximal scalar step along the natural gradient direction such that the updated decision stays within the trust region.
For all the algorithms, a decaying step size multiplier in the form  $\eta /(1 + \alpha \sqrt{n} )$ is also used; for \trpo, it is used to specify the size of trust regions.
The values chosen for the hyperparameters $\eta$ and $\alpha$ can be found in Table~\ref{table:tasks}.
To the best of our knowledge, the conversion of these approaches into unbiased model-based algorithms is novel.

\paragraph{Reinforcement Learning  Per-round Loss}
In iteration $n$, 
in order to compute the online gradient~\eqref{eq:RL online loss},
GAE~\citep{schulman2015high} is used to estimate the advantage function $A_{\pi_{n-1}}$.
More concretely, this advantage estimate utilizes an estimate of value function $V_{\pi_{n-1}}$ (which we denote $\hat{V}_{\pi_{n-1}}$) and on-policy samples. 
We chosen $\lambda=0.98$ in GAE to reduce influence of the error in $V_{\pi_{n-1}}$,  which can be catastrophic. 
Importance sampling can be used to estimate $A_{\pi_{n-1}}$ in order to leverage data that are  collected on-policy by running $\pi_n$. 
However, since we select a large $\lambda$, importance sampling can lead to vanishing importance weights, making the gradient extremely noisy. Therefore, in the experiments, importance sampling is not applied.

\paragraph{Gradient Computation and Control Variate}
The gradients are computed using likelihood-ratio trick and the associated advantage function estimates described above. A scalar control variate is further used to reduce the variance of the sampled gradient, which is set to the mean of the advantage estimates evaluated on newly collected data.

\paragraph{Policies and Value Networks}
Simple feed-forward neural networks are used to construct all of the function approximators (policy and value function) in the tasks. They have 1 hidden layer with 32 $\tanh$ units for all policy networks, and have 2 hidden layers with 64 $\tanh$ units for value function networks.
Gaussian stochastic policies are considered, i.e., for any state $s \in \Sbb$, $\pi_s$ is Gaussian, and the mean of $\pi_s$ is modeled by the policy network, whereas the diagonal covariance matrix is state independent (which is also learned).
Initial value of $\log \sigma$ of the Gaussian policies $-1.0$, the standard deviation for initializing the output layer is $0.01$, and the standard deviation for initialization hidden layer is $1.0$. 
After the policy update, a new value function estimate $\hat V_{\pi_{n}}$ is computed by minimizing the mean of squared difference between $\hat V_{\pi_{n}}$ and $\hat V_{\pi_{n-1}} + \hat A_{\pi_{n}}$, where $\hat A_{\pi_{n}}$ is the GAE estimate using $\hat V_{\pi_{n-1}}$ and $\lambda=0.98$, 
through \adam with batch size 128, number of batches 2048, and learning rate 0.001.
Value function is pretrained using examples collected by executing the randomly initialized policy. 

\paragraph{Computing Model Gradients}
We compute $\hat{g}_n$ in two ways. The first approach is to use the simple heuristic that sets $\hat{g}_n = \Phi_n(\hat{\pi}_n)$, where $\Phi_n$ is some predictive models depending on the exact experimental setup. 
The second approach is to use the fixed-point formulation~\eqref{eq:fixed-point problem}. This is realized by solving the equivalent optimization problem mentioned in the paper. 
In implementation, we only solves this problem approximately using some finite number of gradient steps; though this is insufficient to yield a stationary point as desired in the theory, we experimentally find that it is sufficient to yield improvement over the heuristic $\hat{g}_n = \Phi_n(\hat{\pi}_n)$.

\paragraph{Approximate Solution to Fixed-Point Problems of \piccolo}

\piccolo relies on the predicted gradient $\hat{g}_n$ in the Prediction Step. 
Recall ideally we wish to solve the fixed-point problem that finds $h_n^*$ such that 
\begin{align} 
h_n^* &= \update(\hat{h}_{n}, H_{n-1}, \Phi_n(\pi_n(h_n^*)), w_n)  
\end{align}
and then apply $\hat{g}_n = \Phi_n(\pi_n(h_{n}^*))$ in the Prediction Step to get $h_n$, i.e., 
\begin{align*}
h_{n} &= \update(\hat{h}_{n}, H_{n-1}, \hat{g}_n, w_n)  
\end{align*}
Because $h_n^*$ is the solution to the fixed-point problem, we have $h_n = h_n^*$. Such choice of $\hat{g}_n$ will fully leverage the information provided by $\Phi_n$, as it does not induce additional linearization due to evaluating $\Phi_n$ at  points different from $h_n$. 

Exactly solving the fixed-point problem is difficult. In the experiments, we adopt a heuristic which computes an approximation to $h_n^*$ as follows. We suppose $\Phi_n = \nabla f_n$ for some function $f_n$, which is the case e.g. when $\Phi_n$ is the simulated gradient based on some (biased) dynamics model. This restriction makes the fixed-point problem as finding a stationary point of the optimization problem $\min_{\pi \in \Pi} f_n(\pi)  + B_{R_{n-1}}(\pi||\hat\pi_n)$. 
In implementation, we initialize the iterate in this subproblem as $\update(\hat{h}_{n}, H_{n-1}, \Phi_n(\hat{\pi}_n), w_n) $, which is the output of the Prediction Step if we were to use $\hat{g}_n = \Phi_n(\hat{\pi}_n)$. We made this choice in initializing the subproblem, as we know that using $\hat{g}_n = \Phi_n(\hat{\pi}_n)$ in \piccolo already works well (see the experiments) and it can be viewed as the solution to the fixed-point problem with respect to the linearized version of $\Phi_n$ at $\hat{\pi}_n$. Given the this initialization point, we proceed to compute the approximate solution to the fixed-point by applying the given base algorithm for $5$ iterations and then return the last iterate as the approximate solution. For example, if the base algorithm is natural gradient descent, we fixed the Bregman divergence (i.e. its the Fisher information matrix as $\hat{\pi}_n$) and only updated the scalar stepsize adaptively along with the policy in solving this \emph{regularized model-based RL problem} (i.e. $\min_{\pi \in \Pi} f_n(\pi)  + B_{R_{n-1}}(\pi||\hat\pi_n)$). While such simple implementation is not ideal, we found it works in practice, though we acknowledge that a better implementation of the subproblem solver would improve the results.

\subsection{Tasks}
The robotic control tasks that are considered in the experiments are 
CartPole, Hopper, Snake, 
and Walker3D from OpenAI Gym~\citep{brockman2016openai} with the DART physics engine~\citep{Lee2018}\footnote{The environments are defined in DartEnv, hosted at https://github.com/DartEnv.}.
CartPole is a classic control problem, and its goal is to keep a pole balanced in a upright posture, by only applying force to the cart. 
Hopper, Snake, and Walker3D are locomotion tasks, of which the goal is to control an agent  to move forward as quickly as possible without falling down (for Hopper and Walker3D) or
deviating too much from moving forward (for Snake). 
Hopper is monopedal and Walker3D is bipedal, and both of them are subjected to significant contact discontinuities that are hard or even impossible to predict.

\subsection{Full Experimental Results}
In Figure~\ref{fig:simple exps supp}, we empirically study the properties of \piccolo that are predicted by theory on CartPole environment.
In Figure~\ref{fig:exps}, we ``\piccolo" three base algorithms: \adam, \natgrad, \trpo, and apply them on four simulated environments: Cartpole, Hopper, Snake, and Walker3D.

\begin{figure}[t] 	
	\centering
	\begin{subfigure}{.24\textwidth}
		\includegraphics[width=\textwidth]{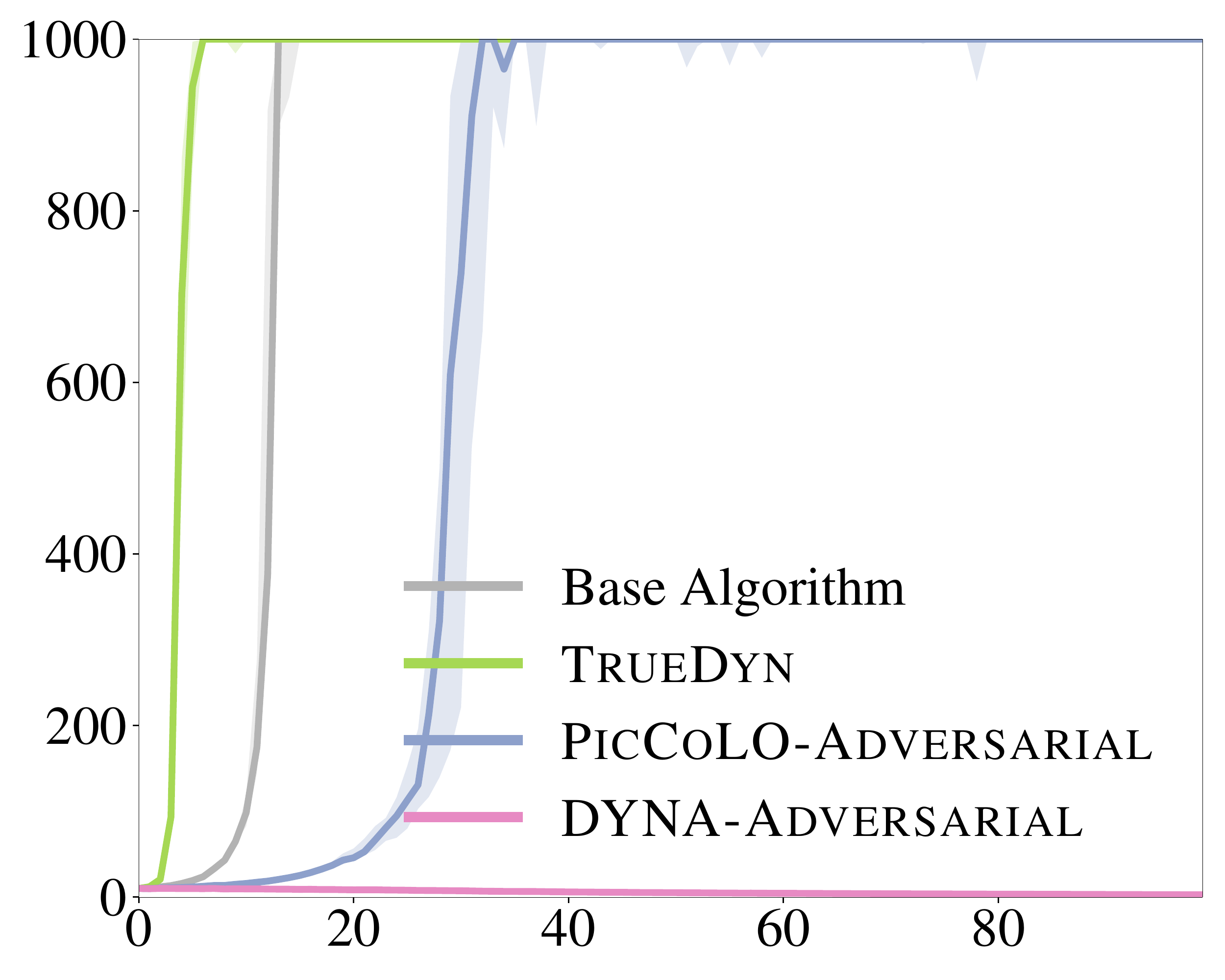}
		\caption{Adv. model, \natgrad}
	\end{subfigure}
	\begin{subfigure}{.24\textwidth}
		\includegraphics[width=\textwidth]{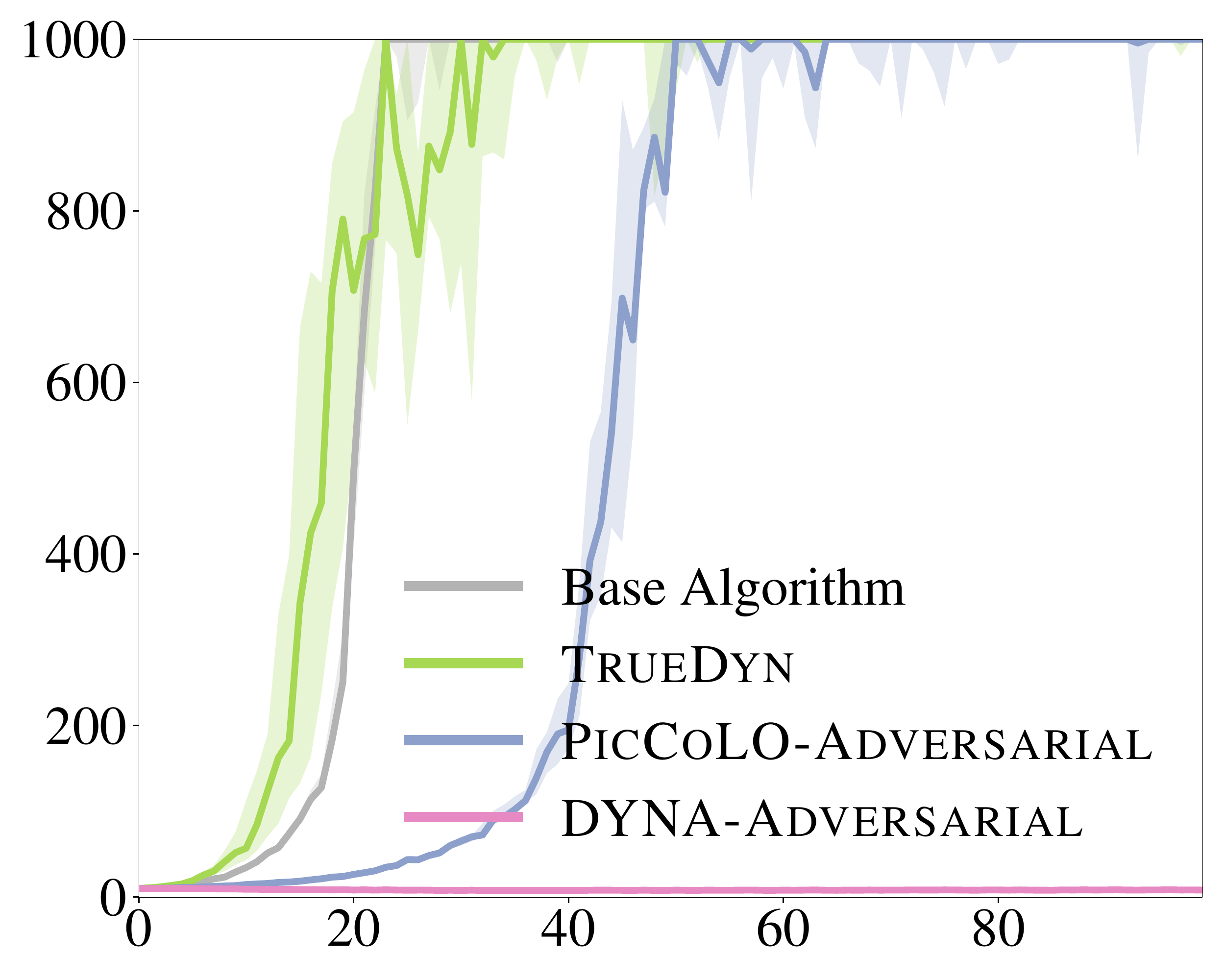}
		\caption{Adv. model, \trpo}
	\end{subfigure}
	\begin{subfigure}{.24\textwidth}
	\includegraphics[width=\textwidth]{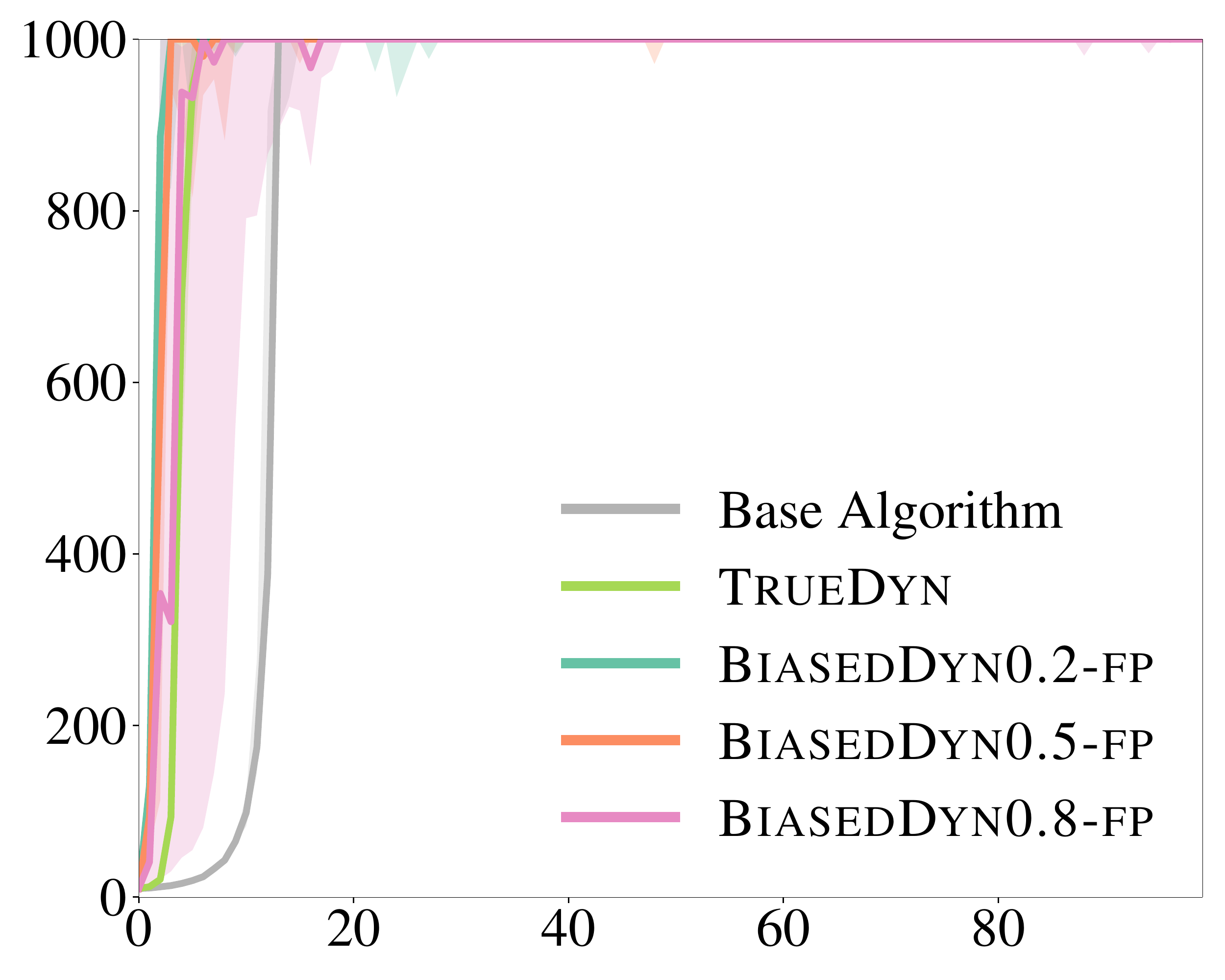}
	\caption{Diff. fidelity, \natgrad}
	\end{subfigure}
	\begin{subfigure}{.24\textwidth}
	\includegraphics[width=\textwidth]{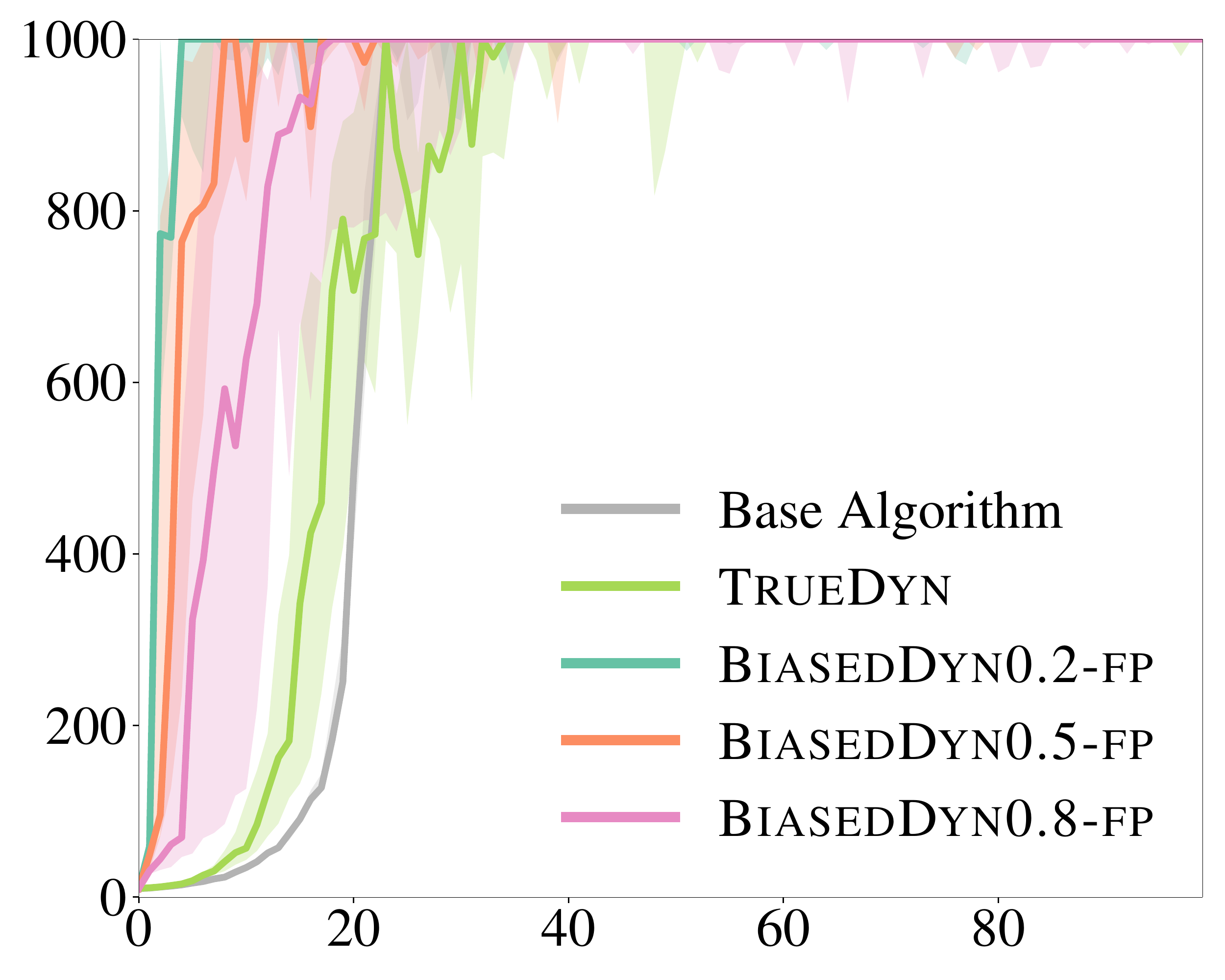}
	\caption{Diff. fidelity, \trpo}
	\end{subfigure}
	\caption{ 
	Performance of \piccolo with different predictive models on CartPole. 
	$x$ axis is iteration number and $y$ axis is sum of rewards. The curves are the median among 8 runs with different seeds, and the shaded regions account for $25\%$ percentile.
	The update rule, by default, is $\piccolo$. For example \sim in (a) refers to \piccolo with \sim predictive model. 
	(a), (b): Comparison of \piccolo and \dyna with adversarial model using \natgrad and \trpo as base algorithms.  
	(c), (d): \piccolo with the fixed-point setting~\eqref{eq:fixed-point problem} with dynamics model in different fidelities.
	\biasedE indicates that the mass of each individual robot link is either increased or decreased by $80\%$ with probablity 0.5 respectively. 
	}
	\label{fig:simple exps supp}
\end{figure}

\begin{figure*} [h]	
	\centering
\begin{subfigure}{.24\textwidth}
	\includegraphics[width=\textwidth]{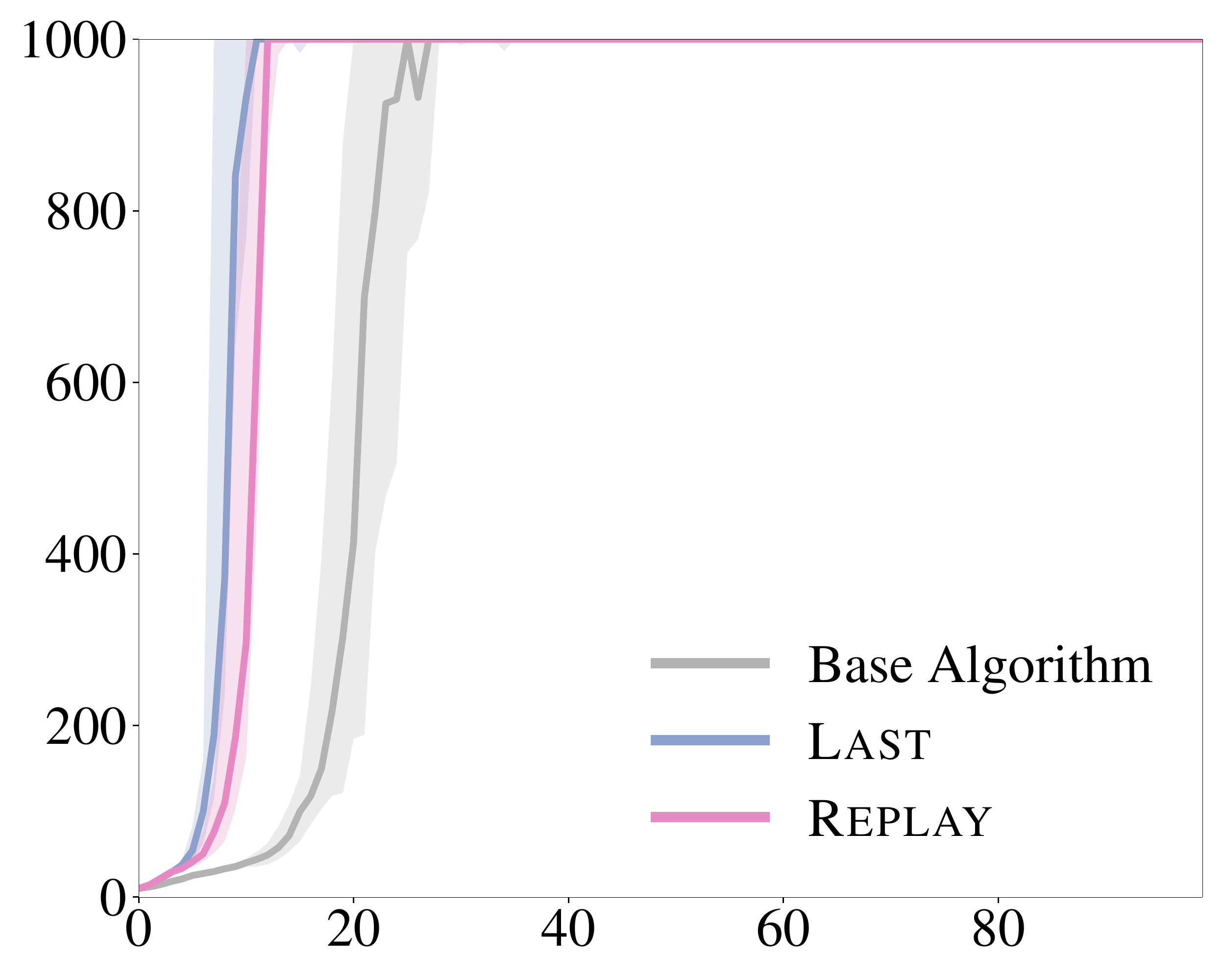}
	\caption{CartPole \adam}
\end{subfigure}
\begin{subfigure}{.24\textwidth}
	\includegraphics[width=\textwidth]{hopper_adam}
	\caption{Hopper \adam}
\end{subfigure}
\begin{subfigure}{.24\textwidth}
	\includegraphics[width=\textwidth]{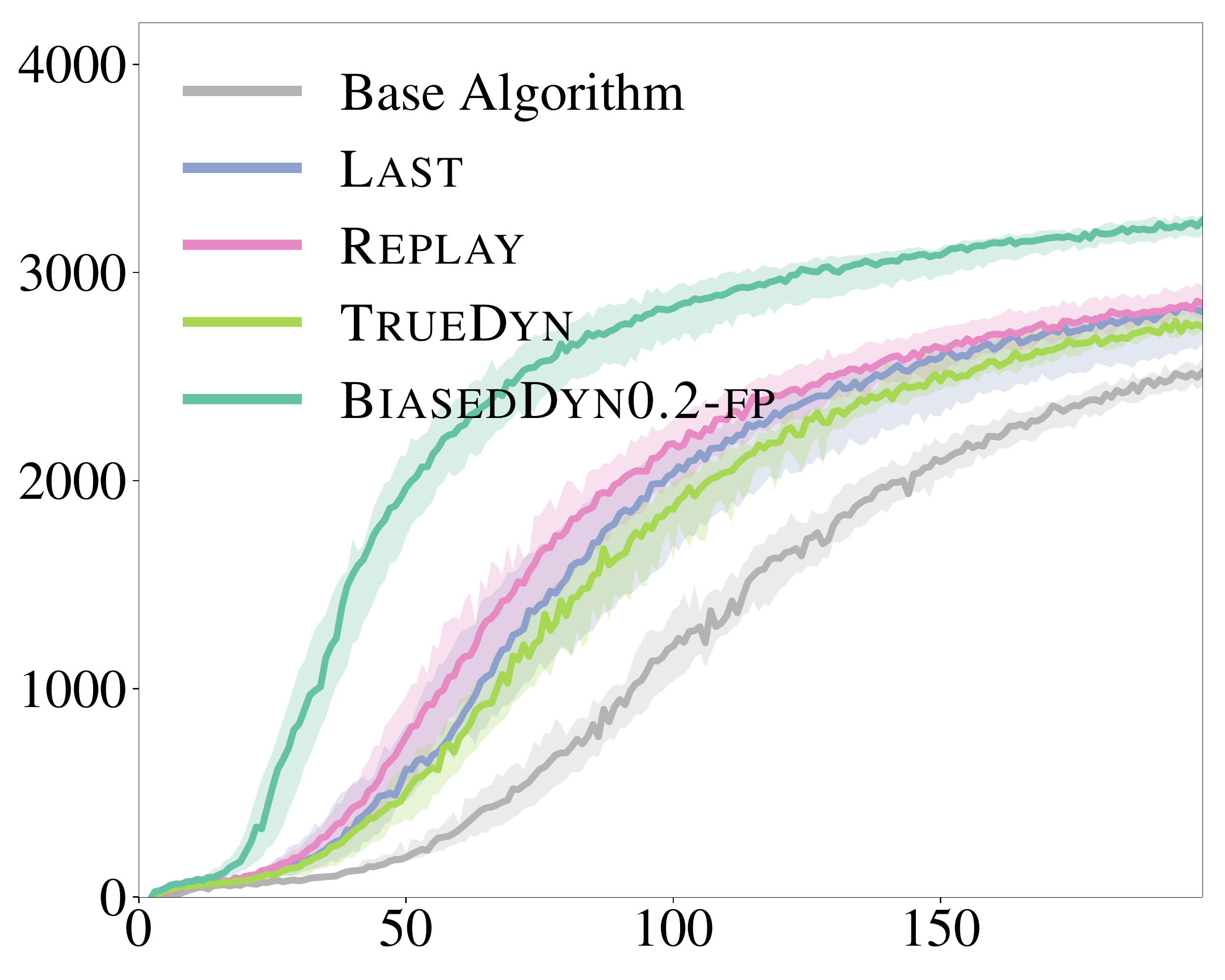}
	\caption{Snake \adam}
\end{subfigure}
\begin{subfigure}{.24\textwidth}
	\includegraphics[width=\textwidth]{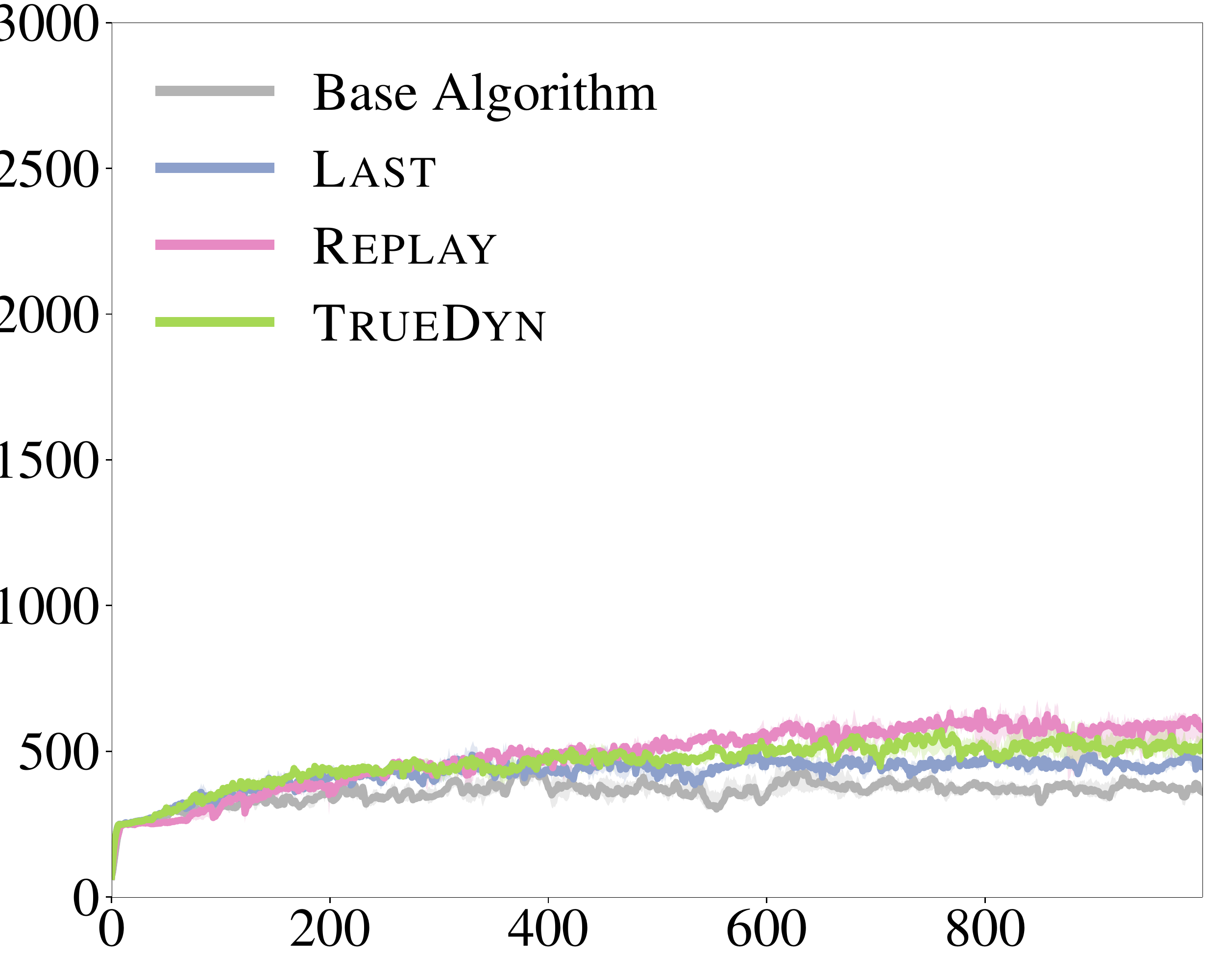}
	\caption{Walker3D \adam}
\end{subfigure}\\
	\centering
\begin{subfigure}{.24\textwidth}
	\includegraphics[width=\textwidth]{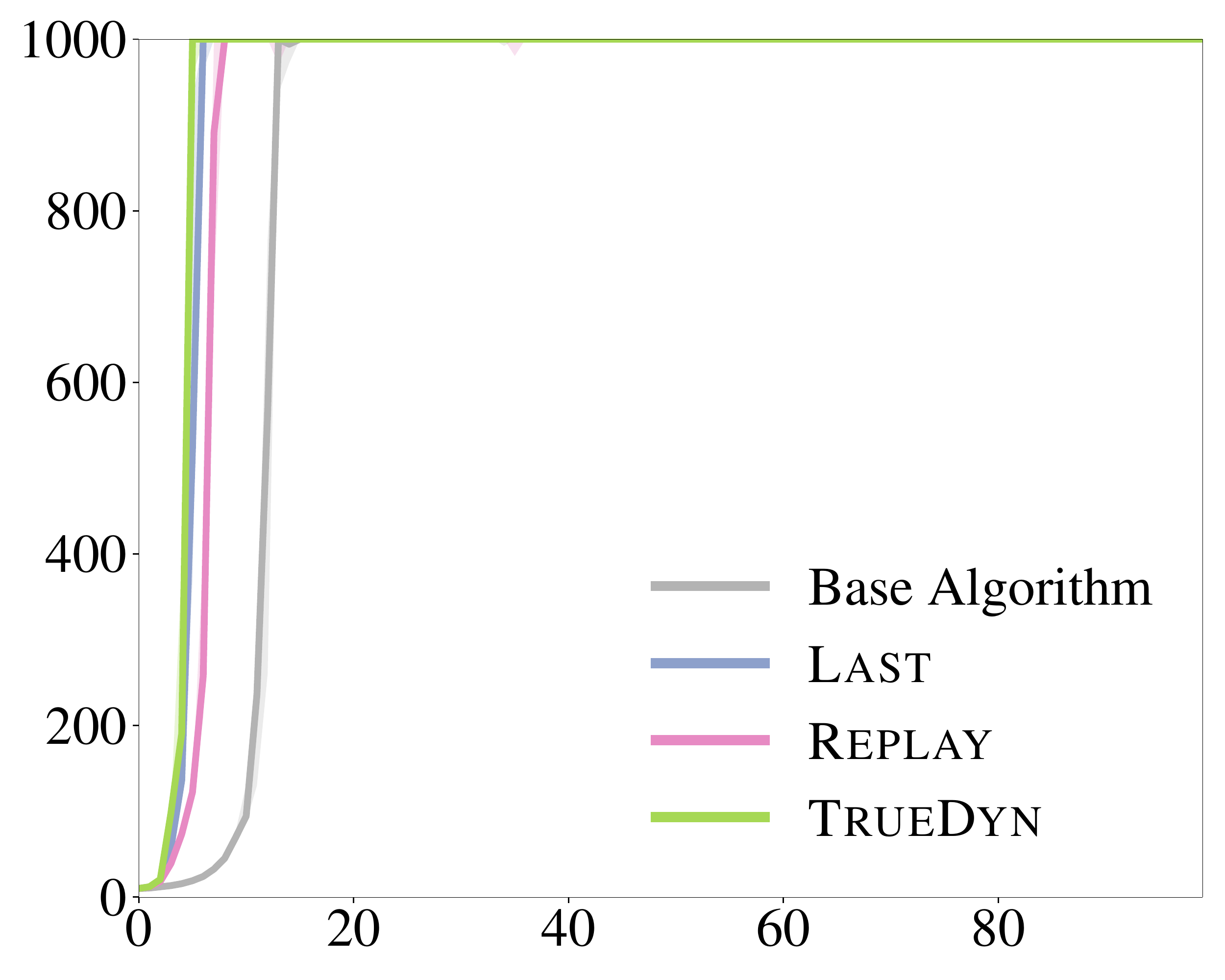}
	\caption{CartPole \natgrad}
\end{subfigure}
\begin{subfigure}{.24\textwidth}
	\includegraphics[width=\textwidth]{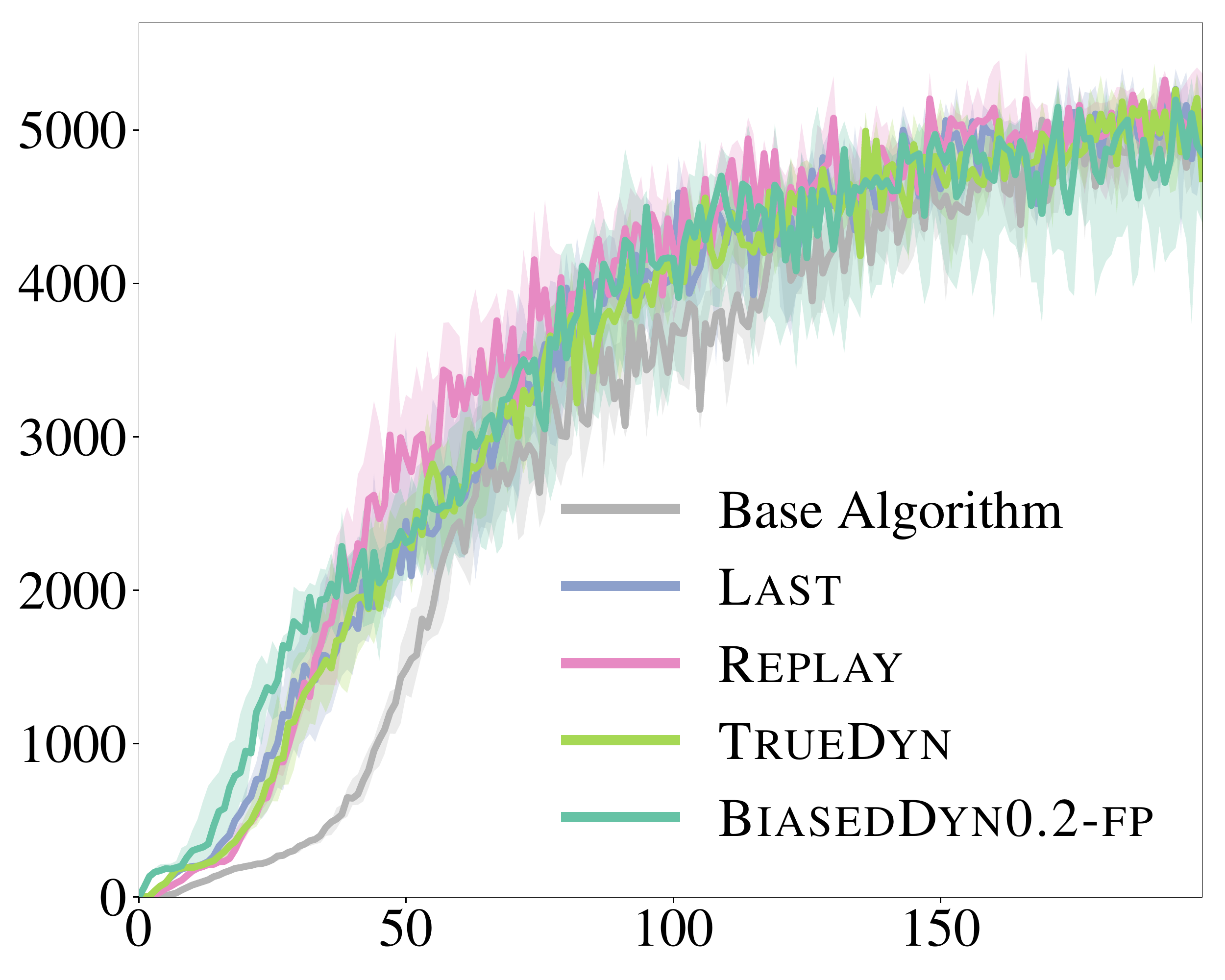}
	\caption{Hopper \natgrad}
\end{subfigure}
\begin{subfigure}{.24\textwidth}
	\includegraphics[width=\textwidth]{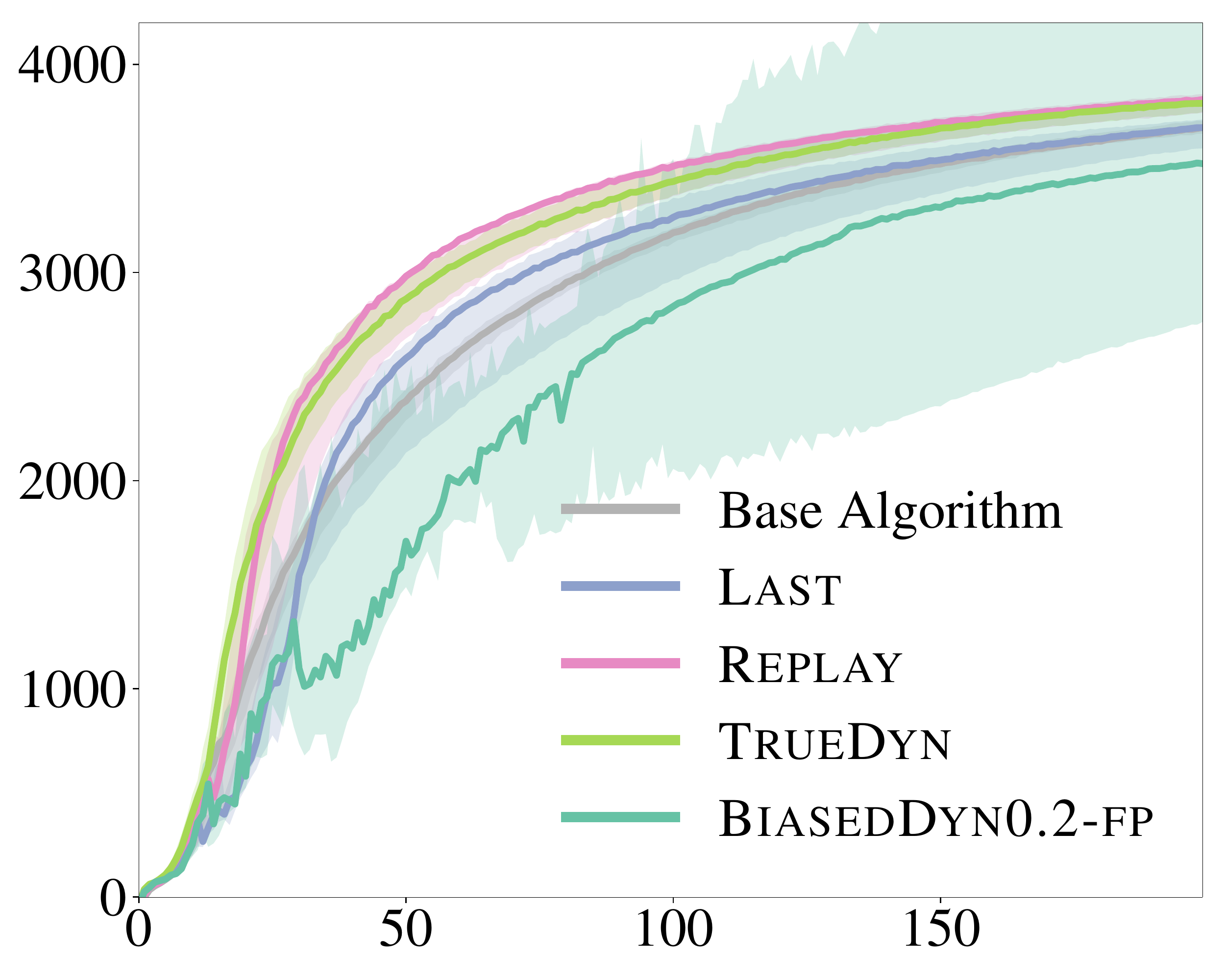}
	\caption{Snake \natgrad}
\end{subfigure}
\begin{subfigure}{.24\textwidth}
	\includegraphics[width=\textwidth]{walker3d_natgrad}
	\caption{Walker3D \natgrad}
\end{subfigure}\\
\centering
\begin{subfigure}{.24\textwidth}
\includegraphics[width=\textwidth]{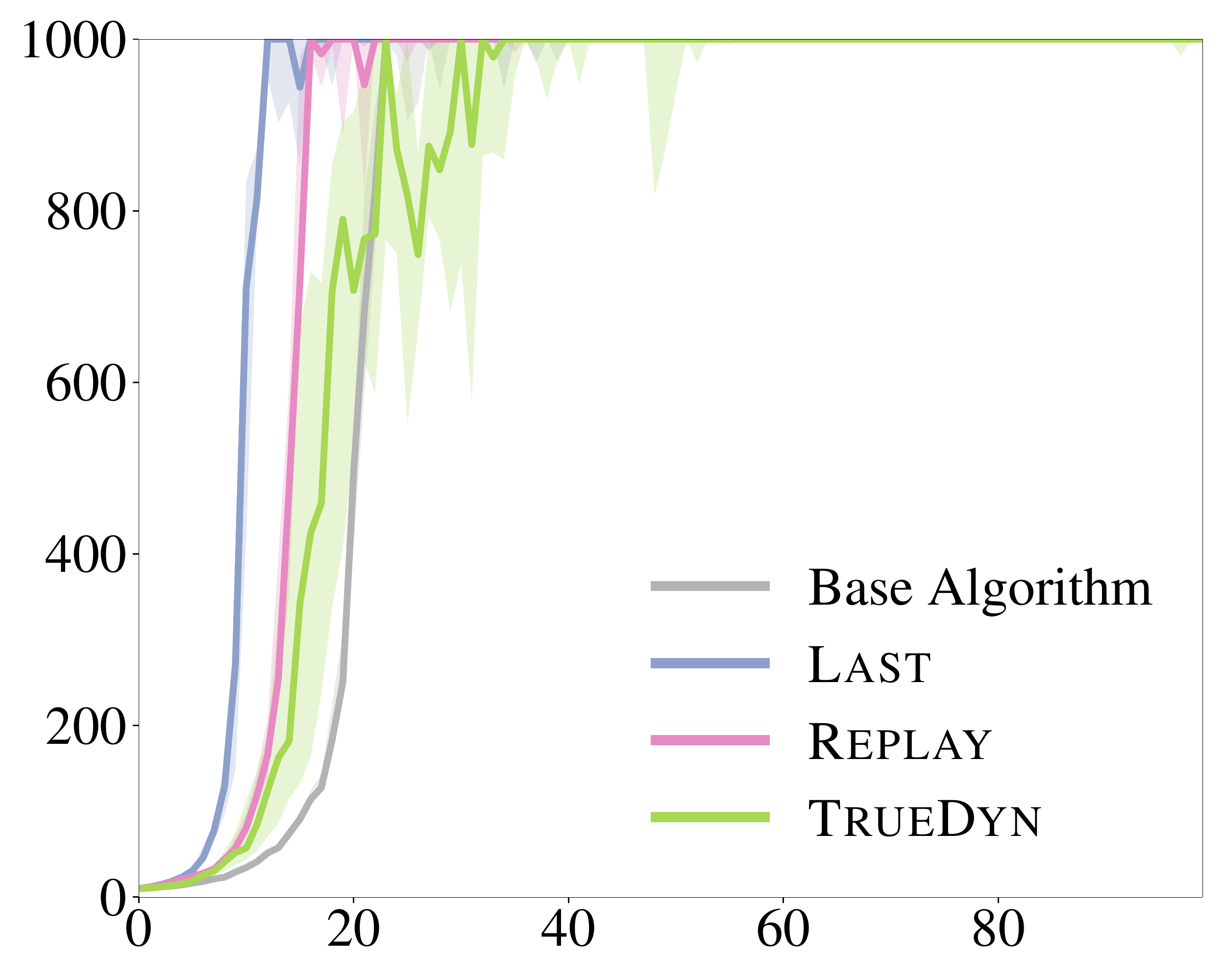}
\caption{CartPole \trpo}
\end{subfigure}
\begin{subfigure}{.24\textwidth}
	\includegraphics[width=\textwidth]{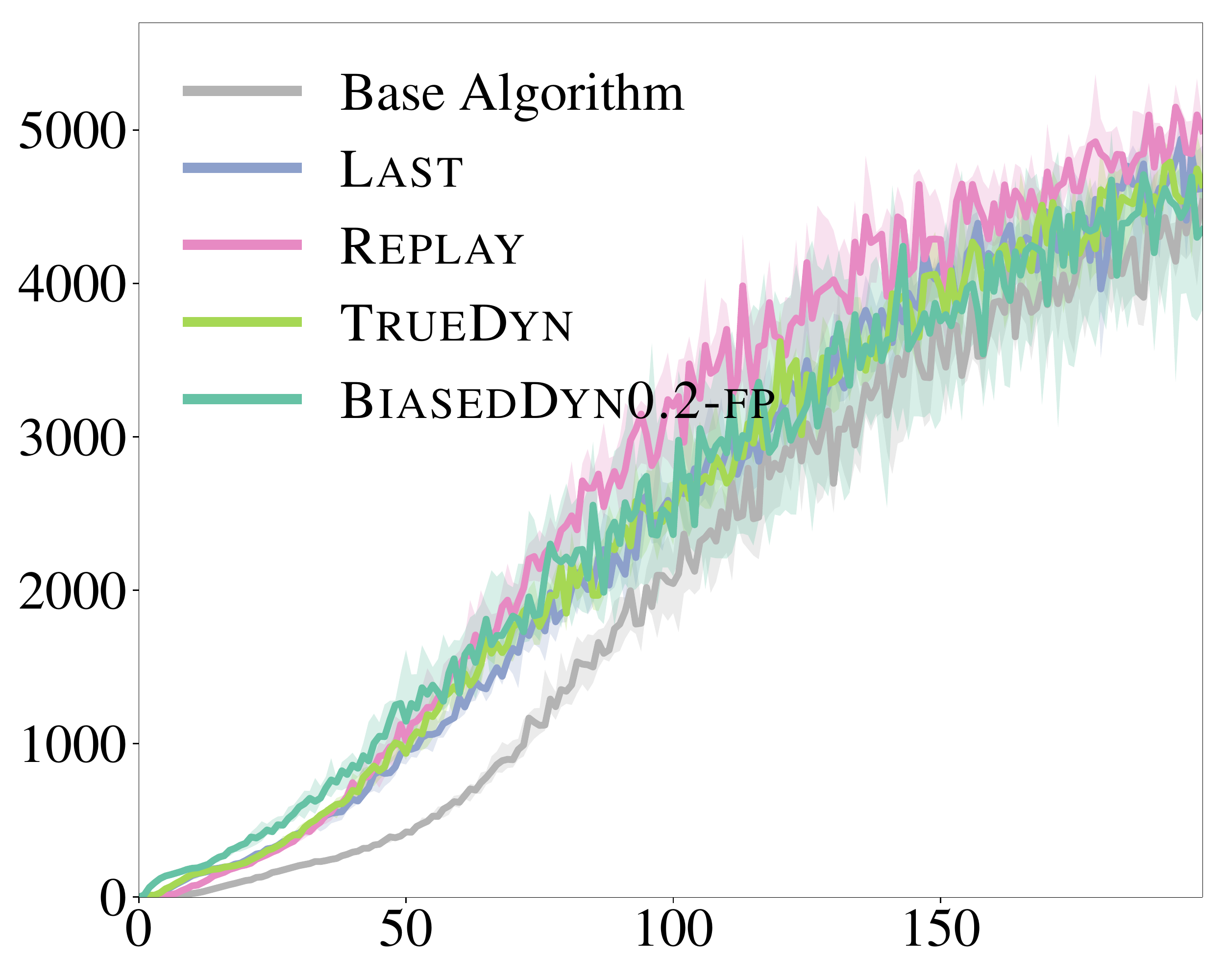}
	\caption{Hopper \trpo}
\end{subfigure}
\begin{subfigure}{.24\textwidth}
	\includegraphics[width=\textwidth]{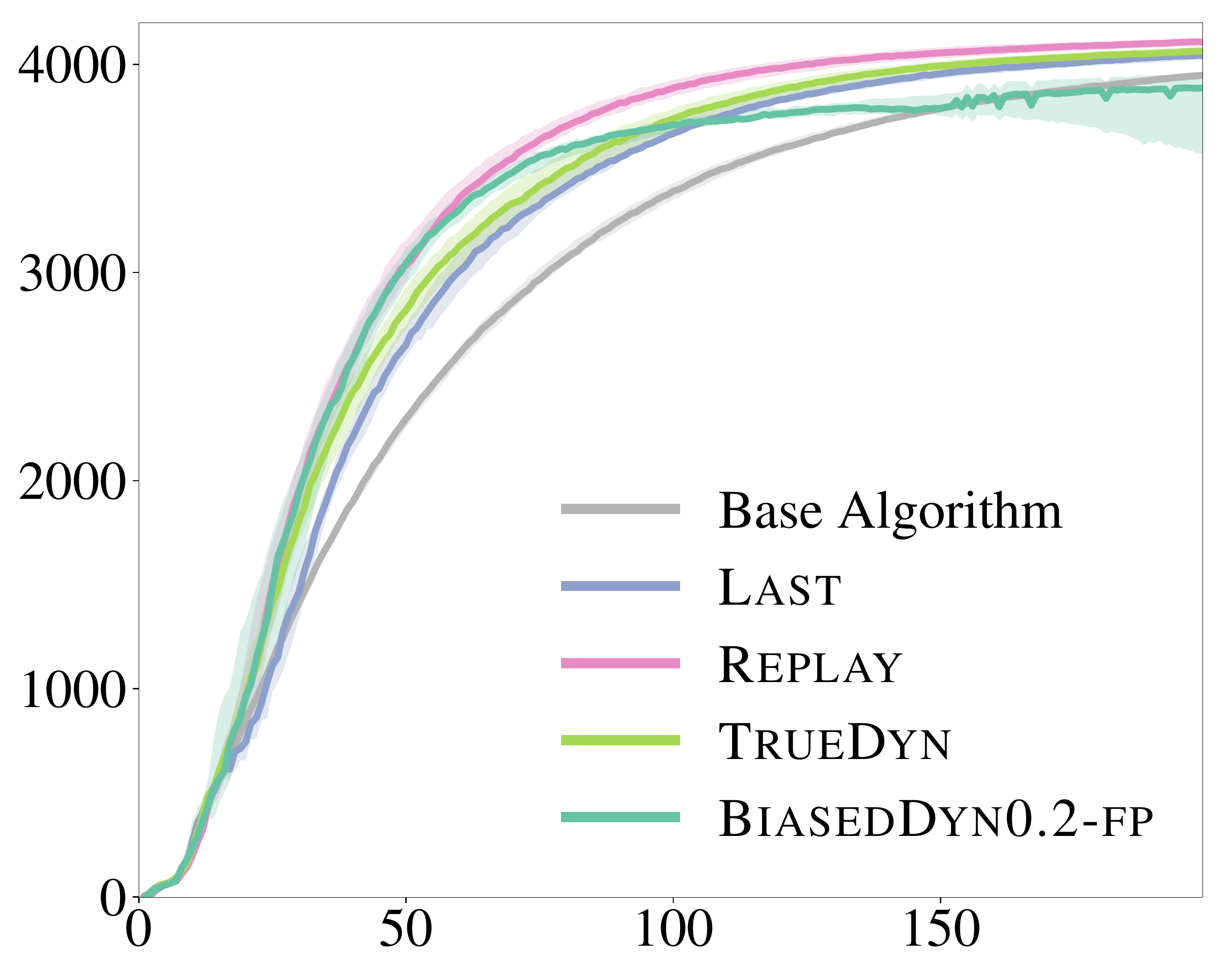}
	\caption{Snake \trpo}
\end{subfigure}
\begin{subfigure}{.24\textwidth}
	\includegraphics[width=\textwidth]{walker3d_trpo}
	\caption{Walker3D \trpo}
\end{subfigure}\\
\caption{
	The performance of \piccolo with different predictive models on various tasks, compared to base algorithms.
	The rows use \adam, \natgrad, and \trpo as the base algorithms, respectively. 
	$x$ axis is iteration number and $y$ axis is sum of rewards. The curves are the median among 8 runs with different seeds, and the shaded regions account for $25\%$ percentile. 
}
\label{fig:exps}
\end{figure*}

\subsection{Experiment Hyperparameters}

The  hyperparameters used in the experiments and the  basic attributes of the environments are detailed in Table~\ref{table:tasks}. 
\begin{table*}[h]
	\begin{center}
		\begin{tabular}{lccccc}
			&   CartPole &  Hopper & Snake &  Walker3D \\
			\hline \\
			Observation space dimension   						
			&	4  	&	11	&	17	&	41	\\
			Action space dimension        							
			& 	1	&	3	&	6	&	15	\\
			State space dimension 									 
			& 4    &  12	& 18 &  42 \\
			Number of samples from env. per iteration 	
			&	4k 	&	16k	&	16k	&	32k	\\
			Number of samples from model dyn. per iteration &	4k 	&	16k	&	16k	&	32k	\\		
			Length of horizon 
			& 1,000 & 	1,000 & 1,000 &  1,000 \\
			Number of iterations     								
			&	100	&	200	&	200 &	1,000 \\
			Number of iterations of samples for $\replay$ buffer &
			5 & 4 & 3 & 2 (3 for $\adam$) \\
			$\alpha$ 
			\footnotemark
			& 0.1 & 0.1 & 0.1 & 0.01 \\
			$\eta$ in \adam & 0.005 &  0.005 &  0.002 & 0.01 \\
			$\eta$ in \natgrad & 0.05 &  0.05 &  0.2 & 0.2 \\
			$\eta$ in \trpo & 0.002 &  0.002 &  0.01 & 0.04 \\
		\end{tabular}
		\caption{Tasks specifics and hyperparameters. }
		\label{table:tasks}
	\end{center}
\end{table*}
\footnotetext{$\alpha$ and $\eta$ appear in the decaying step size multiplier for all the algorithms in the form $\eta /(1 + \alpha \sqrt{n} )$. $\alpha$ influences how fast the step size decays. We chose $\alpha$ in the experiments based on the number of iterations. }

\end{document}